\def\eqref#1{equation~\ref{#1}}
\def\1{\bm{1}}
\DeclareMathAlphabet{\mathsfit}{\encodingdefault}{\sfdefault}{m}{sl}
\SetMathAlphabet{\mathsfit}{bold}{\encodingdefault}{\sfdefault}{bx}{n}
\DeclareMathOperator*{\argmax}{arg\,max}
\lstdefinestyle{python}{
    language=Python,
    backgroundcolor=\color{white},
    basicstyle=\footnotesize\ttfamily,
    keywordstyle=\color{blue},
    commentstyle=\color{green!50!black},
    stringstyle=\color{red},
    showstringspaces=false,
    numbers=left,
    numberstyle=\tiny\color{gray},
    stepnumber=1,
    numbersep=10pt,
}
\theoremstyle{plain}
\newtheorem{theorem}{Theorem}[section]
\theoremstyle{definition}
\theoremstyle{remark}
\newcommand{\Skip}[1]{}
\newcommand{\rebuttal}[1]{{#1}}
\newcommand{\myfig}[1]{Figure~\ref{#1}}
\newcommand{\myeq}[1]{Eq.~\ref{#1}}
\newcommand{\mysecref}[1]{\S\ref{#1}}
\newcommand{\myparagraph}[1]{\noindent \textbf{#1.}}
\definecolor{mutedblue}{rgb}{0.2, 0.3, 0.8}
\definecolor{mutedgreen}{rgb}{0.3, 0.6, 0.3}
\definecolor{mutedorange}{rgb}{0.8, 0.5, 0.2}
\definecolor{darkblue}{rgb}{0.0, 0.0, 0.5} %
\title{Mitigating Suboptimality of Deterministic\\Policy Gradients in Complex Q-functions}
\author{Ayush Jain\textsuperscript{1,$\dagger$}, \ Norio Kosaka\textsuperscript{2}, \ Xinhu Li\textsuperscript{1}, \ Kyung-Min Kim\textsuperscript{3}, \\ Erdem Bıyık\textsuperscript{1}, \ Joseph J. Lim\textsuperscript{4}
}
\keywords{Deterministic Policy Gradients, Off-policy reinforcement learning} %
\begin{document}

\doparttoc %
\faketableofcontents %
\makeCover  %
\maketitle  %

\begin{abstract}

In reinforcement learning, off-policy actor-critic methods like DDPG and TD3 use deterministic policy gradients: the Q-function is learned from environment data, while the actor maximizes it via gradient ascent. We observe that in complex tasks such as dexterous manipulation and restricted locomotion with mobility constraints, the Q-function exhibits many local optima, making gradient ascent prone to getting stuck. To address this, we introduce SAVO, an actor architecture that (i) generates multiple action proposals and selects the one with the highest Q-value, and (ii) approximates the Q-function repeatedly by truncating poor local optima to guide gradient ascent more effectively. We evaluate tasks such as restricted locomotion, dexterous manipulation, and large discrete-action space recommender systems and show that our actor finds optimal actions more frequently and outperforms alternate actor architectures.

\end{abstract}

\section{Introduction}
\label{sec:introduction}

In sequential decision-making, the goal is to build an optimal agent that maximizes the expected cumulative returns~\citep{sondik1971optimal, littman1996algorithms}. Value-based reinforcement learning (RL) approaches estimate the future returns of an action with a Q value, then select actions that maximize this Q value~\citep{sutton1998reinforcement}.
In continuous action spaces, directly enumerating all actions is impractical, so an actor is introduced to learn which actions yield the maximum Q-value~\citep{grondman2012survey}.
We show that common continuous control benchmarks~\citep{lillicrap2015continuous} exhibit easily optimized Q functions, which obscures a key challenge in current RL algorithms.
Specifically, when the Q-function is \emph{non-convex}, such as locomotion with restricted mobility in \myfig{fig:problem}, a learning actor can produce suboptimal behavior by converging at one of the local optima.
\begin{figure*}[ht]
\centering
\includegraphics[width=0.26\linewidth
, height=\dimexpr 0.24\linewidth
]{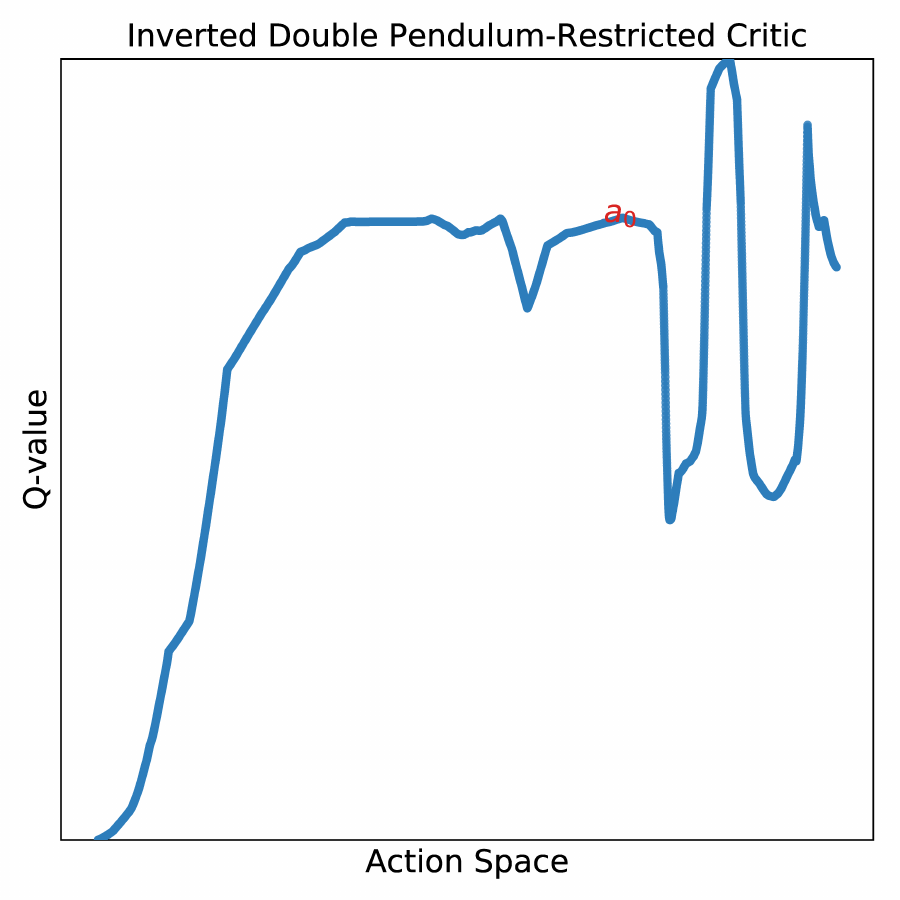}
\hfill
\includegraphics[trim=60 10 60 10, clip, width=0.34\linewidth]
{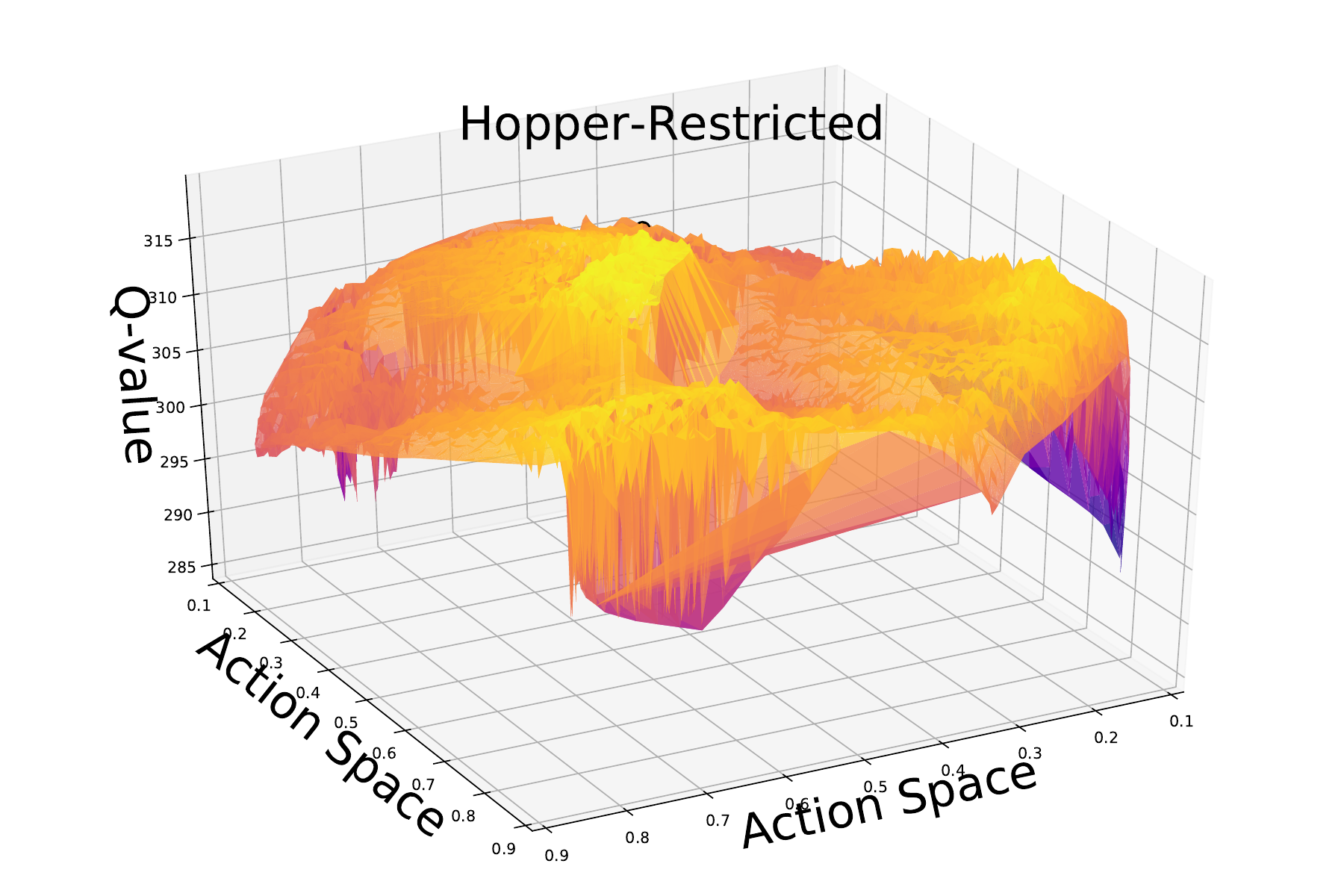}
\hfill
\includegraphics[width=0.34\linewidth]{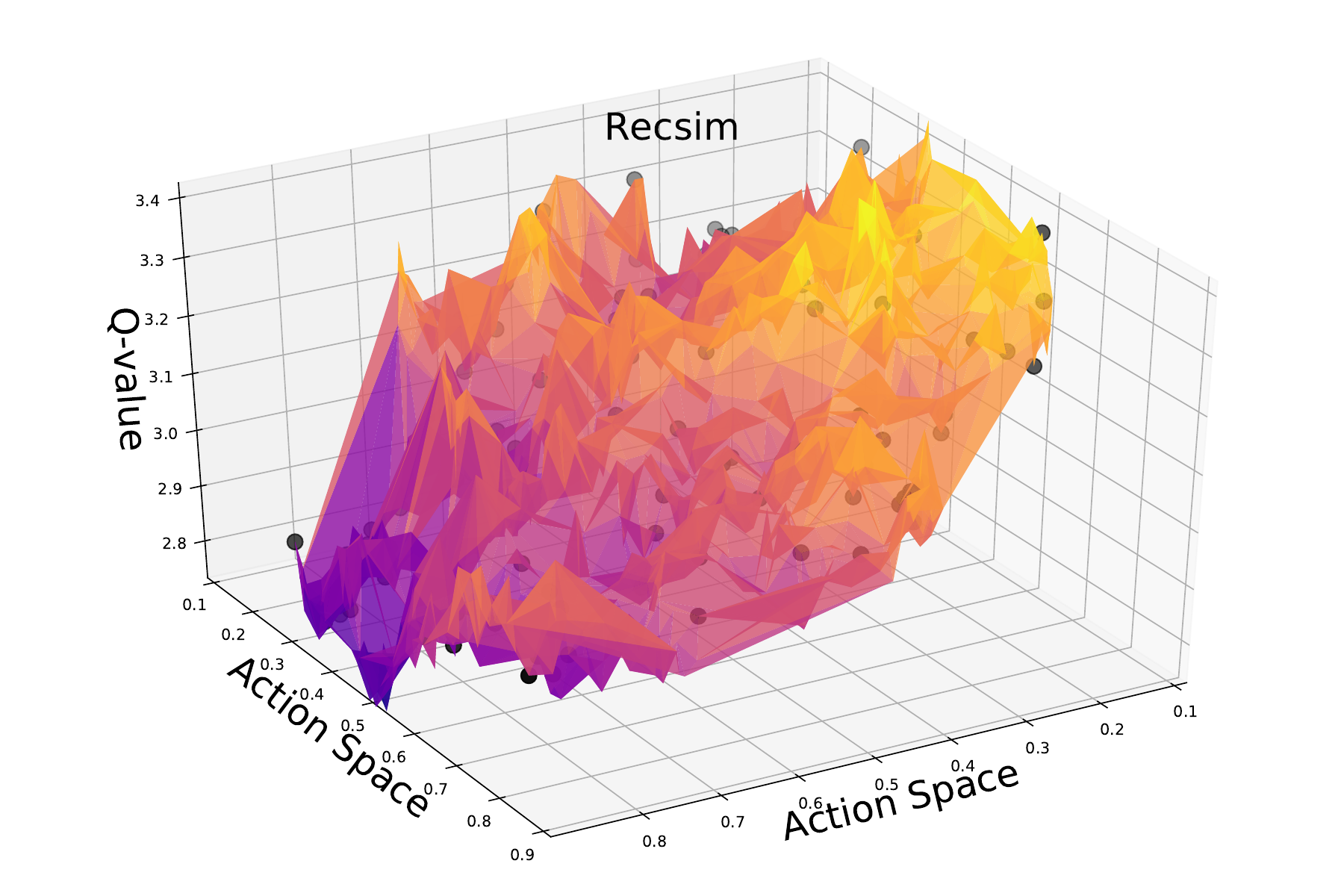}

\caption{
In continuous control tasks, we visualize trained TD3 Q-values at a fixed state $s_f$ over the full action space (projected to 2D), $Q(s, a | s=s_f)$. In Inverted-Double-Pendulum (left) and Hopper (middle) with action ranges restricted due to immobility, Q-landscapes have multiple local optima. In a large discrete-action recommendation task (right), local peaks correspond to real items (black dots). In such non-convex Q-landscapes, gradient-based actors often converge to suboptimal actions.
}
\label{fig:problem}
\end{figure*}

Can we build an actor architecture to find better optimal actions in such complex Q-landscapes? Prior methods perform a search over the action space with evolutionary algorithms like CEM~\citep{de2005tutorial, kalashnikov2018scalable, shao2022grac}, but this requires numerous costly re-evaluations of the Q-function.
To avoid this, deterministic policy gradient (DPG) algorithms~\citep{silver2014deterministic}, such as DDPG~\citep{lillicrap2015continuous}, TD3~\citep{fujimoto2018addressing}, and REDQ~\citep{ chen2020randomized} train a parameterized actor to output actions with the objective of maximizing the Q-function locally.

A significant challenge arises in environments where the Q-function has many local optima, as shown in \myfig{fig:problem}. An actor trained via gradient ascent may converge to a local optimum with a much lower Q-value than the global maximum. This leads to \emph{suboptimal} decisions during deployment and \emph{sample-inefficient} training, as the agent fails to explore high-reward trajectories~\citep{kakade2003sample}.

To improve actors' ability to identify optimal actions in complex, non-convex Q-function landscapes, we propose the Successive Actors for Value Optimization (SAVO) algorithm. SAVO leverages two key insights: (1) combining multiple policies using an $\arg\max$ on their Q-values to construct a superior policy (\S\ref{sec:maximizer}), and (2) simplifying the Q-landscape by excluding lower Q-value regions based on high-performing actions, inspired by Tabu Search~\citep{glover1990tabu}---a metaheuristic that avoids cycling back to recently visited suboptimal solutions by maintaining an explicit memory of them. SAVO achieves this via a sequence of surrogate Q-functions that iteratively exclude the Q-value of regions below previously identified inferior actions, thereby reducing local optima and facilitating gradient ascent (\S\ref{sec:surrogate}), enabling the corresponding actors to discover higher-quality actions.

We evaluate SAVO in complex Q-landscapes such as (i) \emph{continuous} control in dexterous manipulation~\citep{rajeswaran2017learning} and restricted locomotion~\citep{todorov2012mujoco}, and (ii) \emph{discrete} decision-making in the large action spaces of simulated~\citep{ie2019recsim} and real-data recommender systems~\citep{harper2015movielens}, and gridworld mining expedition~\citep{gym_minigrid}.
We use the reframing of large discrete action RL to continuous action RL following~\citep{van2009using} and \citet{dulac2015deep}, where a policy acts in continuous actions, such as the feature space of recommender items (\myfig{fig:problem}), and the nearest discrete action is executed.

Our key contribution is SAVO, an actor architecture to find better optimal actions in complex non-convex Q-landscapes (\mysecref{sec:approach}).
In experiments, we visualize how SAVO's successively learned Q-landscapes have fewer local optima (\mysecref{sec:landscape}, \mysecref{q_value_landscape}), making it more likely to find better action optima with gradient ascent. This enables SAVO to outperform alternative actor architectures, such as sampling more action candidates~\citep{dulac2015deep}
and learning an ensemble of actors~\citep{osband2016deep}  (\mysecref{sec:exp-baselines}) across continuous and discrete action RL.

\section{Related Work}
\label{sec:related-work}
\vspace{-5pt}
Q-learning~\citep{watkins1992q, tesauro1995temporal} is a fundamental value-based RL algorithm that iteratively updates Q-values to make optimal decisions. Deep Q-learning~\citep{mnih2015human} has been applied to tasks with manageable discrete action spaces, such as Atari~\citep{mnih2013playing, espeholt2018impala, hessel2018rainbow}, traffic control~\citep{abdoos2011traffic}, and small-scale recommender systems~\citep{chen2019generative}. However, scaling Q-learning to continuous or large discrete action spaces requires specialized techniques to efficiently maximize the Q-function.

\begin{figure}[t]
    \centering
    \includegraphics[width=0.7\linewidth]{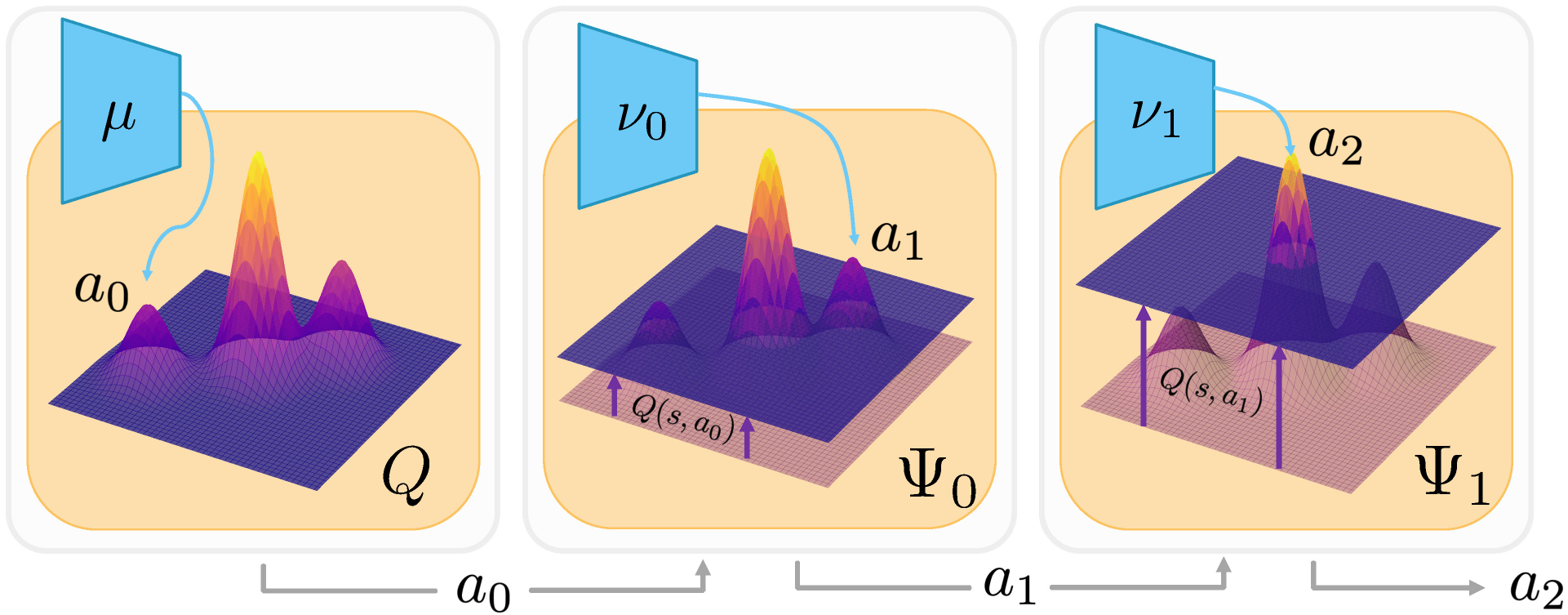}
    \caption{An actor $\mu$ trained with gradient ascent on a challenging Q-landscape gets stuck in local optima. Our approach learns a sequence of surrogates $\Psi_i$ of the Q-function that successively prune out the Q-landscape below the current best Q-values, resulting in fewer local optima. Thus, the actors $\nu_i$ trained to ascend on these surrogates produce actions with a more optimal Q-value.}
    \label{fig:teaser}
    \vspace{-5pt}
\end{figure}

\myparagraph{Analytical Q-optimization}
Analytical optimization of certain Q-functions, such as wire fitting algorithm~\citep{baird1993reinforcement} and normalized advantage functions~\citep{gu2016continuous, wang2019quadratic}, allows closed-form action maximization without an actor. Likewise, ~\citet{amos2017input} assume that the Q-function is convex in actions and use a convex solver for action selection. In contrast, the Q-functions considered in this paper are inherently non-convex in action space, making such an assumption invalid. Generally, analytical Q-functions lack the expressiveness of deep Q-networks~\citep{hornik1989multilayer}, making them unsuitable to model complex tasks like in~\myfig{fig:problem}.

\myparagraph{Evolutionary Algorithms for Q-optimization}
Evolutionary algorithms like simulated annealing~\citep{kirkpatrick1983optimization}, genetic algorithms~\citep{srinivas1994genetic}, tabu search~\citep{glover1990tabu}, and the cross-entropy method (CEM)~\citep{de2005tutorial} are employed in RL for global optimization~\citep{hu2007model}. Approaches such as QT-Opt~\citep{kalashnikov2018scalable,lee2023pi,kalashnikov2021mt} utilize CEM for action search, while hybrid actor-critic methods like CEM-RL~\citep{pourchot2018cem}, GRAC~\citep{shao2022grac}, and Cross-Entropy Guided Policies~\citep{simmons2019q} combine evolutionary techniques with gradient descent. Despite their effectiveness, CEM-based methods require numerous Q-function evaluations and struggle with high-dimensional actions~\citep{yan2019learning}. In contrast, SAVO achieves superior performance with only a few (e.g., three) Q-evaluations, as demonstrated in experiments (\mysecref{sec:experiments}).

\myparagraph{Actor-Critic Methods with Gradient Ascent}
Actor-critic methods can be on-policy~\citep{williams1992simple, schulman2015trust, schulman2017proximal} primarily guided by the policy gradient of expected returns, or off-policy~\citep{silver2014deterministic, lillicrap2015continuous, fujimoto2018addressing, chen2020randomized} primarily guided by the bellman error on the critic. Deterministic Policy Gradient (DPG)~\citep{silver2014deterministic} and its extensions like DDPG~\cite{lillicrap2015continuous}, TD3~\citep{fujimoto2018addressing} and REDQ~\citep{chen2020randomized} optimize actors by following the critic's gradient. Soft Actor-Critic (SAC)~\citep{haarnoja2018soft} extends DPG to stochastic actors. However, these methods can get trapped in local optima within the Q-function landscape. SAVO addresses this limitation by enhancing gradient-based actor training. This issue also affects stochastic actors, where a local optimum means an \emph{action distribution} (instead of a single action) that fails to minimize the KL divergence from the Q-function density fully, and is a potential area for future research.

\myparagraph{Sampling-Augmented Actor-Critic}
Sampling multiple actions and evaluating their Q-values is a common strategy to find optimal actions. Greedy actor-critic~\citep{neumann2018greedy} samples high-entropy actions and trains the actor towards the best Q-valued action, yet remains susceptible to local optima. In large discrete action spaces, methods like Wolpertinger~\citep{dulac2015deep} use k-nearest neighbors to propose actions, requiring extensive Q-evaluations on up to 10\% of total actions. In contrast, SAVO efficiently generates high-quality action proposals through successive actor improvements without being confined to local neighborhoods.

\myparagraph{Ensemble-Augmented Actor-Critic}
Ensembles of policies enhance exploration by providing diverse action proposals through varied initializations~\citep{osband2016deep, chen2019off, song2023ensemble, zheng122018self, huang2017learning}. The best action is selected based on Q-value evaluations. Unlike ensemble methods, SAVO systematically eliminates local optima, offering a more reliable optimization process for complex tasks (\mysecref{sec:experiments}).

\section{Problem Formulation}
\label{sec:prob-formulation}
Our work tackles the effective optimization of the Q-value landscape in off-policy actor-critic methods for continuous and large-discrete action RL.
We model a task as a Markov Decision Process (MDP), defined by a tuple $\{\mathcal{S}, \mathcal{A}, \mathcal{T}, R, \gamma\}$ of states, actions, transition probabilities, reward function, and a discount factor.
The action space $\mathcal{A}$ is a $D$-dimensional \emph{continuous} vector space, $\mathbb{R}^D$.
At every step $t$ in the episode, the agent receives a state observation $s_t \in \mathcal{S}$ from the environment and acts with $a_t \in \mathcal{A}$. Then, it receives the new state after transition $s_{t+1}$ and a reward $r_t$.
The objective of the agent is to learn a policy $\pi(a \mid s)$ that maximizes the expected discounted reward, $\max_\pi \mathbb{E}_{\pi} \left[ \sum_{t} \gamma ^{t} r_t \right].$

\vspace{-5pt}
\subsection{Deterministic Policy Gradients (DPG)}
\vspace{-5pt}
DPG~\citep{silver2014deterministic} is an off-policy actor-critic algorithm that trains a deterministic actor $\mu_\phi$ to maximize the Q-function.
This happens via two steps of generalized policy iteration, GPI~\citep{sutton1998reinforcement}: policy evaluation estimates the Q-function~\citep{bellman1966dynamic} and policy improvement greedily maximizes the Q-function. To approximate the $\arg\max$ over continuous actions in \myeq{eq:dpg_greedy}, DPG proposes the policy gradient to update the actor locally in the direction of increasing Q-value,
\begin{align}
Q^\mu(s, a) &= r(s, a) + \gamma \mathbb{E}_{s'} \left[ Q^\mu(s', \mu(s')) \right], \label{eq:dpg_critic}\\
\mu(s) &= \arg\max_a Q^\mu(s, a), \label{eq:dpg_greedy}\\
\nabla_\phi J(\phi) &= \mathbb{E}_{s \sim \rho^\mu} \left[ \nabla_a Q^\mu(s, a) \big|_{a=\mu(s)} \nabla_\phi \mu_\phi(s) \right]. \label{eq:dpg_actor}
\vspace{-15pt}
\end{align}
DDPG~\citep{lillicrap2015continuous} and TD3~\citep{fujimoto2018addressing} made DPG compatible with deep networks via techniques like experience replay and target networks to address non-stationarity of online RL, twin critics to mitigate overestimation bias, target policy smoothing to prevent exploitation of errors in the Q-function, and delayed policy updates so critic is reliable to provide actor gradients.

\vspace{-5pt}
\subsection{The Challenge of an Actor Maximizing a Complex Q-landscape}
\label{sec:challenge}
\vspace{-5pt}

DPG-based algorithms train the actor following the chain rule in Eq.~\ref{eq:dpg_actor}. Specifically, its first term, \(\nabla_a Q^\mu(s, a)\) involves gradient ascent in Q-versus-$a$ landscape.
This Q-landscape is often highly non-convex (Fig.~\ref{fig:problem},~\ref{fig:q_challenge}) and changes non-stationarily during training. This makes the actor's output $\mu(s)$ get stuck at suboptimal Q-values, thus leading to insufficient policy improvement in Eq.~\ref{eq:dpg_greedy}. We can define the suboptimality of the $\mu$ w.r.t. $Q^\mu$ at state $s$ as
\begin{equation}
\label{eq:delta}
\Delta(Q^\mu, \mu, s) = \arg\max_a Q^\mu(s, a) - Q^\mu(s, \mu(s)) \geq 0.
\end{equation}
\begin{wrapfigure}{r}{0.38\linewidth}
\centering
        \includegraphics[width=\linewidth]{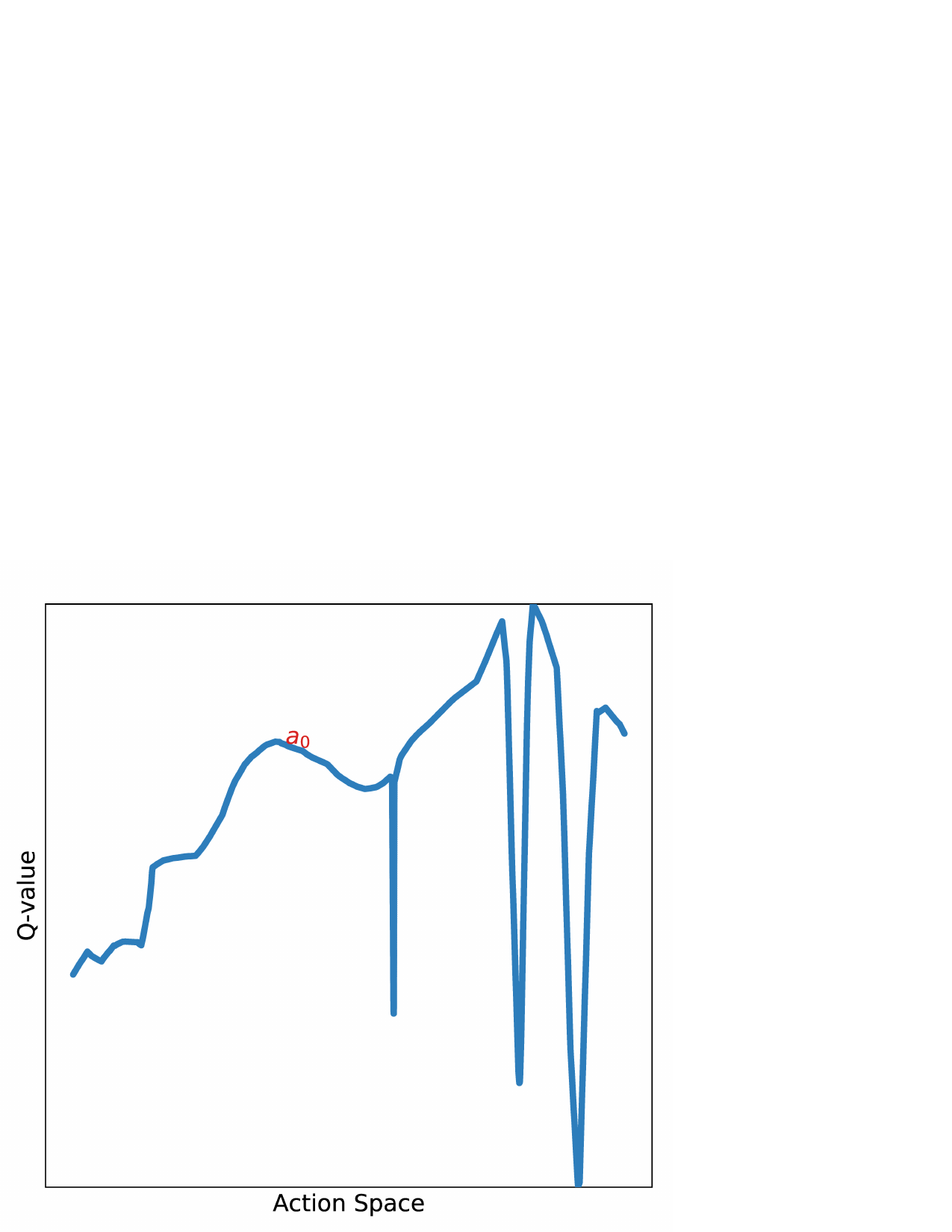}
        \caption{Non-convex Q-landscape in Inverted-Pendulum-Restricted leads to the TD3 actor converging at a local optimum $a_0$ with large suboptimality.}
        \label{fig:q_challenge}
    \vspace{-10pt}
\end{wrapfigure}
Suboptimality in actors is a crucial problem because it leads to (i) \textbf{poor sample efficiency} by slowing down GPI, and (ii) \textbf{poor inference performance} even with an optimal Q-function, $Q^*$ as seen in Fig.~\ref{fig:q_challenge} where a TD3 actor gets stuck at a locally optimum action $a_0$ in the final Q-function.

This challenge fundamentally differs from the well-studied field of non-convex optimization, where non-convexity arises in the \emph{loss function w.r.t. the model parameters}~\citep{goodfellow2016deep}. In those cases, stochastic gradient-based optimization methods like SGD and Adam~\citep{kingma2014adam} are effective at finding acceptable local minima due to the smoothness and high dimensionality of the parameter space, which often allows for escape from poor local optima~\citep{pmlr-v38-choromanska15}. Moreover, overparameterization in deep networks can lead to loss landscapes with numerous good minima~\citep{neyshabur2017exploring}.

In contrast, our challenge involves non-convexity in the \emph{Q-function w.r.t. the action space}. The actor's task is to find, for every state $s$, the action $a$ that maximizes $Q^\mu(s, a)$. Since the Q-function can be highly non-convex and multimodal in $a$, the gradient ascent step $\nabla_a Q^\mu(s, a)$ used in Eq.~\ref{eq:dpg_actor} may lead the actor to converge to suboptimal local maxima in action space. Unlike parameter space optimization, the actor cannot rely on high dimensionality or overparameterization to smooth out the optimization landscape in action space because the Q-landscape is determined by the task's reward. Furthermore, the non-stationarity of the Q-function during training compounds this challenge. These properties make our non-convex challenge unique, requiring a specialized actor to navigate the complex Q-landscape.

Tasks with several local optima in the Q-function include inverted pendulum with restricted action space due to limitations on mobility or terrain, leading to a rugged Q-landscape~\citep{florence2022implicit} as shown in Fig.~\ref{fig:q_challenge}. Dexterous manipulation tasks exhibit discontinuous behaviors like inserting a precise peg in place with a small region of high-valued actions~\citep{rajeswaran2017learning} and surgical robotics have a high variance in Q-values of nearby motions~\citep{barnoy2021robotic}.

\vspace{-5pt}
\subsubsection{Large Discrete Action RL Reframed as Continuous Action RL}
\label{sec:discrete-rl}
\vspace{-5pt}

We discuss another practical domain where non-convex Q-functions are present. In large discrete action tasks like recommender systems~\citep{Zhao_2018, zou2019reinforcement, wu2017returning}, a common approach~\citep{van2009using, dulac2015deep} is to use continuous representations of actions as a medium of decision-making. Given a set of actions, $\mathcal{I} = \{ \mathscr{I}_1, \dots, \mathscr{I}_N\}$, a predefined module $\mathcal{R}: \mathcal{I} \rightarrow \mathcal{A}$ assigns each $\mathscr{I} \in \mathcal{I}$ to its representation $\mathcal{R}(\mathscr{I})$, e.g., text embedding of a given movie~\citep{zhou2010solving}.
A continuous action policy $\pi(a \mid s)$ is learned in the action representation space, with each $a \in \mathcal{A}$ converted to a discrete action $\mathscr{I} \in \mathcal{I}$ via nearest neighbor,
\[
f_{\text{NN}}(a) = \arg\min_{\mathscr{I}_i \in \mathcal{I}} \| \mathcal{R}(\mathscr{I}_i) - a \|_2.
\]
Importantly, the nearest neighbor operation creates a challenging piece-wise continuous Q-function with suboptima at various discrete points as shown in Fig.~\ref{fig:problem}~\citep{jain2021know, pmlr-v119-jain20b}.

\section{Approach: Successive Actors for Value Optimization (SAVO)}
\label{sec:approach}
\vspace{-5pt}

\begin{figure}[t]
    \centering
    \includegraphics[width=\textwidth]{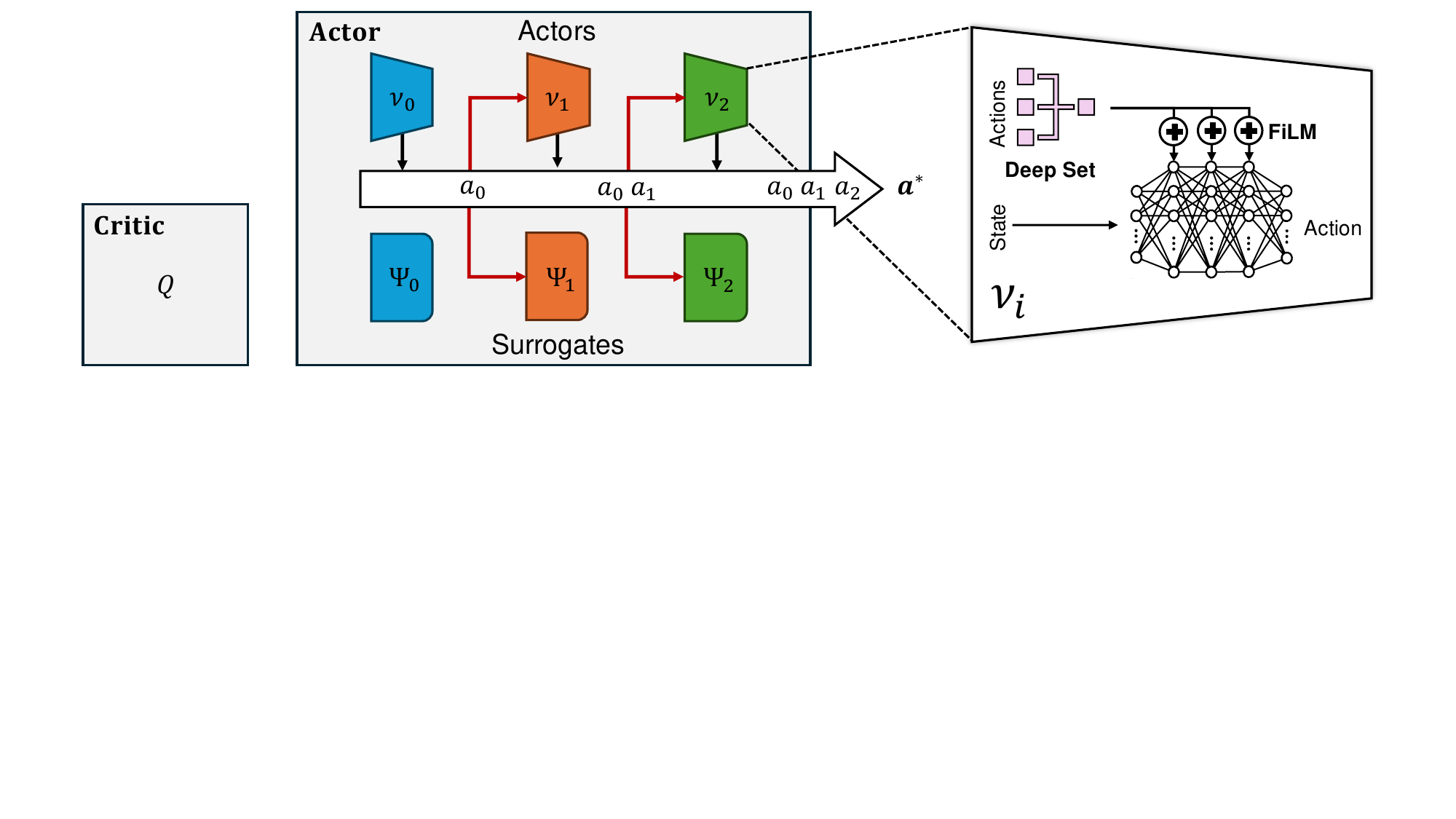}
    \caption{\textbf{SAVO Architecture.} (left) Q-network is unchanged. (center) Instead of a single actor, we learn a sequence of actors and surrogate networks connected via action predictions. (right) Conditioning on previous actions is done with the help of a deep-set summarizer and FiLM modulation.}
    \label{fig:architecture}
\vspace{-5pt}
\end{figure}

We propose an online actor architecture and training method that dynamically guides gradient-based policy improvement toward better actions throughout training. Our method preserves the time-efficiency of gradient-based methods as opposed to maximization using expensive evolutionary methods while mitigating the suboptimality of a single actor. We introduce two key ideas:

\begin{enumerate}[itemsep=0pt, topsep=0pt, partopsep=0pt, left=0pt]
    \item \textbf{Multiple Actors:} We train several gradient-based actors and select among their proposed actions via $\arg\max$ on the Q-function, ensuring the resulting policy outperforms any single actor (\S\ref{sec:maximizer}).
    \item \textbf{Easier to maximize Q-landscape:} We train online surrogates of the Q-function that are biased towards higher-value actions and progressively flatten out shallow local maxima so that gradient-based improvement is likely to find actions in better regions (\S\ref{sec:surrogate}).
\end{enumerate}

While surrogates generate diverse candidate actions, the final decision always uses an $\arg \max$ over the \emph{true} Q-function estimate, ensuring we never do worse than ignoring the surrogates altogether.

\vspace{-5pt}
\subsection{Maximizer Actor over Multiple Action Proposals}
\label{sec:maximizer}
\vspace{-5pt}
We first show how additional actors can improve DPG's policy improvement step.
Given a policy $\mu$ being trained with DPG over $Q$, consider $k$ additional arbitrary policies $\nu_1, \dots, \nu_k$, where $\nu_i: \mathcal{S} \to \mathcal{A}$ and let $\nu_0 = \mu$. We define a maximizer actor $\mu_M$ for $a_i = \nu_i(s)$ for $i = 0, 1, \dots, k$,
\begin{align}
\mu_M(s) := \argmax_{a \in \{a_0, a_1, \dots, a_k\}} Q (s, a), \label{eq:maximizer}
\end{align}
$\mu_M$ can be simply shown to be a better maximizer of $Q(s,a)$ in Eq.~\ref{eq:dpg_greedy} than $\mu \; \forall s$ :
\begin{align*}
Q(s, \mu_M(s)) = \max_{a_i} Q(s, a_i) \geq Q(s, a_0)
= Q(s, \mu(s)).
\label{eq:Q_maximizer}
\end{align*}
Therefore, by policy improvement theorem~\citep{sutton1998reinforcement}, $V^{\mu_M}(s) \geq V^{\mu}(s)$, proving that $\mu_M$ is better than a single $\mu$ for a given $Q$. Appendix~\ref{app:sec:convergence_proof} proves the following theorem by showing that policy evaluation and improvement with $\mu_M$ converge.

\begin{theorem}[Convergence of Policy Iteration with Maximizer Actor]
\label{thm:convergence_maximizer_policy_iteration}
A modified policy iteration algorithm where $\nu_0 = \mu$ is the current policy learned with DPG and \emph{maximizer actor} $\mu_M$ defined in Eq.~\ref{eq:maximizer}, converges in the tabular setting to the locally optimal policy.
\end{theorem}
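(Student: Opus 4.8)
The plan is to adapt the classical convergence argument for Howard's policy iteration in finite MDPs, with the single nonstandard ingredient being that the improvement step returns the maximizer actor $\mu_M$ rather than a plain greedy update. First I would make the ``modified policy iteration'' precise as a sequence of deterministic policies $\pi_0, \pi_1, \dots$, where each round performs (i) \emph{exact} policy evaluation, solving the Bellman equation \myeq{eq:dpg_critic} for $Q^{\pi_t}$ (which is exact in the tabular setting), and (ii) policy improvement $\pi_{t+1} = \mu_M^{(t)}$ defined as in \myeq{eq:maximizer} with respect to $Q^{\pi_t}$. The candidate set at state $s$ consists of the base action $a_0(s) = \nu_0(s) = \mu_t(s)$ together with the proposals $a_i(s) = \nu_i(s)$; crucially, in the tabular setting the base actor's improvement can be taken to be exact, so $a_0(s) \in \argmax_a Q^{\pi_t}(s,a)$.

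The second step is to establish pointwise monotone improvement. Because $a_0(s)$ lies in the candidate set, \myeq{eq:maximizer} gives, for every $s$, the chain $Q^{\pi_t}(s, \pi_{t+1}(s)) \geq Q^{\pi_t}(s, a_0(s)) = \max_a Q^{\pi_t}(s,a) \geq Q^{\pi_t}(s, \pi_t(s)) = V^{\pi_t}(s)$. Feeding this into the policy improvement theorem (the same step the text already uses to conclude $V^{\mu_M} \geq V^{\mu}$) yields $V^{\pi_{t+1}}(s) \geq V^{\pi_t}(s)$ for all $s$. Together with the universal upper bound $V^{\pi_t}(s) \leq V^{*}(s)$, the value sequence is monotone non-decreasing and bounded above, so it converges.

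The third step converts convergence of values into optimality. Since the MDP is finite there are only finitely many deterministic policies, so the monotone value sequence must stabilize after finitely many rounds at some $\pi_\infty$ for which the improvement step yields no strict gain, i.e. $V^{\pi_{t+1}} = V^{\pi_t}$. At this fixed point the inequality chain above must collapse to equalities, forcing $Q^{\pi_\infty}(s, \pi_\infty(s)) = \max_a Q^{\pi_\infty}(s,a)$ for all $s$. This is precisely the Bellman optimality equation, so by uniqueness of its solution $V^{\pi_\infty} = V^{*}$ and $\pi_\infty$ is optimal; this also certifies that the suboptimality gap $\Delta$ of \myeq{eq:delta} vanishes at convergence.

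The main obstacle, and the crux of the argument, is the step guaranteeing that the stabilized policy is \emph{globally} greedy rather than merely unimprovable within the finite candidate set $\{a_0, \dots, a_k\}$: with only the arbitrary proposals $\nu_1, \dots, \nu_k$, the fixed point could in principle be suboptimal. The resolution is exactly the tabular assumption — there is no gradient-ascent bottleneck, so the base candidate $a_0$ can be the true $\argmax$ over the finite action set, which makes the candidate set dominate the global maximum at every round and pins the fixed point to the Bellman-optimal solution. I would close by remarking that, because $\mu_M$ pointwise dominates $\mu$ at each round, the modified iteration converges no slower than ordinary policy iteration, which is the practically relevant takeaway.
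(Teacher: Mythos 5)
Your proof skeleton --- exact tabular policy evaluation, pointwise dominance of $\mu_M$ over its candidate set fed into the policy improvement theorem, monotone values bounded by $V^*$, and finiteness of deterministic policies forcing stabilization --- matches the paper's proof in Appendix~\ref{app:sec:convergence_proof}. The divergence, and the genuine gap, sits exactly at the step you yourself flag as the crux. You resolve it by taking the base candidate to be exactly greedy, $a_0(s) \in \argmax_a Q^{\pi_t}(s,a)$, on the grounds that the tabular setting removes the gradient-ascent bottleneck. But this contradicts the theorem's own hypothesis: $\nu_0 = \mu$ is the policy \emph{learned with DPG}, i.e., by the local gradient step of \myeq{eq:dpg_actor}, and the paper's entire motivation is that such an actor need not return the global $\argmax$. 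Moreover, once $a_0$ is assumed to be the exact maximizer, the proposals $\nu_1,\dots,\nu_k$ and the maximizer actor itself become redundant: your argument reduces to classical Howard policy iteration, which is not the algorithm the statement describes.

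The paper's proof keeps DPG as a genuine gradient step: it defines $\tilde{\mu}_n(s) = \mu_n(s) + \alpha \nabla_a Q^{\mu_n}(s,a)\big|_{a=\mu_n(s)}$, derives $Q^{\mu_n}(s,\mu_{n+1}(s)) \geq Q^{\mu_n}(s,\tilde{\mu}_n(s)) \geq V^{\mu_n}(s)$, and runs the same monotonicity-plus-finiteness argument you use. The price is paid at the fixed point: when $\mu_{N+1} = \mu_N$, all the proof can extract is $\nabla_a Q^{\mu_N}(s,a)\big|_{a=\mu_N(s)} = 0$, i.e., the limit policy is a local maximum of $Q^{\mu_N}$ that none of the candidates improves --- not that it satisfies the Bellman optimality equation. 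Accordingly, the appendix statement of the theorem claims only convergence to a final policy $\mu_N$, not to the optimal one. So your diagnosis of where the difficulty lies is exactly right, but your resolution proves a different theorem (optimality of exact policy iteration augmented with extra proposals), while under the actual DPG premise the collapse-to-Bellman-optimality step is unavailable; the paper's own proof does not establish the optimality asserted in the main-text statement either.
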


This algorithm is valid for arbitrary $\nu_1, \dots \nu_k$. We experiment with $\nu$'s obtained by \textbf{sampling} from a Gaussian centered at $\mu$ or \textbf{ensembling} on $\mu$ to get diverse actions. However, in high-dimensionality, \emph{randomness} around $\mu$ is not sufficient to get action proposals to significantly improve $\mu$.

\subsection{Successive Q-landscape surrogates for Better Action Proposals}
\label{sec:surrogate}

To obtain better-than-random action proposals for $\mu_M$, we train additional policies $\nu_i$ with gradient-ascent on \textit{surrogate} Q-functions with three properties:
\begin{enumerate}[itemsep=0pt, topsep=0pt, partopsep=0pt, left=0pt]
    \item \textbf{Truncate regions below anchor actions:} We train online surrogates of the Q-function that are biased towards higher-value actions and progressively flatten out shallow local maxima so that gradient-based improvement is likely to find actions in better regions.
    \item \textbf{Approximately track Q-function with a bias towards high valued actions:} We train several gradient-based actors and select among their proposed actions via $\arg\max$ on the Q-function, ensuring the resulting policy outperforms any single actor.
    \item \textbf{Gradient-based actors for each surrogate:} Each surrogate likely provides a path to progressively better optima for its actor, which in turn provides a better anchor for the following surrogates.
\end{enumerate}

\subsubsection{Truncate regions below anchor actions}
\label{sec:truncation}
Our inspiration is Tabu Search~\citep{glover1998tabu}, which is an optimization technique that avoids revisiting previously explored inferior solutions, thereby enhancing the search for optimal solutions. We adopt a similar idea by identifying relatively high-value actions, referred to as \emph{anchor} actions, which serve as local reference points during optimization.
Specifically, we propose to ``tabu'' regions of the Q-function landscape that fall below these anchors' values. Concretely, we define a surrogate function $\Psi$ that truncates the landscape by elevating the Q-values of all inferior actions to \( Q(s, a^\dagger) \), effectively flattening suboptimal basins and guiding the actor away from poor local optima,
\begin{equation}
    \Psi(s, a; a^\dagger) = \max\{ Q(s, a), Q(s, a^\dagger) \}.
\end{equation}

Extending this idea, we define a sequence of surrogate functions using the actions from all previous policies as \emph{anchors}. Let $a_{<i} = \{a_0, a_1, \dots, a_{i-1}\}$ be the anchors, the $i$-th surrogate function is:
\begin{equation}
    \Psi_i(s, a; a_{<i}) = \max\left\{ Q(s, a), \max_{j < i} Q(s, a_j) \right\}.
    \label{eq:surrogate_definition}
\end{equation}

\begin{theorem}
\label{thm:surrogate}
For a state $s \in \mathcal{S}$ and surrogates $\Psi_i$ defined as above, the number of local optima decreases with each successive surrogate:
\begin{equation*}
    N_{\text{opt}}(Q(s, \cdot)) \geq N_{\text{opt}}(\Psi_1(s, \cdot; a_0)) \geq \dots \geq N_{\text{opt}}(\Psi_k(s, \cdot; a_{<k})),
\end{equation*}
where $N_{\text{opt}}(f)$ denotes the number of local optima of function $f$ over $\mathcal{A}$.
\end{theorem}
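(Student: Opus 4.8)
The plan is to reduce the entire chain of inequalities to a single lemma about \emph{clipping a function from below}, applied repeatedly. The crucial structural observation is that each surrogate is simply $Q(s,\cdot)$ truncated below a constant. Writing $c_i = \max_{j<i} Q(s,a_j)$, we have $\Psi_i(s,a;a_{<i}) = \max\{Q(s,a),c_i\}$, and since $c_i$ is a maximum over a growing index set, the thresholds are nondecreasing: $c_1 \le c_2 \le \dots \le c_k$. This monotonicity lets the surrogates telescope, because $c_{i+1}\ge c_i$ gives $\max\{\Psi_i,c_{i+1}\} = \max\{Q,c_i,c_{i+1}\} = \max\{Q,c_{i+1}\} = \Psi_{i+1}$; hence each surrogate is obtained from its predecessor by clipping below at the higher threshold $c_{i+1}$. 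It therefore suffices to show that clipping below never increases the number of local maxima, after which the stated chain follows by induction starting from $\Psi_0 = Q(s,\cdot)$.

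\textbf{Clipping lemma.} For any $f:\mathcal{A}\to\mathbb{R}$ and constant $c$, the function $g = \max\{f,c\}$ satisfies $N_{\text{opt}}(g) \le N_{\text{opt}}(f)$, where $N_{\text{opt}}$ counts the local maxima relevant to gradient ascent. I would prove it by exhibiting an injection from the local maxima of $g$ into those of $f$. For a local maximum $a^*$ of $g$ with $g(a^*) > c$, necessarily $f(a^*) = g(a^*)$, and since $g \ge f$ pointwise, on any neighborhood where $g \le g(a^*)$ we also have $f \le g \le f(a^*)$; thus $a^*$ is a local maximum of $f$ at the same point, giving an injective assignment on this case. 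For a local maximum with $g(a^*) = c$, the point lies in the ``submerged'' set $\{f \le c\}$ on which $g$ is flat; I would map the entire flat component to a local maximum of $f$ inside the corresponding submerged basin. Since above-threshold peaks have $f > c$ while basin maxima have $f \le c$, and distinct basins are disjoint, the combined map is injective, yielding the inequality. No new maxima are created: every peak of $g$ exceeding $c$ coincides locally with a peak of $f$.

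The step I expect to be the main obstacle is the rigorous handling of \emph{plateaus} and the counting convention itself. Clipping turns each submerged region into a flat set at level $c$, and under a naive pointwise definition every interior point of $\{f < c\}$ is a (non-strict) local maximum of $g$, producing a continuum of spurious optima; the count must therefore be taken over connected components, or equivalently over distinct peaks. Making the plateau-to-basin correspondence injective then requires compactness and continuity so that $f$ attains an interior maximum on each relevant component. A cleaner route that bypasses this bookkeeping is to define $N_{\text{opt}}$ through superlevel sets: for every $t > c$ the sets $\{g \ge t\}$ and $\{f \ge t\}$ coincide, whereas for $t \le c$ the set $\{g \ge t\}$ is all of $\mathcal{A}$. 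Consequently every component that would appear at a level $\le c$ under $f$ --- that is, every local maximum of value at most $c$ --- is merged away in $g$, while the components appearing above $c$ are preserved verbatim. This immediately gives $N_{\text{opt}}(g) \le N_{\text{opt}}(f)$ and, by the telescoping above, the full theorem.
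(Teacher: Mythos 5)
Your overall strategy matches the paper's proof in substance: the paper likewise exploits that the thresholds $\tau_i = \max_{j \le i} Q(s,a_j)$ are nondecreasing, runs a case analysis splitting local optima into those strictly above the next threshold (which persist, since there the surrogate coincides with $Q$) and those at or below it (which are flattened away), and then asserts that no new optima appear. Your telescoping identity $\Psi_{i+1} = \max\{\Psi_i, c_{i+1}\}$ is a genuine repackaging improvement: it reduces the whole chain to a single clipping lemma and runs the injection in the logically correct direction, from optima of the clipped function back into optima of the original, whereas the paper tracks optima of $\Psi_i$ forward and relegates the ``no new optima'' claim to a parenthetical. Your above-threshold case is airtight and identical in content to the paper's Case 1.

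The gap is the plateau-to-basin injection, and it is not mere bookkeeping: the map you propose need not exist. Take $\mathcal{A} = [0,1]$, $f(x) = x$, $c = 1/2$, so $g = \max\{f, c\}$. Under non-strict pointwise maxima counted by connected components, the local-max set of $g$ is $[0, 1/2) \cup \{1\}$ (two components), while that of $f$ is $\{1\}$ (one component): the submerged basin $[0,1/2)$ contains no local maximum of $f$ at all, because $f$ is strictly monotone there and its supremum over the basin is attained only at the boundary point $1/2$, which is not locally maximal for $f$. Compactness and continuity therefore do not rescue the injection, and in fact the inequality $N_{\text{opt}}(g) \le N_{\text{opt}}(f)$ is \emph{false} under that counting convention. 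This makes your ``cleaner route'' the necessary one rather than an optional simplification: either count only strict local maxima (then no plateau point ever qualifies, and the identity injection on above-threshold peaks finishes the proof), or adopt your superlevel-set definition, under which the plateau at level $c$ merely enlarges the existing components of $\{g \ge t\}$ for $t \le c$ rather than creating a new one, and the theorem follows. The paper quietly sidesteps the same issue by treating plateau points as admissible optima of $\Psi_i$ but then arguing they cannot be optima of $\Psi_{i+1}$ because the clipped function never dips below them --- a convention your counterexample shows cannot be applied symmetrically to both functions --- so your explicit handling of the counting convention is a real improvement over the paper, provided you commit to the superlevel-set (or strict-maximum) definition and discard the basin injection.
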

\textbf{Proof Sketch.}
As $\Psi_i \!\to \Psi_{i+1}$, the anchor Q-value in Eq.~\ref{eq:surrogate_definition} weakly increases, $\max_{j < i} Q(s, a_j) \leq \max_{j < (i+1)} Q(s, a_j)$, thus, eliminating more local minima below it (proof in Appendix~\ref{app:thm:surrogate}).
\qed

\subsubsection{Approximately track Q-function with a bias towards high valued actions}
\label{sec:approximation}
The surrogates $\Psi_i$ have zero gradients in the flattened regions when $Q(s, a) < \tau$, where $\tau = \max_{j < i} Q(s, a_j)$,
This means the policy gradient only updates $\nu_i$ when $Q(s, a) \geq \tau$, which may slow down learning. To address this issue,
we ease the gradient flow by learning a smooth approximation $\hat{\Psi}_i$ of $\Psi_i$, that is biased towards high-valued actions to provide a path to a better optimum.

\begin{figure}[ht]
    \centering
    \begin{subfigure}[t]{0.25\linewidth}
        \centering
        \includegraphics[width=\linewidth]{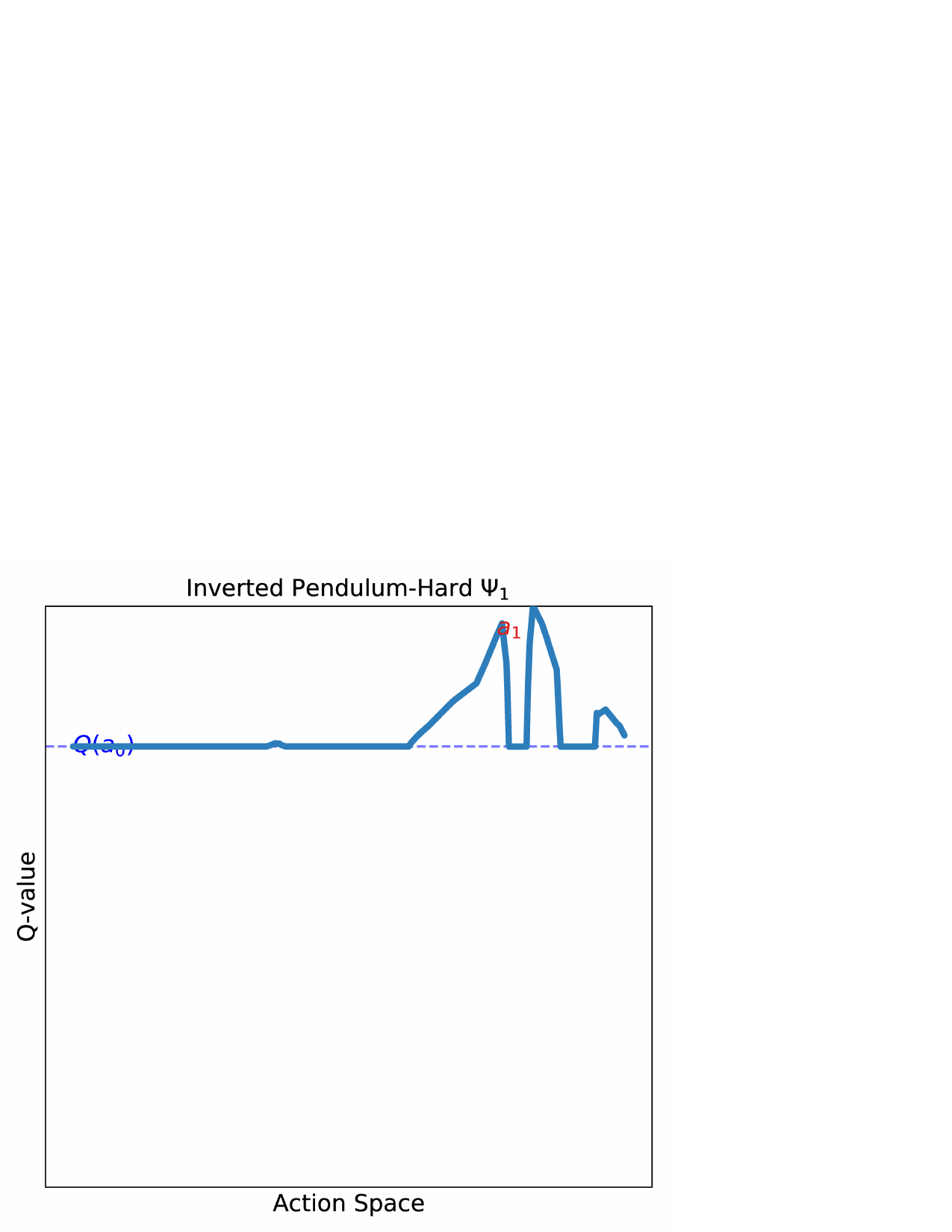}
        \label{fig:psi_max}
    \end{subfigure}
    \begin{subfigure}[t]{0.25\linewidth}
        \centering
        \includegraphics[width=\linewidth]{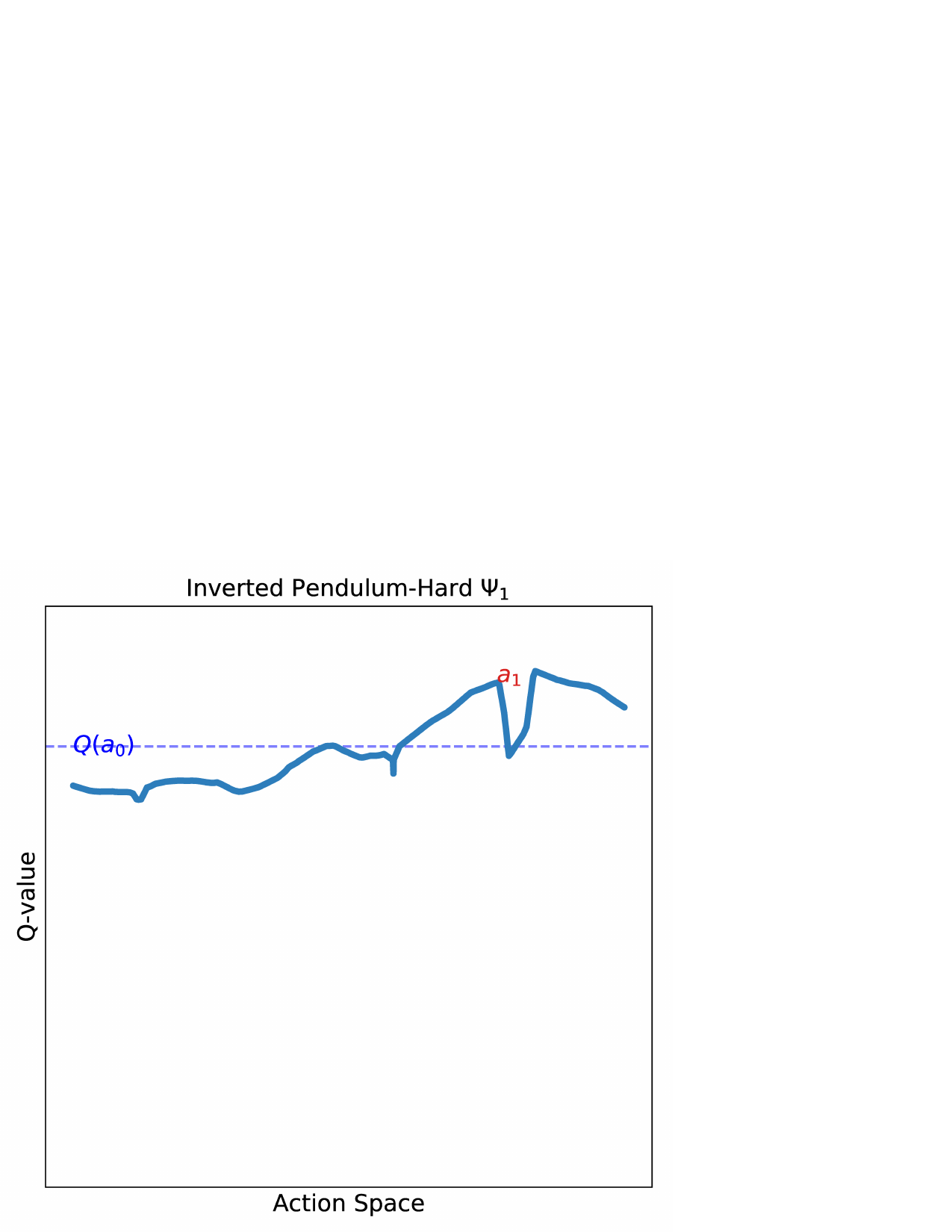}
        \label{fig:psi_approx}
    \end{subfigure}
    \vspace{-10pt} %
    \caption{While $\Psi$ (left) has flat surfaces, $\hat{\Psi}$ (right) smoothens the function to allow non-zero gradients to flow into the actor towards better optima in Inverted-Pendulum-Restricted.}
    \label{fig:psi_approximation}
\end{figure}

We approximate each surrogate $\Psi_i$ with a neural network $\hat{\Psi}_i$, by training it with imitation learning to track the updates to the Q-function (that is being updated by TD error) at two critical actions:
\begin{align}
    \mathcal{L}_{\text{approx}} = \mathbb{E}_{s \sim \rho^{\mu_M}} \left[ \sum_{a \in \{\tilde{\mu}_M(s), \nu_i(s; a_{<i})\}} \left\| \hat{\Psi}_i(s, a; a_{<i}) - \Psi_i(s, a; a_{<i}) \right\|_2^2 \right], \text{where}
    \label{eq:approximation_loss}
\end{align}
\begin{enumerate}[itemsep=0pt, topsep=0pt, partopsep=0pt, parsep=0pt, left=0pt]
    \item \textbf{Tracking:} $\tilde{\mu}_M(s)$ represents the action taken in the environment at which the latest online update to the Q-function has been made following Eq.~\ref{eq:dpg_critic}, which helps track the value of $\Psi_i$.
    \item \textbf{High-value Bias:} $\nu_i(s; a_{<i})$ is the action proposed by the $i$-th actor conditioned on previous actions $a_{<i}$, which is expected to be a high-valued action.
\end{enumerate}

This design ensures $\hat{\Psi}_i$ is updated on high Q-value actions and thus the landscape is biased towards those values. This makes the gradient flow trend in the direction of high Q-values. So, even when $a_i$ from $\nu_i$ falls in a region of zero gradients for $\Psi_i$, in $\hat{\Psi}_i$ would provide policy gradient in a higher Q-value direction, if it exists. Figure~\ref{fig:psi_approximation} shows $\Psi_1$ and $\hat{\Psi}_1$ in restricted inverted pendulum task. \rebuttal{
\myfig{fig:approximation_error} analyzes $\mathcal{L}_{\text{approx}}$ over training, demonstrating that $\hat{\Psi}_i$ stays close to $\Psi_i$ while smoothing it.
}

\subsubsection{Successive Gradient-based Actors for Each Surrogate Optimization}
\label{sec:successive_actors}

To effectively reduce local optima using the approximate surrogates \(\hat{\Psi}_1, \dots, \hat{\Psi}_k\), we design the policies \(\nu_i\) to optimize their respective \(\hat{\Psi}_i(s,a;\,a_{<i})\). Each \(\nu_i\) focuses on regions where \(Q(s,a) \geq \max_{j < i} Q(s,a_j)\), allowing it to find better optima than previous policies. The actor \(\nu_i\) is conditioned on previous actions \(\{a_0, \dots, a_{i-1}\}\), summarized via deep sets~\citep{zaheer2017deep} (see \cref{fig:architecture}). The maximizer actor \(\mu_M\) (Eq.~\ref{eq:maximizer}) then selects the best action among these proposals.

We train each actor \(\nu_i\) using gradient ascent on its approximate surrogate \(\hat{\Psi}_i\), similarly to DPG:
\begin{align}
\nabla_{\phi_i} J(\phi_i) 
\;=\;
\mathbb{E}_{s \sim \rho^{\mu_M}}
\Bigl[
\nabla_{a}\,\hat{\Psi}_i\bigl(s,a;\,a_{<i}\bigr)\Big|_{a=\nu_i(s;\,a_{<i})}
\;\cdot\;
\nabla_{\phi_i}\,\nu_i\bigl(s;\,a_{<i}\bigr)
\Bigr].
\label{eq:successive_actor_training}
\end{align}

\subsection{SAVO-TD3 Algorithm and Design Choices}
\label{sec:practical}

While the SAVO architecture (\myfig{fig:architecture}) can be integrated with any off-policy actor-critic algorithm, we choose to implement it with TD3~\citep{fujimoto2018addressing} due to its compatibility with continuous and large-discrete action RL~\citep{dulac2015deep}. Using the SAVO actor in TD3 enhances its ability to find better actions in complex Q-function landscapes. Algorithm~\ref{alg:savo} depicts SAVO ({\color{purple}{highlighted}}) applied to TD3. We discuss design choices in SAVO and validate them in~\mysecref{sec:experiments}.

1. \textbf{Removing policy smoothing}: We eliminate TD3’s policy smoothing, which adds noise to the target action $\tilde{a}$ during critic updates. In non-convex landscapes, nearby actions may have significantly different Q-values and noise addition might obscure important variations.

\begin{wrapfigure}[21]{r}{0.55\textwidth}
\vspace{-10pt}
\vspace{-10pt}
\begin{minipage}{0.55\textwidth}

\begin{algorithm}[H]
\caption{SAVO-TD3}
\label{alg:savo}
\begin{algorithmic}
\STATE {Initialize $Q, Q_2, \mu, \color{purple}{\hat{\Psi}_1, \dots, \hat{\Psi}_k, \nu_1, \dots, \nu_k}$}

\STATE Initialize target networks $Q' \leftarrow Q$, $Q_2' \leftarrow Q_{twin}$

\STATE Initialize replace buffer $\mathcal{B}$.
\FOR{timestep $t = 1$ to $T$}
    \STATE {\textbf{Select Action:}}
    \STATE Evaluate $a_0 = \mu(s), \color{purple}{a_i = \nu_i(s; a_{<i})} \; \forall i= i\dots k$
    \STATE {\color{purple}{Perturb with OU Noise $\hat{a}_i = a_i + \epsilon_i \; \forall i= i\dots k$}}
    \STATE {\color{purple}{Evaluate $\mu_M(s) = \argmax_{a \in \{\hat{a}_0, \dots, \hat{a}_k\}} Q^{\mu} (s, a)$}}
    \STATE Exploration action $a = \tilde{\mu}_M(s) = \mu_M(s) + \epsilon$
    \STATE Observe reward $r$ and new state $s'$
    \STATE Store $(s, a,{\color{purple}{\{\hat{a}_i\}_{i=0}^k}}, r, s')$ in $\mathcal{B}$
    \STATE{\textbf{Update:}}
    \STATE Sample N transitions  $(s, a,{\color{purple}{\{\hat{a}_i\}_{i=0}^k}}, r, s')$ from $\mathcal{B}$
    \STATE Compute target action {\color{purple}{$\tilde{a} = \mu_M(s')$}}
    \STATE Update $Q, Q_2 \leftarrow r + \gamma \min\{Q'(s', \tilde{a}), Q_2'(s', \tilde{a}) \}$
    \STATE {\color{purple}{Update $\hat{\Psi}_i$} with Eq.~\ref{eq:approximation_loss} $\forall i = 1 \dots k$}
    \STATE Update actor $\mu$ with Eq.~\ref{eq:dpg_actor}
    \STATE {\color{purple}{Update actor $\nu_i$} with Eq.~\ref{eq:successive_actor_training} $\forall i = 1 \dots k$}
\ENDFOR
\end{algorithmic}
\end{algorithm}

\end{minipage}
\end{wrapfigure}

2. \textbf{Exploration in Additional Actors}:\\
Successive actors $\nu_i$ explore their surrogate landscapes by adding OU~\citep{lillicrap2015continuous} or Gaussian~\citep{fujimoto2018addressing} noise to their outputs, effectively discovering high-reward regions.

3. \textbf{Twin Critics for Surrogates}:\\
To prevent overestimation bias in surrogates $\hat{\Psi}_i$, we use twin critics to compute the target of each surrogate, mirroring TD3.

4. \textbf{Conditioning on Previous Actions}:\\
Actors $\nu_i$ and surrogates $\hat{\Psi}_i$ are conditioned on preceding actions via FiLM layers~\citep{perez2018film} as in Fig.~\ref{fig:architecture}.

5. \textbf{Discrete Action Space Tasks}:\\
We apply 1-nearest-neighbor $f_{\text{NN}}$ before the Q-function, so it is only queried at in-distribution actions. \rebuttal{For gradient flow into the actor, a noisy Q-function is added. See Q-smoothing in \mysecref{app:sec:q_smoothing}.}

SAVO-TD3 systematically reduces local optima through successive surrogates while leveraging TD3 as a robust RL baseline. In the next section, we validate these design choices through experiments, demonstrating SAVO-TD3’s effectiveness in complex reinforcement learning tasks against alternate actor architectures.

\Skip{
SAVO architecture (\myfig{fig:architecture}) can be applied to any value-based actor-critic RL. In this work, we choose TD3~\citep{fujimoto2018addressing} as the underlying algorithm for its ease of integration with continuous and large-discrete~\citep{dulac2015deep} action RL. Algorithm~\ref{alg:savo} depicts SAVO ({\color{purple}{highlighted}}) applied to TD3 with its key ideas of using two critics and target networks for reducing overestimation and exploration with OU noise (please refer to ~\citet{fujimoto2018addressing}).

We discuss some design choices in SAVO and validate them in Section~\ref{sec:experiments}.
\begin{itemize}[itemsep=0pt, topsep=0pt, partopsep=0pt, parsep=0pt, left=0pt]
    \item Remove TD3's policy smoothing where noise would be added to target action $\tilde{a}$, because nearby Q-values could be significantly different due to non-convexity.
    \item Actors $\nu_i$ require action perturbation to explore their surrogate landscapes.
    \item Use twin critics like in TD3 for all surrogates $\hat{\Psi_i}$ to avoid overestimation in Eq.~\ref{eq:approximation_loss}.
    \item To condition $\nu_i$ and $\hat{\Psi_i}$ sufficiently on the deep-set summary of preceding actions, we apply FiLM~\citep{perez2018film} to modulate the layers in these networks. Note that another summarizer could also be used, like Transformer~\citep{vaswani2017attention} or LSTM~\citep{hochreiter1997long}.
    \item Discrete action space tasks always require a $1$-nearest-neighbor step $f_{\text{NN}}$ before evaluating their Q-values, because only true discrete actions' representations are in-distribution.
\end{itemize}
}

\section{Environments}
\label{sec:environments}

\begin{wrapfigure}{r}{0.3\textwidth}
    \vspace{-10pt}
    \vspace{-10pt}
    \centering
        \includegraphics[width=\linewidth]{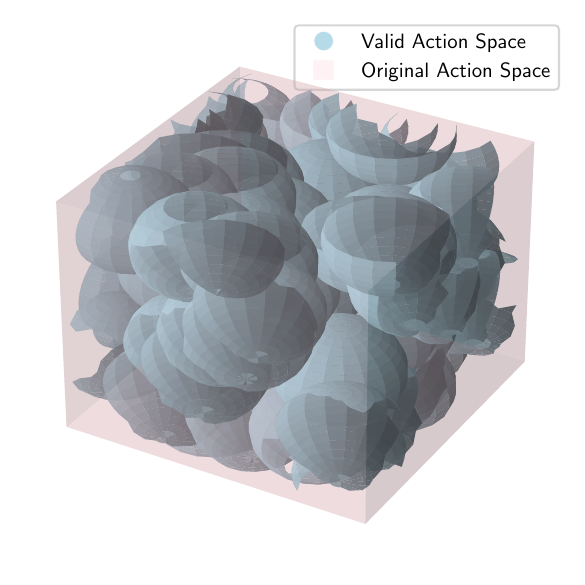}
\vspace{-10pt}
\vspace{-10pt}
\vspace{-5pt}
        \caption{Hopper's 3D visualization of Action Space.}
        \label{fig:hopper3d-action-space}
\vspace{-10pt}
\vspace{-5pt}
\end{wrapfigure}
We evaluate SAVO on discrete and continuous action space environments with challenging Q-value landscapes.
More environment details are presented in Appendix~\ref{app:sec:environment_details} and Figure~\ref{fig:experiment_setup}.

\myparagraph{Locomotion in Mujoco-v4} We evaluate Mujoco~\citep{todorov2012mujoco} environments of Hopper, Walker2D, Inverted Pendulum, and Inverted Double Pendulum without any restrictions.

\myparagraph{Locomotion in Restricted Mujoco}
We create a restricted locomotion suite of the same environments as in Mujoco-v4, which results in hard Q-landscapes due to high-dimensional discontinuities from a restricted action space. Concretely, a set of predefined hyper-spheres (as shown in Figure~\ref{fig:hopper3d-action-space}) in the action space are sampled and set to be valid actions, while the other invalid actions have a null effect if selected. The complete details of the mobility restrictions on the action space can be found in Appendix~\ref{app:exp-mujoco-restricted}.

\textbf{Adroit Dexterous Manipulation}~\citep{rajeswaran2017learning} \textit{Door}: In this task, a robotic hand is required to open a door with a latch. The challenge lies in the precise manipulation needed to unlatch and swing open the door using the fingers.
\textit{Hammer}: the robotic hand must use a hammer to drive a nail into a board. This task tests the hand's ability to grasp the hammer correctly and apply force accurately to achieve the goal.
\textit{Pen}: This task involves the robotic hand manipulating a pen to reach a specific goal position and rotation. The objective is to control the pen's orientation and position using fingers, which demands fine motor skills and coordination.

\myparagraph{Mining Expedition in Grid World}
We develop a 2D Mining grid world environment~\citep{gym_minigrid} where the agent (Appendix Fig.~\ref{fig:experiment_setup}) navigates a 2D maze to reach the goal, removing mines with correct pick-axe tools to reach the goal in the shortest path.
The action space includes navigation and tool-choice actions, with a procedurally-defined action representation space. The Q-landscape is non-convex because of the diverse effects of nearby action representations.

\myparagraph{Simulated and Real-Data Recommender Systems}
RecSim~\citep{ie2019recsim} simulates sequential user interactions in a recommender system with a large discrete action space.
The agent must recommend the most relevant item from a set of 10,000 items based on user preference information.
The action representations are simulated item characteristic vectors in simulated and movie review embeddings in the real-data task based on MovieLens~\citep{harper2015movielens} for items.

\section{Experiments}
\label{sec:experiments}

\begin{figure}[t]
    \centering
    \begin{subfigure}[t]{0.4\linewidth}
        \includegraphics[width=\linewidth]{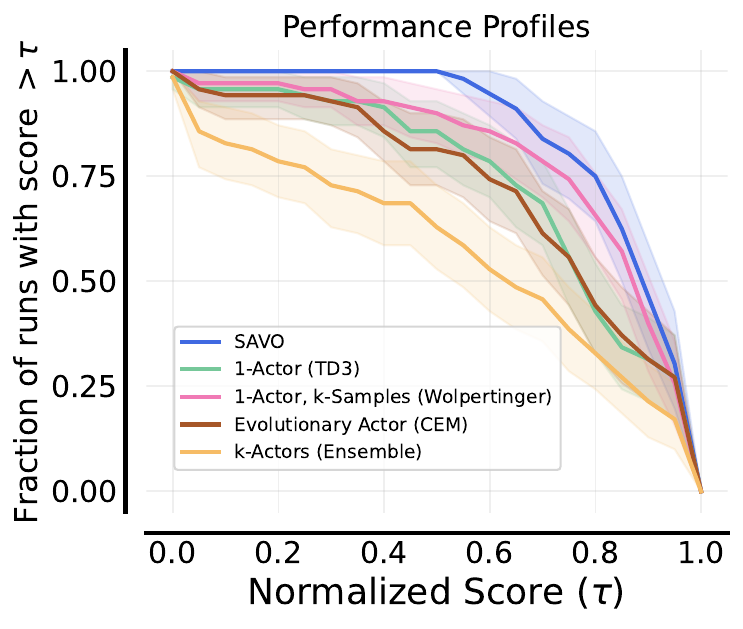}
        \caption{SAVO versus baseline actor architectures.}
        \label{fig:rliable-main}
    \end{subfigure}
    \hspace{20pt}
    \begin{subfigure}[t]{0.4\linewidth}
        \includegraphics[width=\linewidth]{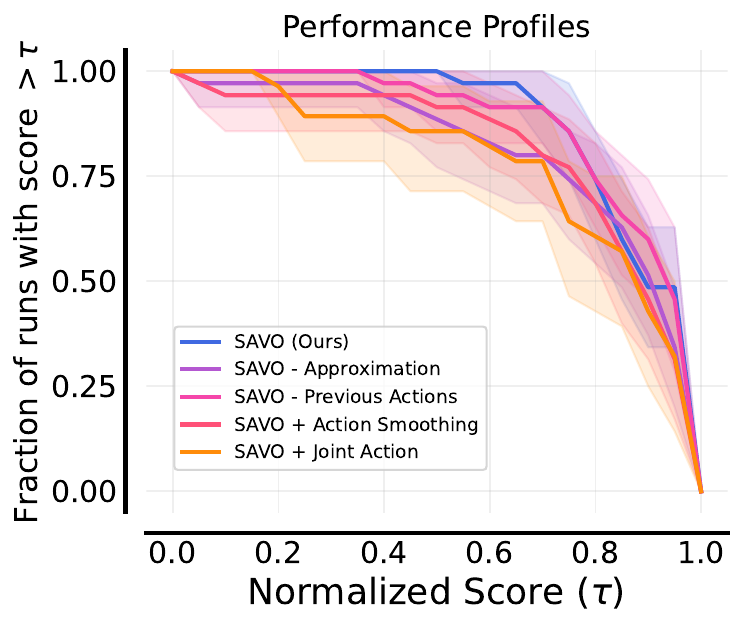}
        \caption{SAVO versus ablations of SAVO}
        \label{fig:rliable-savo-variants}
    \end{subfigure}
    \vspace{-5pt}
    \caption{Aggregate performance profiles using normalized scores over 7 tasks and 10 seeds each.}
    \label{fig:overall-performance}
\end{figure}

\vspace{-5pt}
\subsection{Effectiveness of SAVO in challenging Q-landscapes}
\label{sec:exp-baselines}

We compare SAVO against the following baseline actor architectures:

\begin{itemize}[itemsep=0pt, topsep=0pt, partopsep=0pt, left=0pt]
    \item \textbf{1-Actor (TD3)}: Conventional single actor architecture which is susceptible to local optima.
    \item \textbf{1-Actor, $k$=3 samples (Wolpertinger)}: Gaussian sampling centered on actor's output. For discrete actions, we select $3$-NN discrete actions around the continuous action~\citep{dulac2015deep}.
    \item \textbf{$k$=3-Actors (Ensemble)}: Each actor~\citep{osband2016deep} can find different local optima, improving the best action.
    \item \textbf{Evolutionary actor (CEM)}: Repeated rounds of search with CEM over the action space~\citep{kalashnikov2018scalable}.
    \item \textbf{Greedy-AC}: Greedy Actor Critic~\citep{neumann2018greedy} trains a high-entropy proposal policy and primary actor trained from best proposals with gradient updates.
    \item \textbf{Greedy TD3}: Our version of Greedy-AC with TD3 exploration and update improvements.
    \item \textbf{SAVO}: Our method with $3$ successive actors and surrogate Q-landscapes.
\end{itemize}

We ablate the crucial components and design decisions in SAVO:
\begin{itemize}[itemsep=0pt, topsep=0pt, partopsep=0pt, left=0pt]
    \item \textbf{SAVO - Approximation}: removes the approximate surrogates (Sec.~\ref{sec:approximation}), using $\Psi_i$ instead of $\hat{\Psi}_i$.
    \item \textbf{SAVO - Previous Actions}: removes conditioning on $a_{<i}$ in SAVO's actors and surrogates.
    \item \textbf{SAVO + Action Smoothing}: TD3's policy smoothing~\citep{fujimoto2018addressing} adds action noise to compute Q-targets.
    \item \textbf{SAVO + Joint Action}: trains an actor with a joint action space of $3 \times D$. The $k$ action samples are obtained by splitting the joint action into $D$ dimensions. Validates successive nature of SAVO.
\end{itemize}

\begin{figure}[t]
    \centering
    \begin{subfigure}[t]{0.24\linewidth}
        \includegraphics[width=\linewidth]{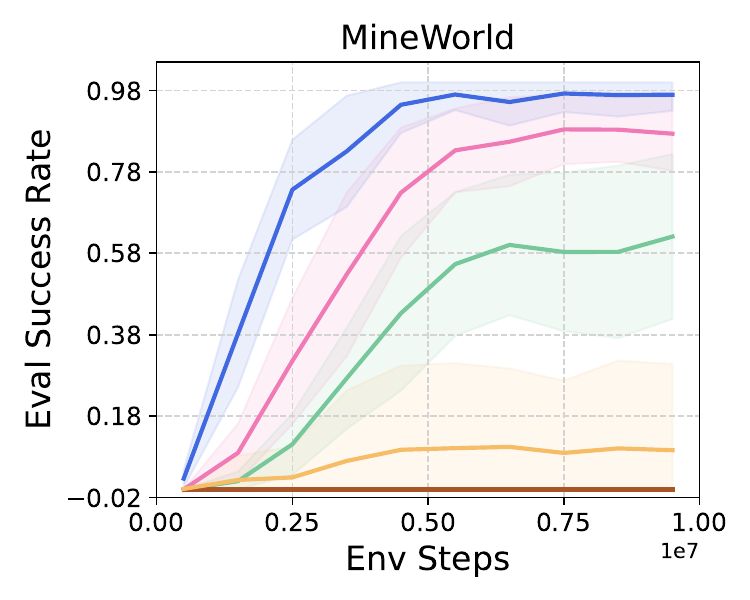}
    \end{subfigure}
    \begin{subfigure}[t]{0.24\linewidth}
        \includegraphics[width=\linewidth]{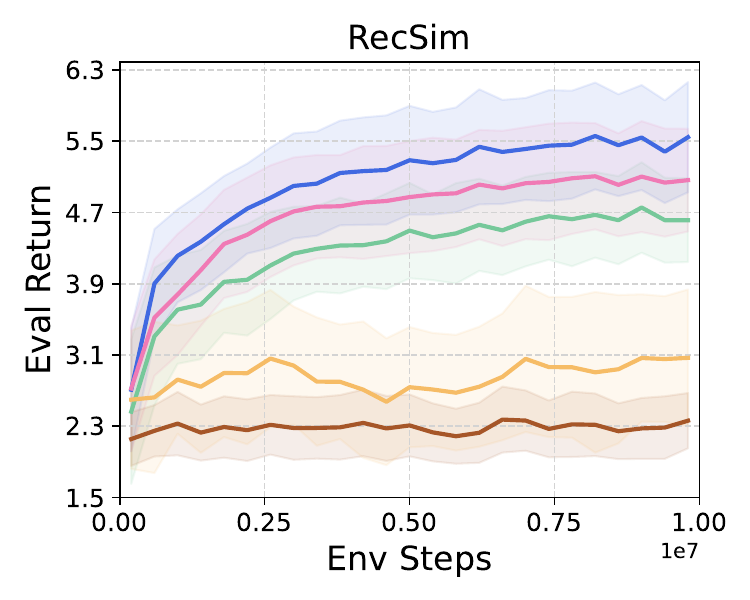}
    \end{subfigure}
    \begin{subfigure}[t]{0.24\linewidth}
        \includegraphics[width=\linewidth]{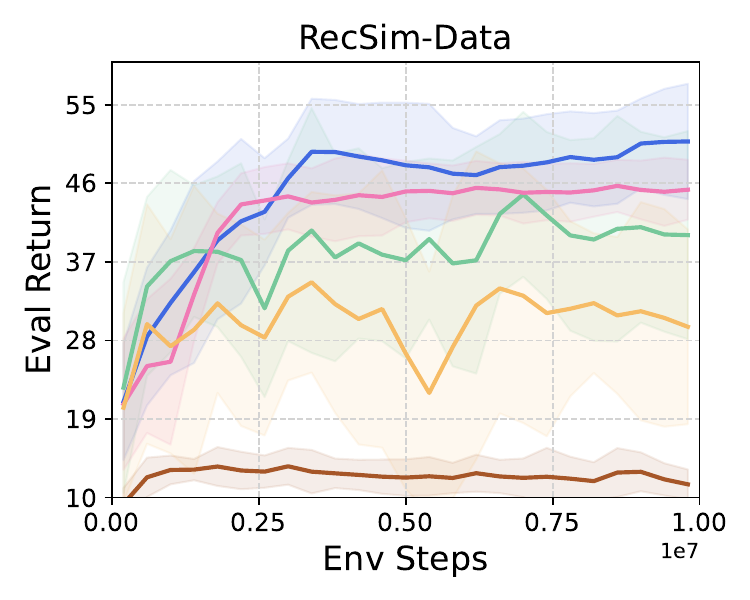}
    \end{subfigure}
    \newline
    \begin{subfigure}[t]{0.24\linewidth}
        \includegraphics[width=\linewidth]{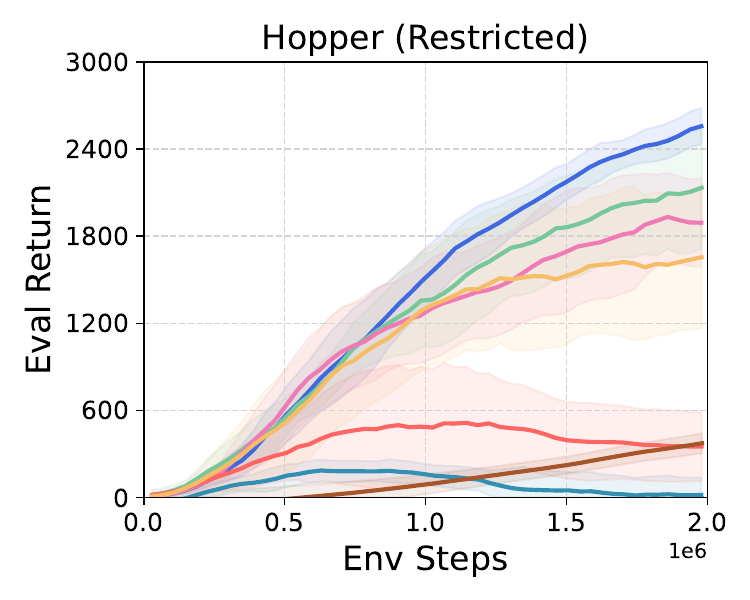}
    \end{subfigure}
    \begin{subfigure}[t]{0.24\linewidth}
        \includegraphics[width=\linewidth]{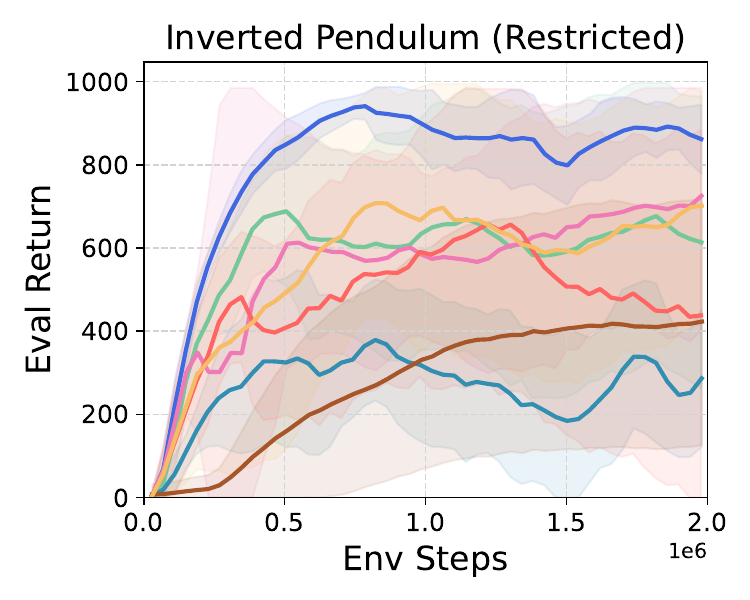}
    \end{subfigure}
    \begin{subfigure}[t]{0.24\linewidth}
        \includegraphics[width=\linewidth]{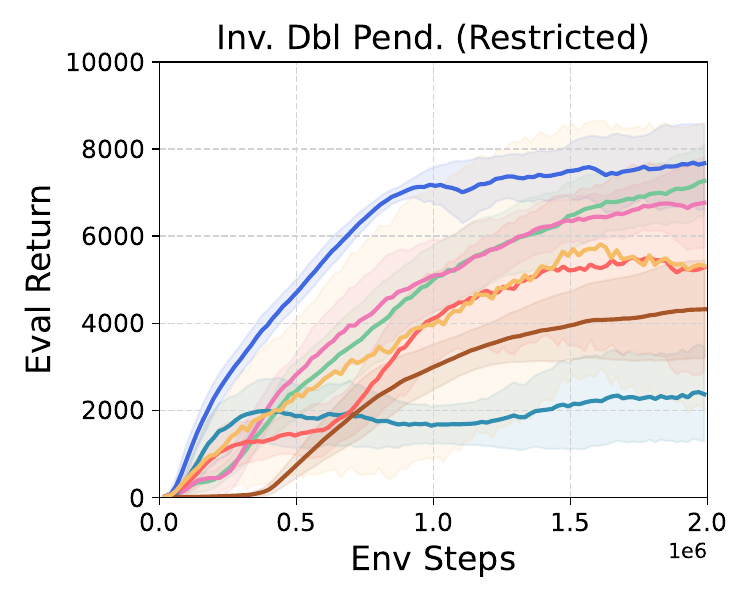}
    \end{subfigure}
    \begin{subfigure}[t]{0.24\linewidth}
        \includegraphics[width=\linewidth]{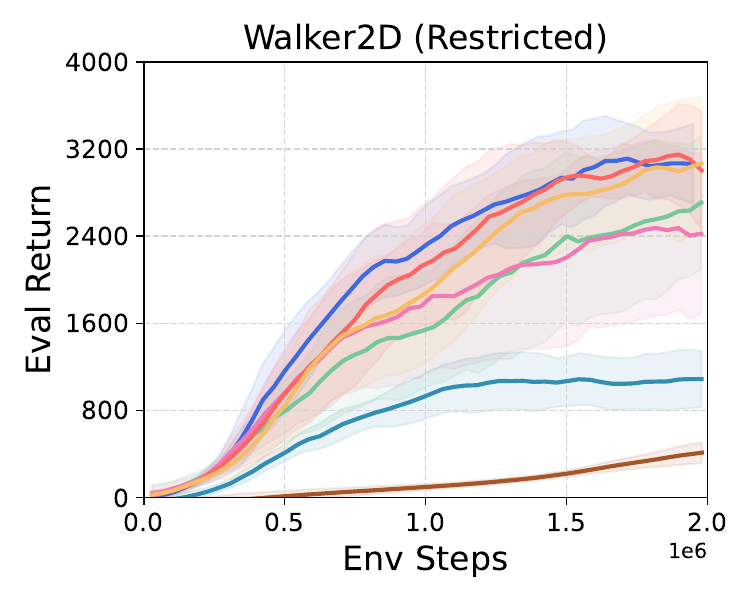}
    \end{subfigure}
    
    \begin{subfigure}[t]{\linewidth}
    \centering
        \includegraphics[width=0.8\linewidth]{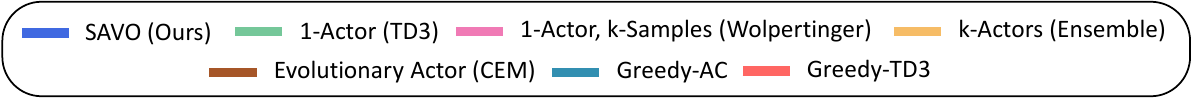}
    \end{subfigure}
   
    \caption{
    SAVO against baselines on discrete and continuous tasks. Results averaged over 10 seeds.
    }
    \label{fig:baseline_results}
\end{figure}

\myparagraph{Aggregate performance}
We utilize performance profiles~\citep{agarwal2021deep} to aggregate results across different environments in Figure \ref{fig:rliable-main} (evaluation mechanism detailed in Appendix~\ref{app:rl-profile-detail}). SAVO consistently outperforms baseline actor architectures like single-actor (\textcolor[HTML]{448966}{TD3}) and sampling-augmented actor (\textcolor[HTML]{F07AB5}{Wolpertinger}), showing the best robustness across challenging Q-landscapes. In Figure \ref{fig:rliable-savo-variants}, SAVO outperforms its ablations, validating each proposed component and design decision.

\myparagraph{Per-environment results}
In Mining Expedition, the action space has semantically different navigation and tool-choice actions, while RecSim and RecSim-Data have a large and diverse set of items.
The Q-landscape is significantly non-convex in such discrete tasks because the continuous action goes through a nearest-neighbor step to select a discrete item.
Thus, sampling more neighbors in a local neighborhood via \textcolor[HTML]{F07AB5}{Wolpertinger} is better than \textcolor[HTML]{448966}{TD3}'s single action in \myfig{fig:baseline_results}.
However, the optimal action is not necessarily near the initial guess.
Therefore, SAVO achieves the best performance by directly addressing global non-convexity.
In restricted locomotion with a discontinuous action space, SAVO's actors can search far separated regions to optimize the Q-landscape better than only nearby sampled actions.
Appendix Figure~\ref{fig:app-ablation} ablates SAVO in all 7 environments and shows that the most critical features are its successive nature, removing policy smoothing, and approximate surrogates.

\vspace{-5pt}
\subsection{Q-Landscape Analysis: Do successive surrogates reduce local optima?}
\label{sec:landscape}

\begin{figure}[t]
    \centering
    \begin{subfigure}[t]{0.9\linewidth}
        \includegraphics[width=\linewidth]{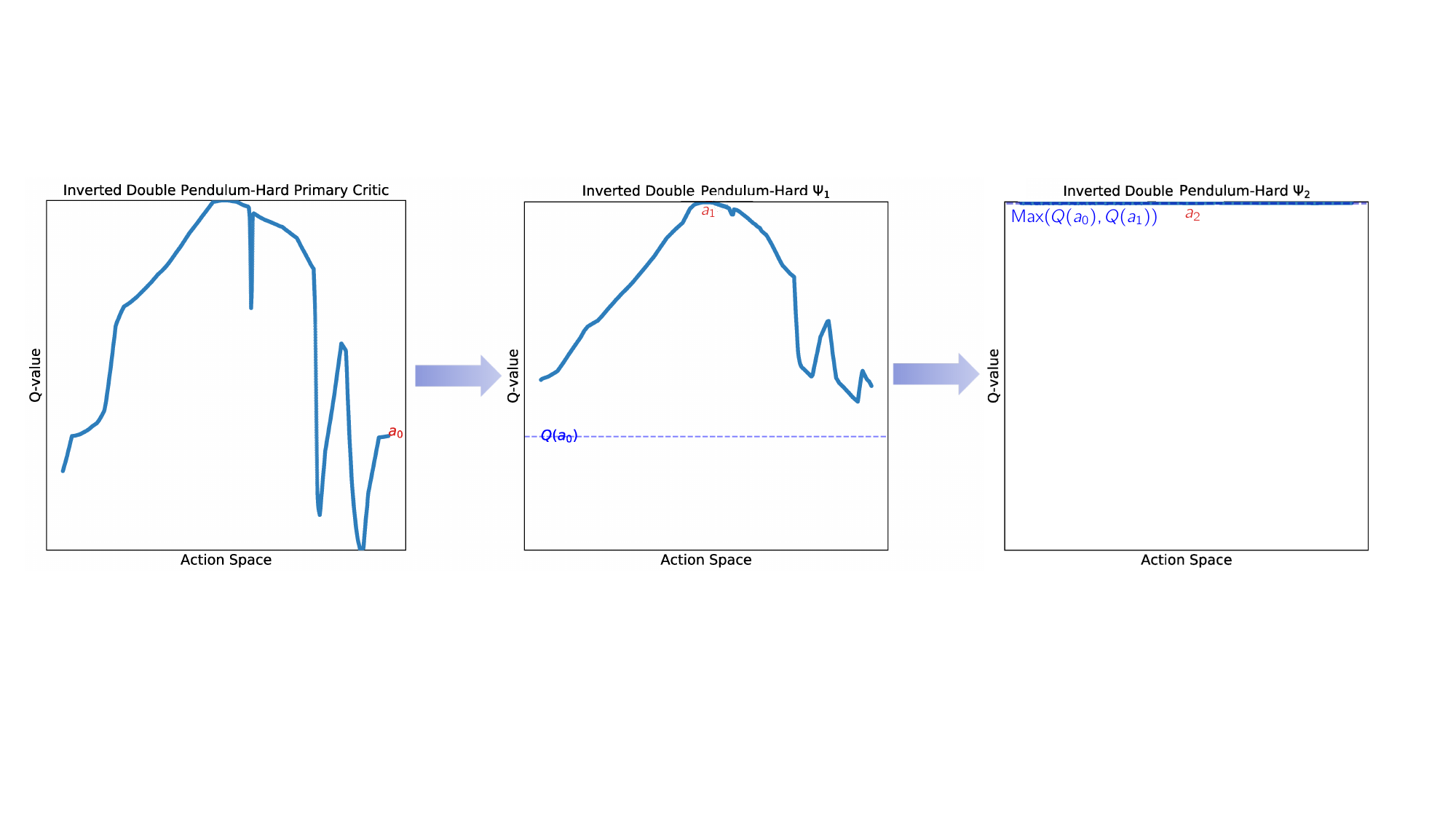}
    \vspace{-10pt}
    \vspace{-10pt}
    \end{subfigure}
    \begin{subfigure}[t]{0.3\linewidth}
        \vspace{-10pt}
        \caption{$Q(s, a_0)$}
    \end{subfigure}
    \begin{subfigure}[t]{0.3\linewidth}
        \vspace{-10pt}
        \caption{$\hat{\Psi}_1(s, a_1 ; a_0)$}
    \end{subfigure}
    \begin{subfigure}[t]{0.3\linewidth}
        \vspace{-10pt}
        \caption{$\hat{\Psi}_2(s, a_2 ; \{a_0, a_1\})$}
    \end{subfigure}
    \vspace{-5pt}
    \caption{
    Each successive surrogate learns a Q-landscape with fewer local optima and thus is easier to optimize by its actor. SAVO helps a single actor escape the local optimum $a_0$ in Inverted Pendulum.
    }
    \label{fig:landscape}
        \vspace{-5pt}
\end{figure}

\begin{figure}[t]
    \centering
    \begin{subfigure}[t]{0.325\linewidth}
        \includegraphics[width=\linewidth]{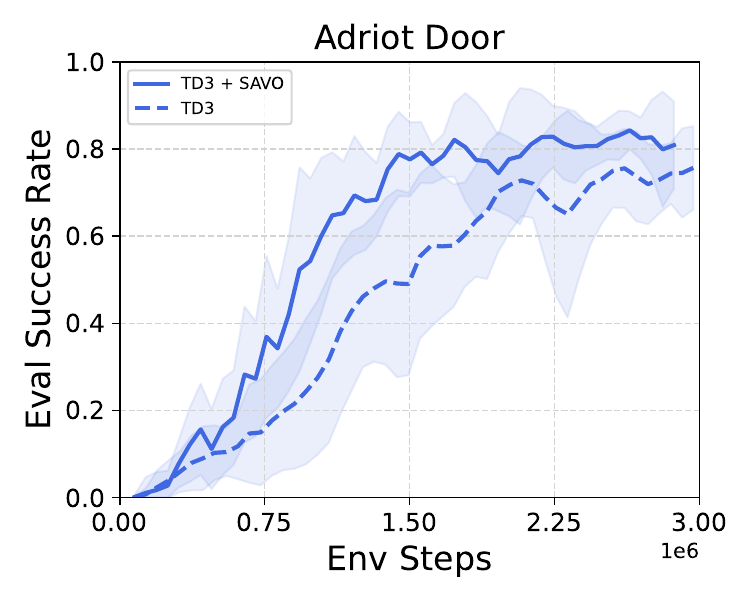}
        \vspace{-20pt}
    \end{subfigure}
    \begin{subfigure}[t]{0.325\linewidth}
        \includegraphics[width=\linewidth]{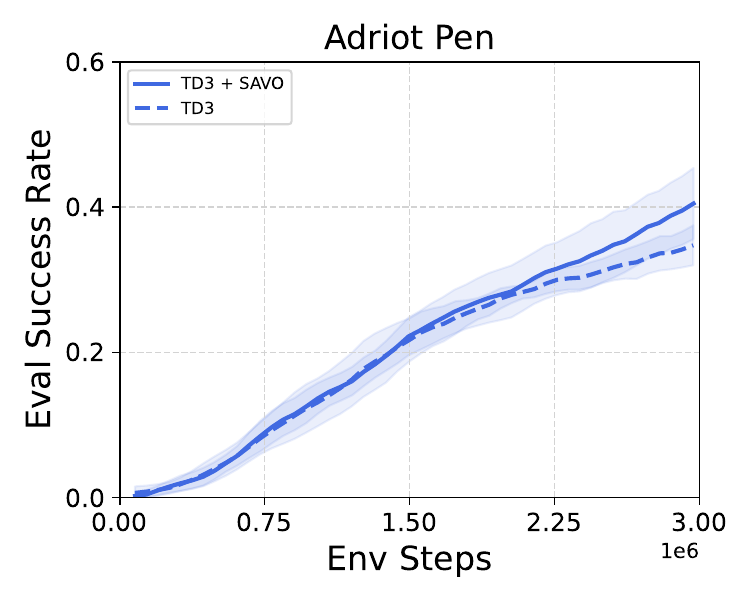}
        \vspace{-20pt}
        \end{subfigure}
    \begin{subfigure}[t]{0.325\linewidth}
        \includegraphics[width=\linewidth]{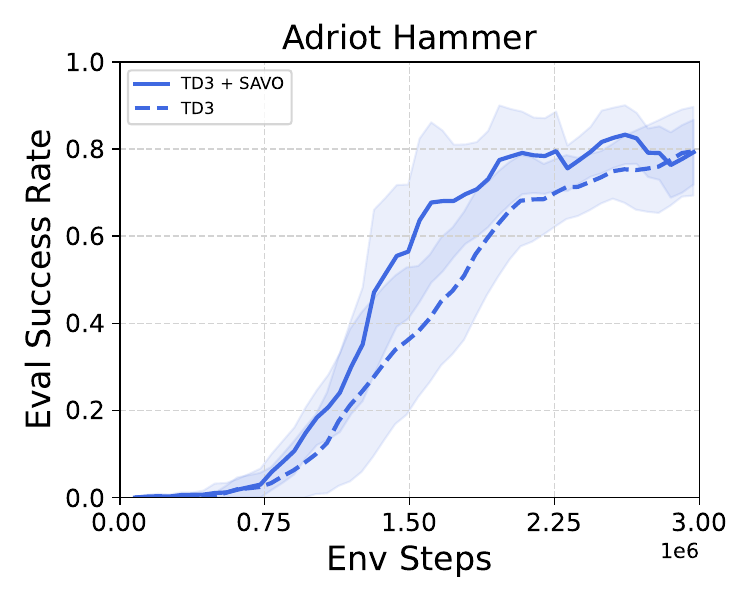}
        \vspace{-20pt}
    \end{subfigure}
    \vspace{-5pt}
    \caption{
    TD3 is improved with SAVO on Adroit dexterous manipulation tasks.
    }
    \label{fig:exp-adroit}
\end{figure}

\begin{figure}[t]
    \centering
    \includegraphics[width=0.32\textwidth]{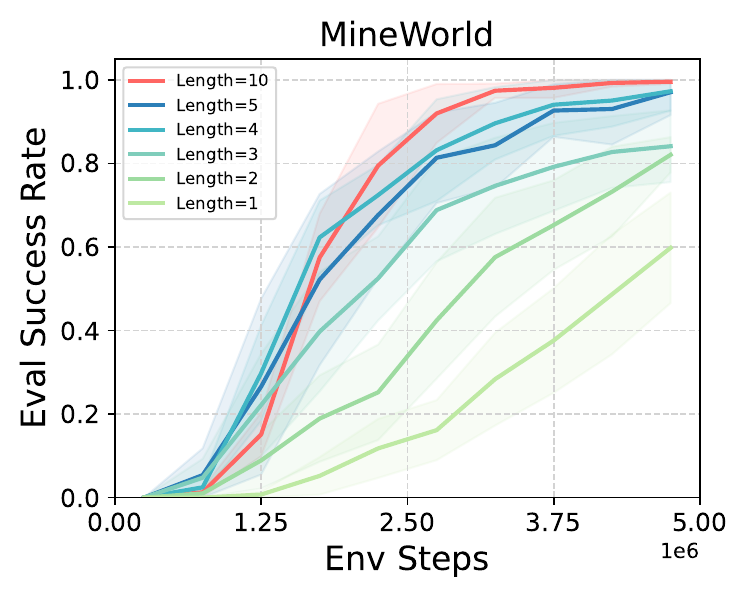}
    \includegraphics[width=0.32\textwidth]{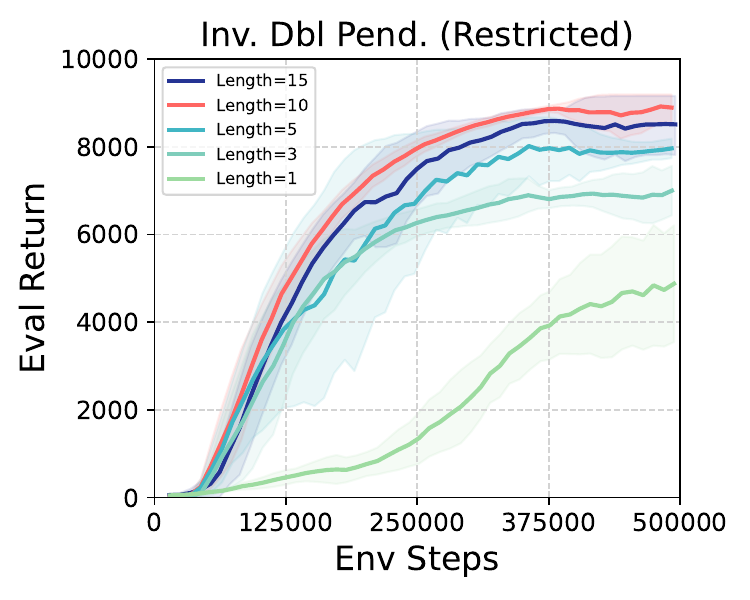} 
    \includegraphics[width=0.32\textwidth]{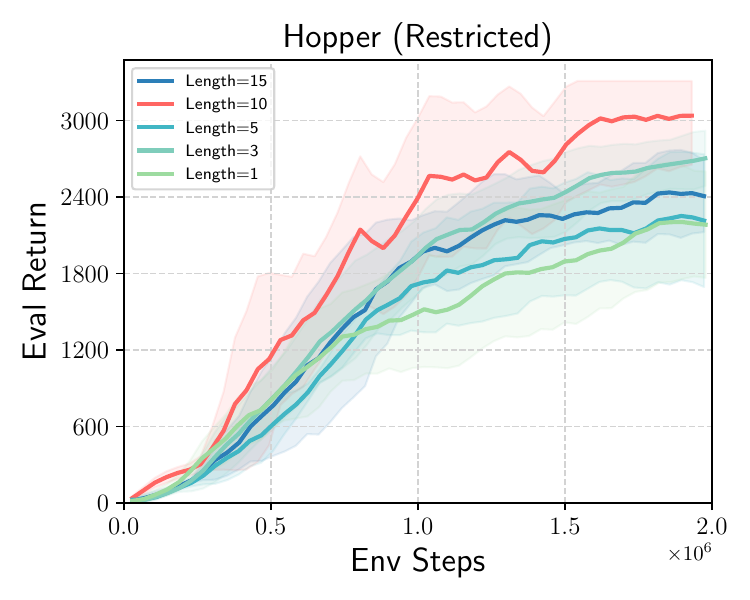}
    \includegraphics[width=0.32\textwidth]{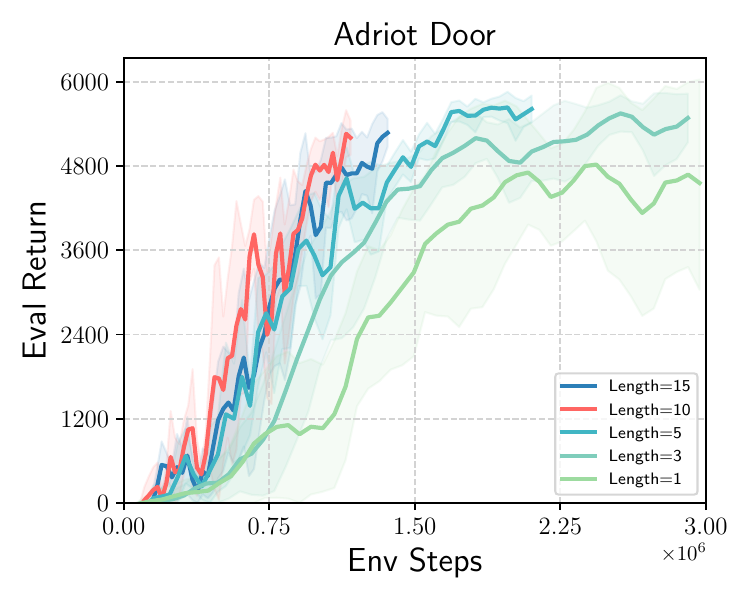}
    \includegraphics[width=0.32\textwidth]{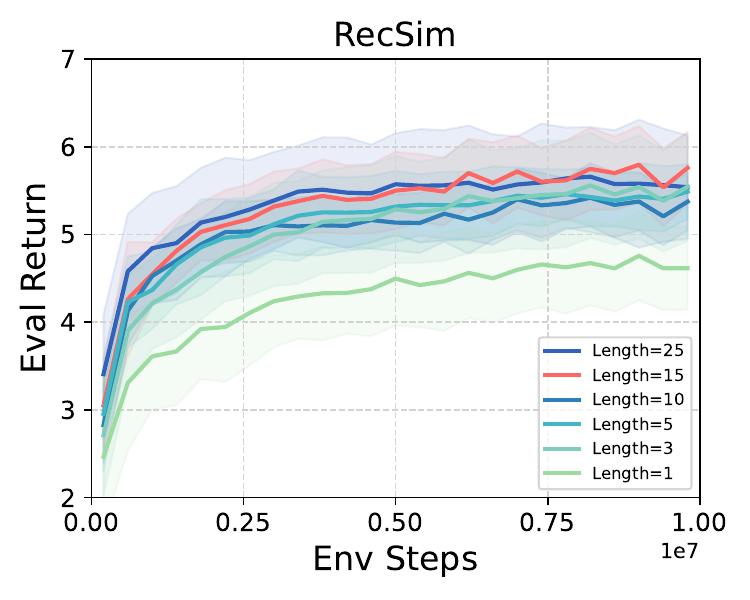}
    \caption{
    \rebuttal{
    SAVO's improvement scales well when additional actor-surrogates are added until its performance saturates and completely mitigates the suboptimality of TD3. While the gains are diminishing beyond 3-5 actors in the environments we considered, 10 actors are mostly enough to produce optimal performance (shown in red). For RecSim, which is an especially non-convex Q-landscape because of 10,000 actions and a 45-D action representation space, we note that increasing to 15 actors achieves the optimal performance.
    }
    }
    
    \label{app:fig:num_actors}
\end{figure}

In Figure~\ref{fig:landscape}, we visualize the surrogate landscapes in Inverted Pendulum-Restricted for one state $s$.
Due to successive pruning and approximation, the Q-landscapes become smoother with reduced local optima.
A single actor gets stuck in a severe local optimum $a_0$. However, surrogate $\Psi_1$ utilizes $a_0$ as an anchor and finds a better (global) optimum $a_1$.
The maximizer policy selects $a_0, a_1, \text{or } a_2$, whichever has the highest Q-value.
Appendix~\myfig{fig:q_spaces_full_envs_easy} shows that convex Q-landscapes are easily optimized, while \myfig{fig:q_spaces_full_envs_hard} shows how SAVO successfully optimizes the non-convex Q-landscapes in all other tasks. Further analysis can be found in Appendix \ref{app:visualisation-q-landscape-2}.

\vspace{-5pt}
\subsection{Challenging Dexterous Manipulation (Adroit)}
\label{sec:adroit}
\vspace{-5pt}

In Adroit~\citep{rajeswaran2017learning} dexterous manipulation on Door, Pen, and Hammer, we compared SAVO to TD3 ~\citep{fujimoto2018addressing} and observed that SAVO successfully addressed the Q-landscape challenges in TD3 algorithm (\myfig{fig:exp-adroit}) and TD3 has been improved with SAVO.

\vspace{-5pt}
\subsection{Quantitative Analysis: The Effect of Successive Actors and Surrogates}
\vspace{-5pt}
We investigate the effect of increasing the number of successive actor-surrogates in SAVO
in Figure~\ref{app:fig:num_actors}. Additional actor-surrogates significantly help to reduce severe local optima initially. However, the improvement saturates as the suboptimality gap reduces. \rebuttal{While we still report main SAVO results using 3 actors, SAVO significantly improves with 10 actors (\myfig{app:fig:num_actors}, \myfig{app:fig:baseline_num_actors}) across tasks.}

\subsection{Further experiments to validate SAVO}
\begin{itemize}[itemsep=0pt, topsep=0pt, partopsep=0pt, left=0pt]
\item \textbf{Baseline Optimization.}
\myfig{fig:app-mujoco-easy} shows that baselines are fairly optimized, on par with SAVO on tasks with a simple Q-landscape.
Hyperparameter optimization details are discussed in \mysecref{app:sec:hyperparameters}.
\item \textbf{SAVO orthogonal to SAC.}
\myfig{fig:app-sac-td3-mjc} shows that SAVO+TD3 > SAC > TD3; thus, SAC's stochastic policy does not address TD3's non-convexity. \rebuttal{In fact, SAC also suffers from local optima (\mysecref{app:sec:sac}, \myfig{app-fig:savo_sac_vs_sac}) that SAVO+SAC mitigates successfully in unrestricted Ant-v4 and Half-Cheetah-v4}.
\item \textbf{Design Choices.} Figure~\ref{fig:app-list-summariser} shows that LSTM, DeepSet, and Transformers are all valid choices as summarizers of successive actions $a_{<i}$ in SAVO.
\myfig{fig:app-film} shows that FiLM conditioning on $a_{<i}$ helps in discrete action spaces, but affects continuous action space less.
For exploration, we compared Ornstein-Uhlenbeck (OU) noise and Gaussian noise and found them to be largely equivalent across all baselines (\myfig{fig:app-noise-type}).
\rebuttal{In \mysecref{sec:app:specialized_initialization}, we tried specialized initializations to enforce diversity in the SAVO's actors and surrogates but did not observe major gains.}
\item \textbf{Massive Discrete Actions.} SAVO also improves in RecSim-100k and RecSim-500k (Figure~\ref{app-fig:vary-complexity}).
\item \textbf{Resetting baselines.} SAVO outperforms resetting techniques~\citep{nikishin2022primacy,kim2024sample} in addressing local optima, as shown in Figure~\ref{fig:app-reset-mine}.
\end{itemize}

\Skip{

\vspace{5pt}
\subsection{Analysis: Varying Complexity of Action Spaces}
We test the robustness of SAVO on increasing complexity of action spaces in Fig.~\ref{fig:vary-complexity}. In Mine World (left), we vary the action representation dimensionality $10\to30\to60$, and in RecSim (right), we vary the size of the action space $10k\to50k\to100k$.
SAVO consistently outperforms TD3+Sampling, while maintaining robust performance over increasing action complexity. Appendix~\ref{app:sec:consistency-test} shows SAVO consistently outperforming baselines under higher action space dimensionality and size.

\vspace{5pt}
\subsection{Validating Design Choices}
\subsubsection{Choice of Critic Architecture}  \label{sec:exp-cr-arch}
We experiment with different ways to implement the critic in SAVO (Sec.\ref{sec:approach:SAVO}) to see their impacts on training the cascaded actors of SAVO.
(1) \texttt{Cascaded-Critic (Ours)} is described in Section~\ref{sec:approach:listwise-agent} and consists of critics communicating with the currently built list's encoding.
(2) \texttt{Single-Critic} trains a single shared retrieval critic (like TD3+Sampling) to train all the cascaded actors. This does not account for other actions in the current list, so all the actors are independently trained to maximize their own action values, as opposed to optimizing the entire list's behavior.
(3) Joint-Critic trains a single joint critic (like Joint) that concatenates the list of actions and evaluates a $k \times D$ vector at once. Essentially, this trains the cascaded actors of SAVO with gradients from Joint's critic. Like the Joint ablation, this critic is expected to be hard to train (Section~\ref{sec:exp-baselines}).
Fig.\ref{fig:ablation} (left) shows that Cascaded Critics (Ours) consistently outperforms the other two variants in both MineWorld and RecSim, validating these insights. More detailed tuning to the critics can be found in Appendix \ref{app:sec:listwise-tuning}.

\subsubsection{Choice of List Encoder} \label{sec:exp-list-encoder}
Fig.~\ref{fig:ablation} (right) compares the performance of SAVO for different choices of the list encoder (details in Appendix~\ref{app:network-list-encoders}) on MineWorld and RecSim.
We observe that DeepSet (Ours)~\citep{zaheer2017deep} slightly outperforms transformer~\citep{vaswani2017attention} and Bi-LSTM~\citep{huang2015bidirectional} list encoders, likely due to the order invariant nature of action retrieval task and DeepSet's simplicity.

}

\begin{wraptable}[5]{r}{0.5\textwidth}
\vspace{-20pt}
\vspace{-20pt}
  \begin{center}
    \begin{tabular}{llll}
    Method & GPU Mem. & Return & Time \\
    \midrule
    TD3 & 619MB & 1107.795 & 0.062s \\
    SAVO k=3 & 640MB & 2927.149 & 0.088s \\
    SAVO k=5 & 681MB & 3517.319 & 0.122s \\
    \end{tabular}
    \caption{Compute v/s Performance Gain}
\label{app:tab:table_performance_budget_tradeoff}
  \end{center}
\end{wraptable}

\section{Limitations and Conclusion}
\label{sec:limitations}
Introducing more actors in SAVO has negligible influence on GPU memory, but leads to longer inference time (Table~\ref{app:tab:table_performance_budget_tradeoff}). However, even for 3 actor-surrogates, SAVO achieves significant improvements in all our experiments. Further, for tasks with a simple convex Q-landscape, single actors do not get stuck in local optima, reducing the improvements with SAVO. In conclusion, we improve Q-landscape optimization in actor-critic RL with Successive Actors for Value Optimization (SAVO) in both continuous and large discrete action spaces. We demonstrate with quantitative and qualitative analyses how the improved optimization of Q-landscape with SAVO leads to better sample efficiency and performance.

\bibliography{text/references}
\bibliographystyle{rlj}

\beginSupplementaryMaterials

\begingroup
\hypersetup{pdfborder={0 0 0}}

\part{} %
\parttoc %

\endgroup

\section{Reproducibility}
With the aim of promising the reproducibility of our results, 
we have attached our code in the supplementary materials, which contain all environments and all baseline methods we report in the paper. The specific commands to reproduce all baselines across all environments are available in README. 
We have also included all relevant hyperparameters and additional details on how we tuned each baseline method in Appendix Table~\ref{app:tab:hyperparams_env}.

\section{Proof of Convergence of Maximizer Actor in Tabular Settings}
\label{app:sec:convergence_proof}
\begin{theorem}[Convergence of Policy Iteration with Maximizer Actor]
\label{thm:maximizer_policy_iteration}
Consider a modified policy iteration algorithm where, at each iteration, we have a set of $k+1$ policies $\{\nu_0, \nu_1, \dots, \nu_k\}$, with $\nu_0 = \mu$ being the current policy learned with DPG under its assumptions of finite states, continuous actions, and regularity conditions. We define the \emph{maximizer actor} $\mu_M$ as:
\begin{align}
\mu_M(s) = \arg\max_{a \in \{\nu_0(s), \nu_1(s), \dots, \nu_k(s)\}} Q(s, a).
\end{align}
In the tabular setting, the modified policy iteration algorithm using the maximizer actor converges to the locally optimal policy.
\end{theorem}

\begin{proof}

\subsection{Policy Evaluation Converges}
Given a deterministic policy $\pi$ (in our case $\pi = \mu_M$), the policy evaluation computes the action-value function $Q^{\pi}$, which satisfies the Bellman equation:
\[
Q^{\pi}(s, a) = R(s, a) + \gamma \sum_{s'} P(s, a, s') Q^{\pi}(s', \pi(s')).
\]

In the tabular setting, the Bellman operator $\mathcal{T}^{\pi}$ defined by
\[[\mathcal{T}^{\pi} Q](s, a) = R(s, a) + \gamma \sum_{s'} P(s, a, s') Q(s', \pi(s'))
\]

is a contraction mapping with respect to the max norm $\| \cdot \|_\infty$ with contraction factor $\gamma$:
\[
\| \mathcal{T}^{\pi} Q - \mathcal{T}^{\pi} Q' \|_\infty \leq \gamma \| Q - Q' \|_\infty.
\]

Thus, iteratively applying $\mathcal{T}^{\pi}$ starting from any initial $Q_0$ converges to the unique fixed point $Q^{\pi}$.

\subsection{Policy Improvement with DPG and Maximizer Actor}

At iteration $n$, suppose we have a policy $\mu_n$.

\textbf{Step 1: Policy Evaluation}

Compute $Q^{\mu_n}$ by solving:

\[
Q^{\mu_n}(s, a) = R(s, a) + \gamma \sum_{s'} P(s, a, s') Q^{\mu_n}(s', \mu_n(s')).
\]

\textbf{Step 2: Policy Improvement}

(a) \emph{DPG Update}

Perform a gradient ascent step using the Deep Policy Gradient (DPG) method to obtain an improved policy $\tilde{\mu}_{k+1}$:
\[
\tilde{\mu}_{k+1}(s) = \mu_n(s) + \alpha \nabla_a Q^{\mu_n}(s, a)\big|_{a = \mu_n(s)},
\]
where $\alpha > 0$ is a suitable step size.

This DPG gradient step leads to local policy improvement following over $\mu_n$~\citep{silver2014deterministic}:
\[
V^{\tilde{\mu}_{k+1}}(s) \geq V^{\mu_n}(s), \quad \forall s \in \mathcal{S}.
\]

(b) \emph{Maximizer Actor}

Given additional policies $\nu_1, \dots, \nu_k$, define the maximizer actor $\mu_{n+1}$ as:

\[
\mu_{n+1}(s) = \arg\max_{a \in \{\tilde{\mu}_{k+1}(s), \nu_1(s), \dots, \nu_k(s)\}} Q^{\mu_n}(s, a).
\]

Since $\mu_{n+1}(s)$ selects the action maximizing $Q^{\mu_n}(s, a)$ among candidates, we have:

\[
Q^{\mu_n}(s, \mu_{n+1}(s)) = \max_{a} Q^{\mu_n}(s, a) \geq Q^{\mu_n}(s, \tilde{\mu}_{k+1}(s)) \geq V^{\mu_n}(s).
\]

By the Policy Improvement Theorem, since $Q^{\mu_n}(s, \mu_{n+1}(s)) \geq V^{\mu_n}(s)$ for all $s$, it follows that:

\[
V^{\mu_{n+1}}(s) \geq V^{\mu_n}(s), \quad \forall s \in \mathcal{S}.
\]

Thus, the sequence $\{ V^{\mu_n} \}$ is monotonically non-decreasing.

\subsection*{Convergence of Policy Iteration}

Since $\{ V^{\mu_n} \}$ is bounded above by $V^*$ (the optimal value function), it converges. In a finite MDP, there are only finitely many possible policies. Thus, the sequence $\{ \mu_n \}$ must eventually repeat, and because each policy improvement is non-decreasing, the policies stabilize at an optimal policy $\mu^*$.

\end{proof}

Theorem~\ref{thm:convergence_maximizer_policy_iteration} demonstrates how the maximizer actor at least improves over the DPG policy. Yet, there is no guarantee that this achieves the global optimum because of the implicit dependence on the policy gradient algorithm. Therefore, Theorem~\ref{thm:convergence_maximizer_policy_iteration}  is limited to showing convergence in the presence of a maximizer actor to a local optimum and shows that our algorithm is a stable RL algorithm — not one that is globally optimal. It also shows how the locally optimal policy learned with a maximizer actor might improve the locally optimal policy learned by DPG.

\section{Proof of Reducing Number of Local Optima in Successive Surrogates}

\begin{theorem}
\label{app:thm:surrogate}
    Consider a state $s \in \mathcal{S}$, $Q$ in Eq.~\ref{eq:dpg_critic}, and $\Psi_i$ in Eq.~\ref{eq:surrogate_definition}. Let $N_\text{opt}(f)$ be the number of local optima (assumed countable) of a function $f: \mathcal{A} \to \mathbb{R}$, where $\mathcal{A}$ is the action space. Then,
    \[
    N_\text{opt}(Q(s, a)) \geq N_\text{opt}(\Psi_0(s, a; \{ a_0 \})) , \dots, \geq N_\text{opt}(\Psi_k(s, a; \{ a_0, \dots, a_k\}))
    \]
\end{theorem}
\begin{proof}

Consider two consecutive surrogate functions $\Psi_i(s, a; \{ a_0, \dots, a_i\})$ and $\Psi_{i+1}(s, a; \{ a_0, \dots, a_{i+1}\})$,
\begin{align*}
    \Psi_i(s, a; a_{<i}) =  \max \left\{ Q(s, a), \max_{j < i} Q(s, a_j) \right\},\\
    \Psi_{i+1}(s, a; a_{<i+1}) =  \max \left\{ Q(s, a), \max_{j < i+1} Q(s, a_j) \right\},
\end{align*}

Let $\tau_i = \max_{j < i} Q(s, a_j)$ and $\tau_{i+1} = \max_{j < i+1} Q(s, a_j)$.

Consider a given state $s$ and any particular local optimum in $\Psi_i$ at $a'$, there can be two cases:
\begin{enumerate}
    \item If Q(s, a') > $\tau_{i+1}$, then $\Psi_{i+1}(s, a'; a_{<i+1}) = Q(s, a')$.

    Since, a' is a local optimum of $\Psi_i$, there exists $\epsilon > 0$ $\Psi_{i}(s, a' \pm \epsilon; a_{<i}) = Q(s, a' \pm \epsilon) < \Psi_{i}(s, a'; a_{<i}) = Q(s, a') $

    Therefore, $\Psi_{i+1}(s, a' \pm \epsilon; a_{<i+1}) = Q(s, a' \pm \epsilon) < \Psi_{i+1}(s, a'; a_{<i+1}) = Q(s, a') $
    Thus, a' is also a local optimum of $\Psi_{i+1}$.

    \item If $Q(s, a') \leq \tau_{i+1}$, then $\Psi_{i+1}(s, a'; a_{<i+1}) = \tau_{i+1}$, and there exists $\epsilon > 0$, such that $\Psi_{i+1}(s, a' \pm \epsilon; a_{<i+1}) = \tau_{i+1}$.
    Thus, a' is \emph{not} a local optimum of $\Psi_{i+1}$

\end{enumerate}
Finally, $\Psi_{i+1}$ does not add any new local optima, because $\tau_{i+1} \geq \tau_{i}$ and thus all points where $\Psi_{i+1}(s, a; a_{<i+1}) = Q(s, a)$, we have $\Psi_{i}(s, a; a_{<i}) = Q(s, a)$.
Therefore $\forall i \geq 1$,
\[
    N_\text{opt}(\Psi_i(s, a; \{ a_0, \dots, a_i\})) \geq N_\text{opt}(\Psi_{i+1}(s, a; \{ a_0, \dots, a_{i+1}\})
\]

The same analysis extends for $Q$ and $\Psi_1$, by substituting $\tau_0 < \min{Q}$ to be a very small value. Thus, by induction, we have,
    \[
    N_\text{opt}(Q(s, a)) \geq N_\text{opt}(\Psi_0(s, a; \{ a_0 \})) , \dots, \geq N_\text{opt}(\Psi_k(s, a; \{ a_0, \dots, a_k\}))
    \]
\end{proof}

\section{Environment Details}
\label{app:sec:environment_details}

\begin{figure*}[ht!]
    \centering
    \includegraphics[width=\linewidth]{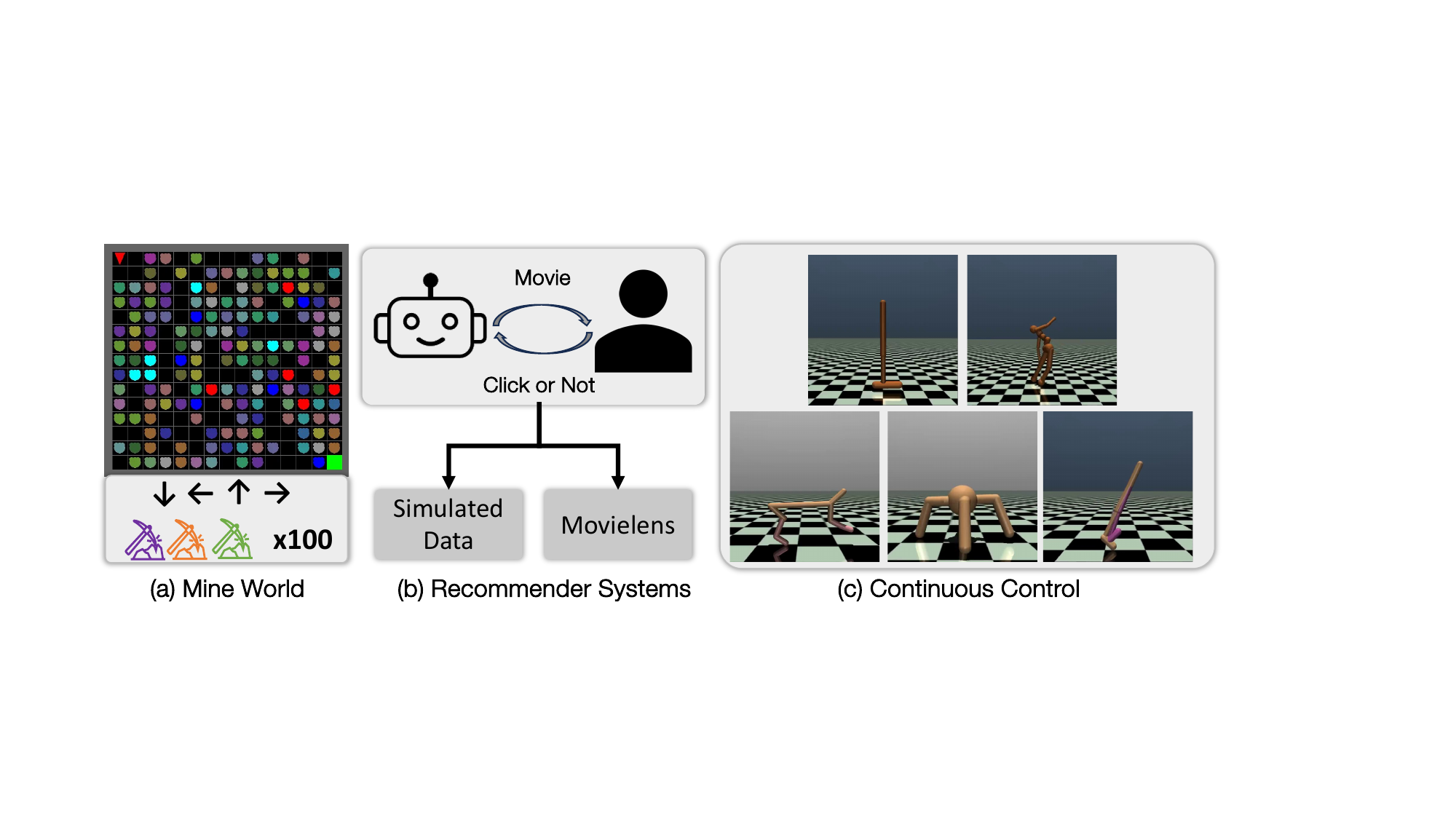}
    \caption{
    This figure provides the visual description of the environment setup.
    }
\vspace{-5pt}
\vspace{-5pt}
    \label{fig:experiment_setup}
\end{figure*}

\subsection{MiningEnv}
The grid world environment, introduced in Sec.~\ref{sec:environments}, requires an agent to reach a goal by navigating a 2D maze as soon as possible while breaking the mines blocking the way.

\textbf{State}: The state space is an 8+K dimensional vector, where K equals to \textit{mine-category-size}. This vector consists of 4 independent pieces of information: Agent Position, Agent Direction, Surrounding Path, and Front Cell Type.
\begin{enumerate}
    \item Agent Position: Agent Position occupies two dimensional of the vector. The first dimension represents the x-axis value, and the second one represents the y.
    \item Agent Direction: It only takes one channel with value [0, 1, 2, 3]. Each number represents one direction, and they are 0-right, 1-down, 2-left, and 3-up.
    \item Surrounding Path: This information takes four channels. Each represents whether the cell in that direction is an empty cell or a goal.
    \item Front Cell Type: This information is in one-hot form and occupies the last K + 1-dimensional vector, which provides the information of which kind of mine is in front of the agent. 
    If the front cell is an empty cell or the goal, the $(K+1)^{th}$ channel will be one, and others remain to be zero
\end{enumerate}
Ultimately, we will normalize each dimension to [0, 1] with each dimension's minimum/maximum value. Each time we reset the environment, the layout of the whole grid world will be changed, except for the agent start position and the goal position.

\textbf{Termination}:
An episode is terminated in success when the agent reaches the goal or after a total of 100 timesteps.

\textbf{Actions}: The base action set combines two kinds of actions: navigation actions and pick-axe(tool) actions.
The navigation action set is a fixed set, which contains four independent actions: going up, down, left, and right, corresponding with the direction of the agent. They will change the agent's direction first and then try to make the agent take one step forward. Note that, different from the empty cell, the agent cannot step onto the mine, which means that if the agent is trying to take a step towards a mine or the border of the world, then the agent will stay in the same location while the direction will still be changed. Otherwise, the agent can step onto that cell. An agent will succeed if it reaches the goal position.
The size of the pick-axe action set is equal to 50. Each tool has a one-to-one mapping, which means they can and only can be successfully applied to one kind of mine, and either transform that kind of mine into another type of mine or directly break it.

\textbf{Reward}: The agent receives a large goal reward for reaching the goal. The goal reward is discounted based on the number of action steps taken to reach that location, thus rewarding shorter paths. To further encourage the agent to reach the goal, a small exploration reward is added whenever the agent gets closer to the goal, and a negative equal penalty is added whenever the agent gets further to the goal. And also, when the agent successfully applies a tool, it will gain a small reward. When the agent successfully breaks a mine, it will also gain a small bonus.

\begin{equation}
\begin{split}
    R(s, a) \: = \: & \mathbbm{1}_{Goal} \cdot R_{\text{Goal}} \left( 1 - \lambda_{\text{Goal}} \frac{N_{\text{current steps}}}{N_{\text{max steps}}} \right)  \: + \\
    & R_{\text{Step}} \left( D_{\text{distance \: before}} - D_{\text{distance \:after}} \right) \: + \\
    & \mathbbm{1}_{correct \: tool \: applied} 
    \cdot R_{\text{Tool}} \: + \\
    & \mathbbm{1}_{successfully \: break \: mine} 
    \cdot R_{\text{Bonus}} %
\end{split}
\end{equation}
where \:\: $R_{\text{Goal}} = 10, \:\: R_{\text{Step}} = 0.1, \:\: R_{\text{Tool}} = 0.1, \:\: R_{\text{Bonus}} = 0.1, \:\: \lambda_{\text{Goal}} = 0.9, \:\: N_{\text{max steps}} = 100$

\textbf{Action Representations}: The action representations are 4-dimensional vectors manually defined using a mix of number ids, and each dim is scaled to [0, 1]. as shown in Graph~\ref{fig:mining-env}. Dimensions 1 identifies the category of skills (navigation, pick-axe), 2 distinguishes movement skills (right, down, left, up), 3 denotes the mine on which this tool can be successfully applied, and 4 shows the result of applying this tool. We will normalize the action embedding space to [0, 1] for each dimension.

\begin{figure*}[ht!]
    \centering
    \includegraphics[width=0.8\linewidth]{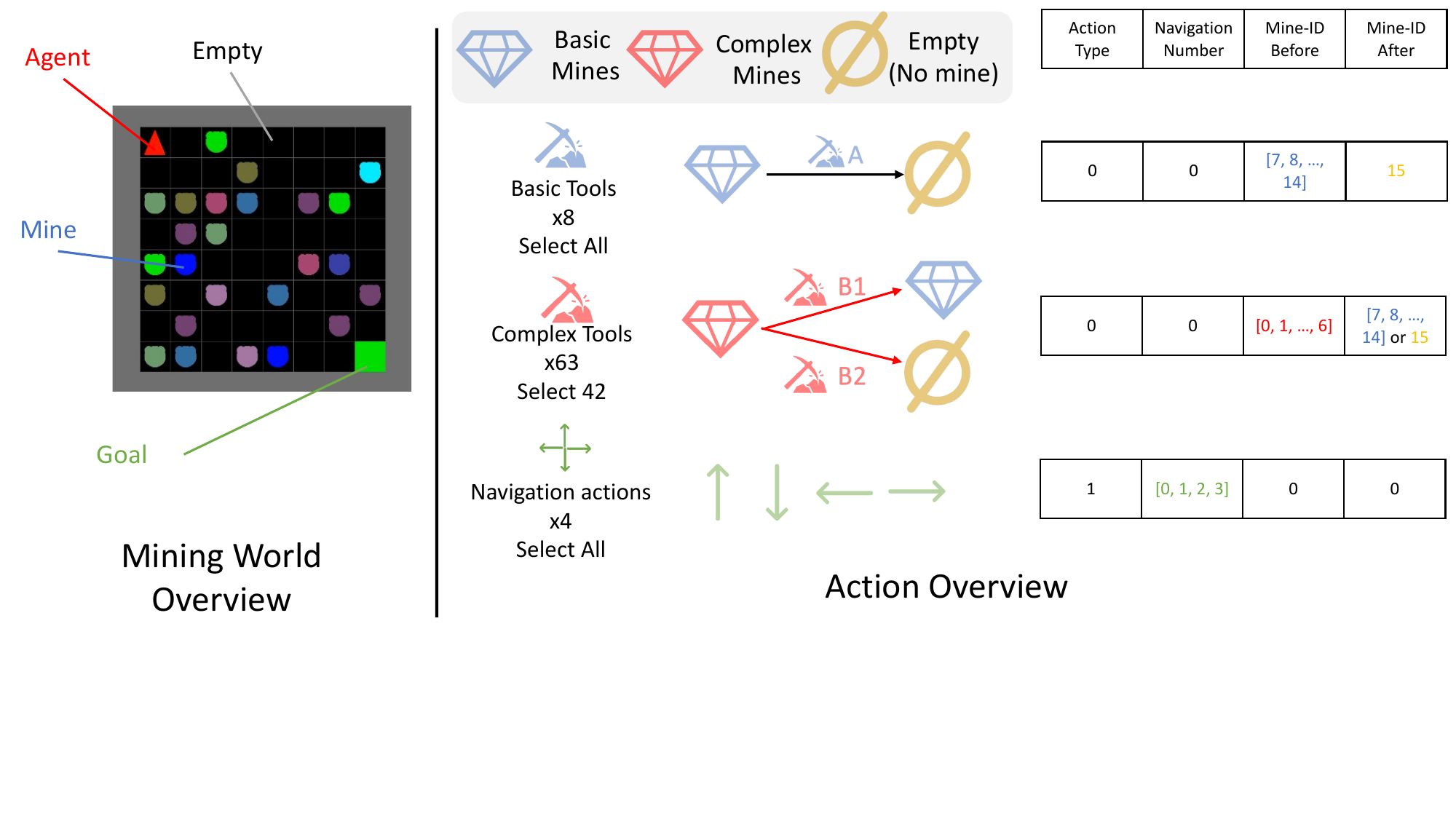}
    \caption{Mining Env Setting Description}
    \label{fig:mining-env}
\end{figure*}

\subsection{RecSim}
\label{app:sec:recsim}

The simulated RecSys environment requires an agent to select an item that match the user's interest out of a large item-set. We simulate users with a dynamically changing preference upon clicks. Thus, the agent's task is to infer this preference from user clicks and recommend the most relevant item to maximize a total number of clicks.

\textbf{State}: The user interest embedding ($e_u \in \mathbb{R}^{n} $ where $n$ denotes the number of categories of items) represents the user interest in categories that transitions over time as the user consumes different items upon click. So, when the user clicks an item with the corresponding item embedding($e_i \in \mathbb{R}^n$; the same $n$ as the one for the user embedding) then the user interest embedding($e_u$) will be updated as follows;
\begin{align*}
    \Delta (e_u) &= (-y | e_u | + y) \cdot (1 - e_u), \text{ for } y \in [0, 1] \\
    e_i &\leftarrow e_u + \Delta (e_u) \text{ with probability} [e_u^{T} e_i + 1] / 2 \\
    e_u &\leftarrow e_u - \Delta (e_u) \text{ with probability} [1 - e_u^{T} e_i] / 2
\end{align*}
This essentially pulls the user's preference towards the item that was clicked.

\textbf{Action}: The action set contains many recommendable items. So, the agent has to find the most relevant item to a user given the item-set. See below regarding how these items are represented.

\textbf{Reward}: The base reward is a simulated user feedback (e.g., clicks). The user model~\citep{ie2019recsim} stochastically skips or clicks the recommended item based on the present user interest embedding ($e_u$). Concretely, the user model computes the following score on the recommended item;

\begin{align*}
    \text{score}_{item} &= \langle e_{u}, e_{i} \rangle \\
    p_{item} &= \frac{ e^{score_{item}} }{ e^{s_{item}} + e^{score_{skip}} } \\
    p_{skip} &= \frac{ e^{score_{skip}} }{ e^{s_{item}} + e^{score_{skip}} }
\end{align*}
where, $e_{u}, e_{i} \in \mathbb{R}^n$ are the user and item embedding, respectively, $\langle \cdot , \cdot \rangle$ is the dot product notation and $score_{skip}$ is a empirically decided hyper-parameter. So, given the score $\text{score}_{item}$ of an item, the user model computes the click likelihood through a softmax function over the recommended item and a predefined skip score. Finally, the user model stochastically selects either click(reward=1) or skip(reward=0) based on the categorical distribution on $[p_{item}, p_{skip}]$.

\textbf{Action Representations}:
Following \cite{jain2021know}, we implement continuous item representations sampled from a Gaussian Mixture Model (GMM) with centers around each item category. In this work, we did not use the sub-category in the category system.

\subsection{Continuous Control}
The MuJoCo \citep{todorov2012mujoco} benchmarking tasks are a set of standard reinforcement learning environments provided by the MuJoCo physics engine. elow is a brief description of some of the commonly used MuJoCo benchmarking tasks:

\textbf{Hopper}: In the Hopper task, you control a one-legged robot that must learn to hop forward while maintaining balance. The agent needs to find an optimal hopping strategy to maximize forward progress.

\textbf{Walker2d}: This task features a two-legged robot that must learn to walk forward. Similar to the Hopper, the agent must maintain balance while moving efficiently.

\textbf{HalfCheetah}: The HalfCheetah task involves a four-legged cheetah-like robot. The objective is for the robot to learn a coordinated gait that allows it to move forward as rapidly as possible.

\textbf{Ant}: In the Ant task, you control a four-legged ant-like robot. The challenge is for the robot to learn to walk and navigate efficiently through its environment.

\subsubsection{Restricted Locomotion in Mujoco} \label{app:exp-mujoco-restricted}
\myfig{fig:hopper3d-action-space} demonstrates "Restricted" locomotion. And here we provide the complete description and justification of the Restricted Mujoco Locomotion tasks below.

\textbf{Justification}:
The restricted locomotion setting in Mujoco limits the range of actions the agent is allowed to perform in each dimension. For instance, the wear and tear of an agent’s hardware can easily cause action space to behave like the one visualized in the attached PDF for Hopper. The mixture-of-hypersphere action space is just one way to simulate such asymmetric restrictions. These restrictions apply to the range of torques applied to the joints of hopper and walker, and on the range of forces applied to pendulums.

\textbf{Complete Description}:

\begin{itemize}[itemsep=0pt, topsep=0pt, partopsep=0pt, left=0pt]
\item \textbf{Restricted Hopper \& Walker}

Invalid action vectors are replaced with 0. Change to environment’s step function code:

\begin{lstlisting}[style=python]
def step(action):
    ...
    if check_valid(action):
        self.do_simulation(action)
    else:
        self.do_simulation(np.zeros_like(action))
    ...
\end{lstlisting}

For reference, the Hopper action space is \(3\)-dimensional, with torque applied to \([ \text{thigh}, \text{leg}, \text{foot} ]\), while the Walker action space is \(6\)-dimensional, with torque applied to \([ \text{right thigh}, \text{right leg}, \text{right foot}, \text{left thigh}, \text{left leg}, \text{left foot} ]\). The implication is that zero torques are exerted for the $\Delta t$ duration between two actions, meaning no torques are applied for \(0.008\) seconds. This effectively slows down the agent's current velocities and angular velocities due to friction.

\item \textbf{Inverted Pendulum \& Inverted Double Pendulum}

Invalid action vectors are replaced with -1. Change in code:

\begin{lstlisting}[style=python]
def step(action):
...
    if not check_valid(action):
        action[:] = -1.
    self.do_simulation(action)
...
\end{lstlisting}

For reference, the action space is \(1\)-dimensional, with force applied on the cart. The implication is that the cart is pushed in the left direction for \(0.02\) (default) seconds. Note that the action vectors are not zeroed because a \(0\)-action is often the optimal action, particularly when the agent starts upright. This would make the learning task trivial, with the optimal strategy being: “learn to select invalid actions”.

\end{itemize}

\begin{figure*}[ht]
    \centering
        \includegraphics[width=0.45\textwidth]{images/exp/hopper-hyperspheres.pdf}
        \caption{Hopper's 3D visualization of Action Space.}
        \label{fig:hopper-action-space}
\end{figure*}

\section{Additional Results}

\subsection{Experiment: Continuous control on Unrestricted Mujoco} \label{app:exp-mujoco}

\begin{figure*}[ht]
    \centering
    \begin{subfigure}[t]{0.45\textwidth}
        \includegraphics[width=\textwidth]{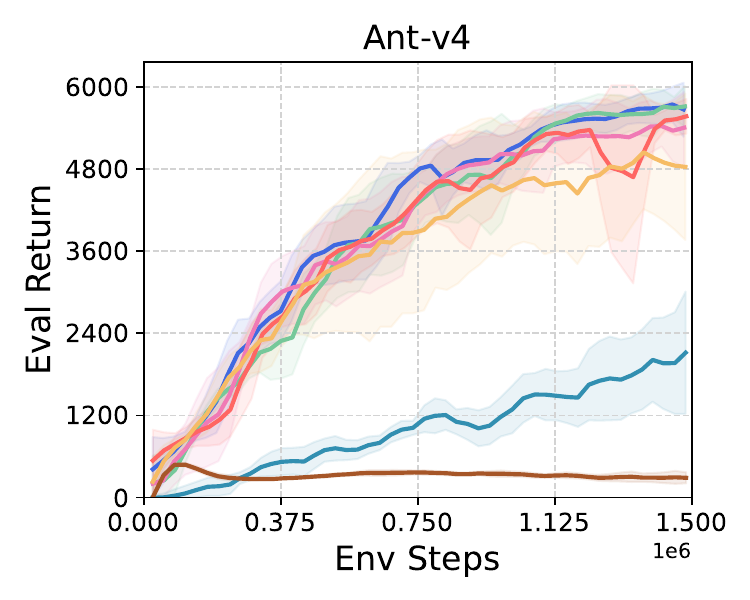}
    \end{subfigure}
    \begin{subfigure}[t]{0.45\textwidth}
        \includegraphics[width=\textwidth]{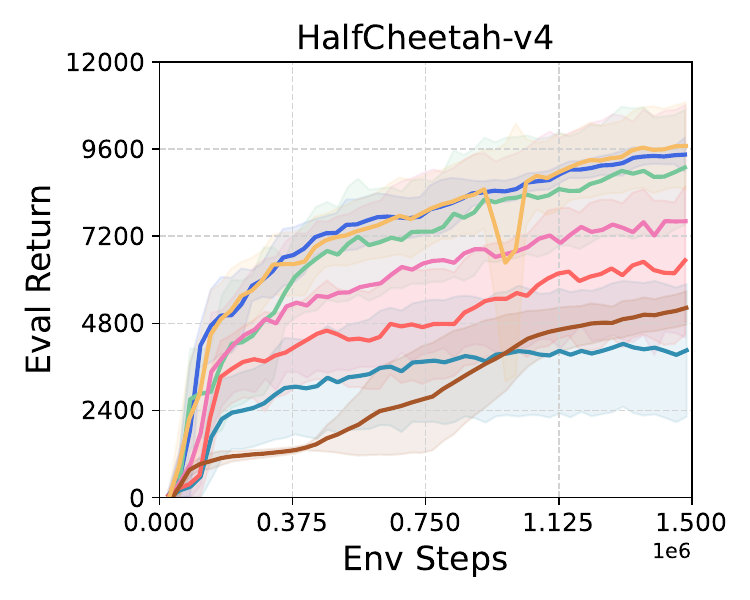}
    \end{subfigure}
    \begin{subfigure}[t]{0.45\textwidth}
        \includegraphics[width=\textwidth]{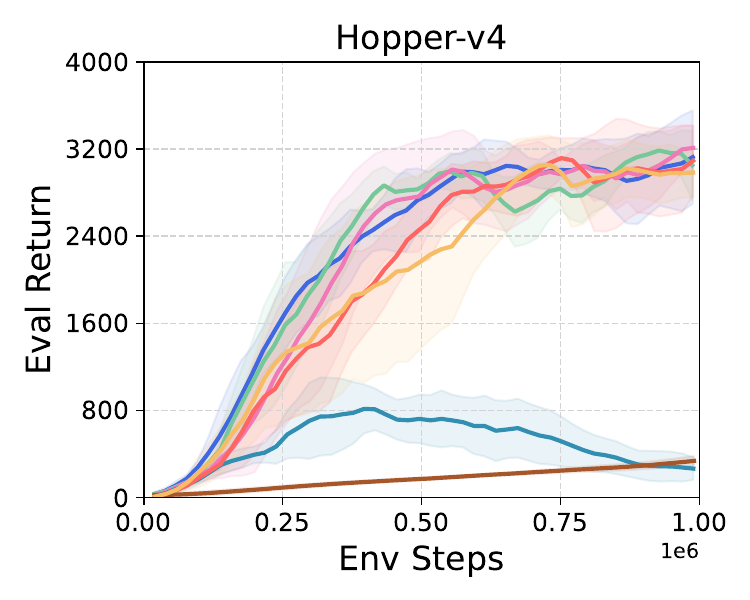}
    \end{subfigure}
    \begin{subfigure}[t]{0.45\textwidth}
        \includegraphics[width=\textwidth]{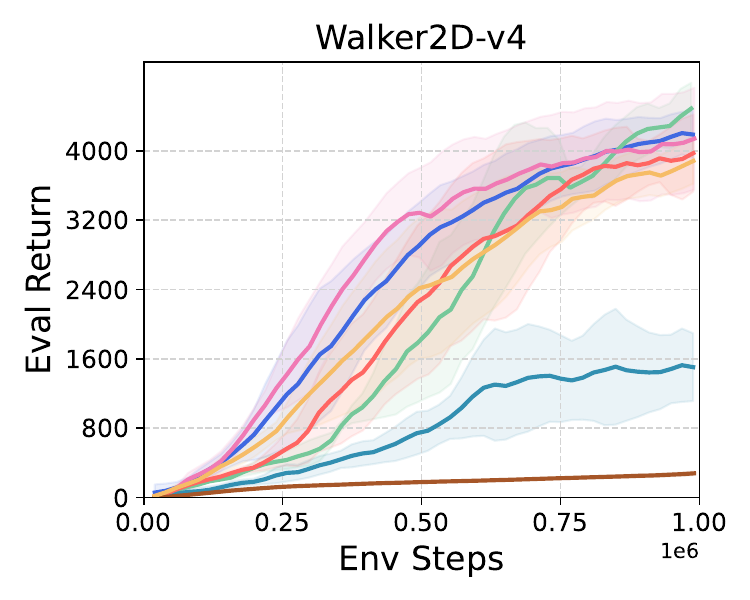}
    \end{subfigure}

        \begin{subfigure}[t]{\linewidth}
    \centering
        \includegraphics[width=\linewidth]{images/exp/legends.pdf}
    \end{subfigure}
\vspace{-5pt}
    \caption{
    \rebuttal{\textbf{TD3 is not suboptimal in Unrestricted Mujoco}.}
    We evaluate SAVO against all baselines in the Unrestricted Mujoco continuous control tasks and show that SAVO is competitive with the baselines that already perform optimally. The reason is investigated in \mysecref{q_value_landscape}, where tasks like Inverted Pendulum-v4 and Hopper-v4 have visibly convex Q-landscapes. Thus, SAVO is not expected to significantly outperform TD3 in these benchmarks.
    }
    \label{fig:app-mujoco-easy}
\end{figure*}
In Mujoco-v4 tasks, the Q-landscape is likely to be easier to optimize than Mujoco-Restricted tasks, and we find that baseline models consistently perform well in all the tasks, unlike Mujoco-Restricted. Based on the performance of SAVO and baselines in \myfig{fig:app-mujoco-easy}, we can infer that,
\begin{enumerate}
    \item The baseline models have sufficient capacity and are well-tuned, as they can solve the standard Mujoco-v4 tasks optimally.
    \item SAVO performs on par with other methods in Mujoco-v4 tasks where the Q-landscape is easier to optimize.
    \item Since SAVO outperforms baseline methods only in Mujoco-Restricted, it demonstrates that the reason of SAVO doing better is the presence of a challenging Q-landscape, such as those shown in Figure~\ref{fig:problem}.
\end{enumerate}

\Skip{
\subsection{Performance Comparison of TD3 and REDQ with SAVO}
In our analysis (Fig.\ref{fig:app-td3-redq-mjc-hard}), we compare the performance of TD3 + SAVO, TD3, REDQ + SAVO, and REDQ across Mujoco-Restricted tasks.
The results indicate that the combination of SAVO with both TD3 and REDQ consistently improves their performance, highlighting the effectiveness of SAVO in enhancing the stability and efficiency of these reinforcement learning algorithms in \textit{Inverted Pendulum} and \textit{Inverted Double Pendulum}.
In \textit{Hopper}, TD3 + SAVO performs the best.

\begin{figure*}[ht]
    \centering
    \begin{subfigure}[t]{0.24\textwidth}
        \includegraphics[width=\textwidth]{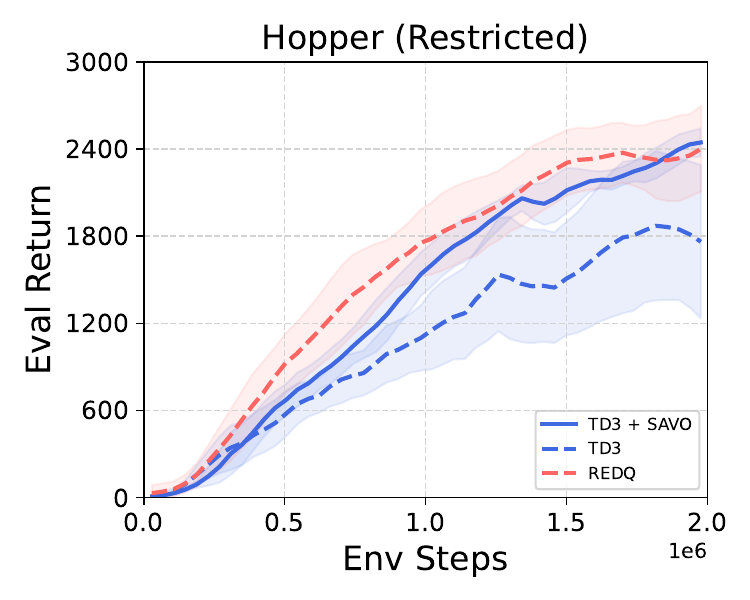}
    \end{subfigure}
    \begin{subfigure}[t]{0.24\textwidth}
        \includegraphics[width=\textwidth]{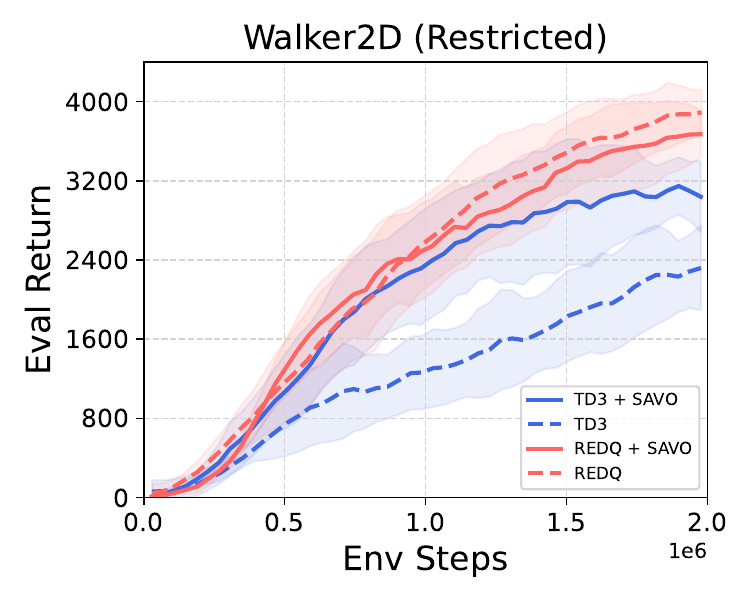}
    \end{subfigure}
    \begin{subfigure}[t]{0.24\textwidth}
        \includegraphics[width=\textwidth]{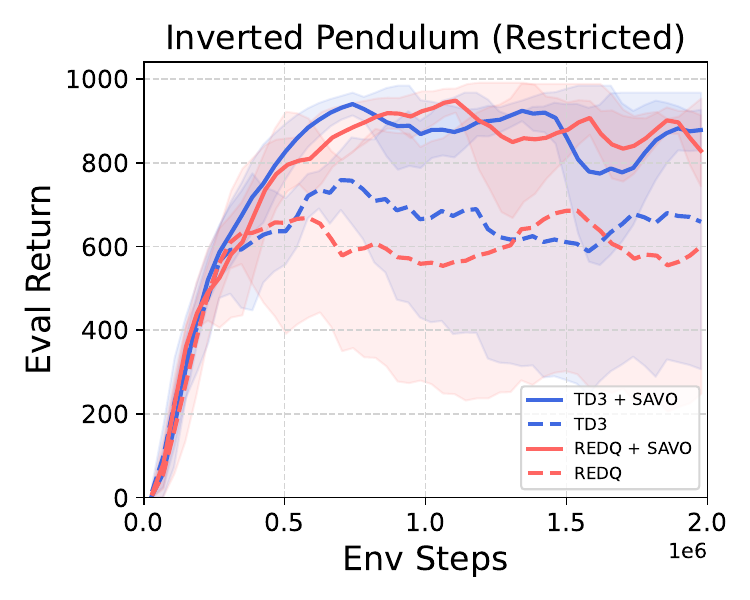}
    \end{subfigure}
    \begin{subfigure}[t]{0.24\textwidth}
        \includegraphics[width=\textwidth]{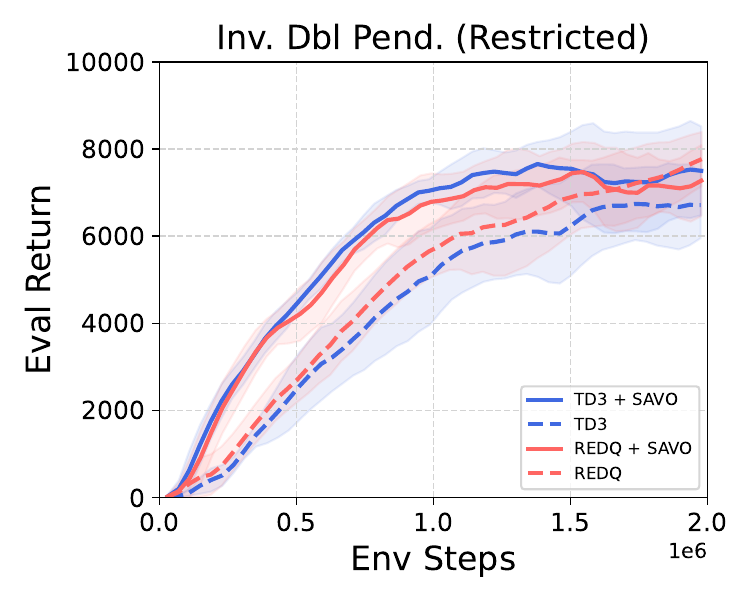}
    \end{subfigure}
\vspace{-5pt}
    \caption{
    Performance comparison of TD3 + SAVO, TD3, REDQ + SAVO, and REDQ across benchmark tasks. The results are averaged over 5 random seeds, and the seed variance is shown with shading.
    }
    \label{fig:app-td3-redq-mjc-hard}
\end{figure*}
}

\subsection{Resetting Baselines}
In this section, we clarify the distinction between primacy bias and the challenge of getting stuck in local optima within Q-landscapes. Primacy bias, as addressed in \cite{nikishin2022primacy,kim2024sample}, occurs when an agent is trapped in suboptimal behaviors from early training, and solutions like resetting (reinitializing the parameters of last few layers) and re-learning from the replay buffer mitigate this by avoiding reliance on initially collected samples.

\begin{figure*}[ht]
    \centering
    \includegraphics[width=0.5\textwidth]{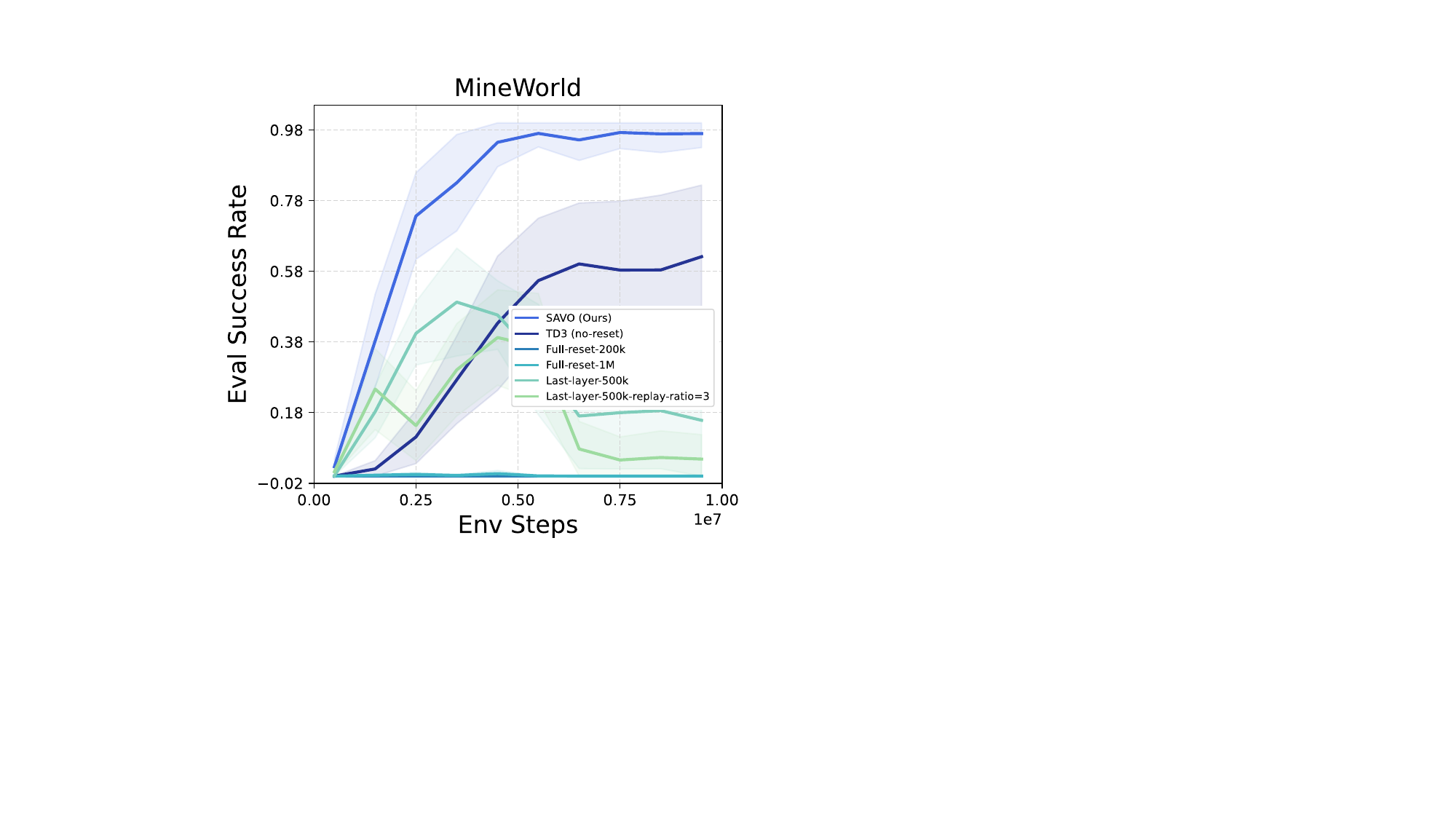}
    \caption{
    Performance comparisons of Resetting baselines averaged over 5 random seeds, and the seed variance is shown with shading.
    }
    \label{fig:app-reset-mine}
\end{figure*}

However, these methods do not reduce the probability of an actor getting stuck in Q-function local optima (the issue we consider in this work).
In fact resetting could cause an otherwise optimal actor to get stuck in suboptima during retraining.
To demonstrate this, we conducted a reset baseline experiment, following \cite{nikishin2022primacy}, on TD3 in \textbf{MineEnv}.
Here, \textit{Full-reset} refers to the \textit{reset all} strategy proposed by \cite{kim2024sample}, while \textit{Last-layer-OOO} corresponds to the approach in \cite{nikishin2022primacy}.
Finally, \textit{TD3 (no reset)} represents the standard TD3 algorithm without these extensions.
We observed no performance improvements over the standard TD3.
In contrast, our method, SAVO, directly addresses this problem by employing an actor architecture specifically designed to navigate non-convex Q-landscapes, making it more robust to local optima.

\subsection{Exploration Noise comparison: OUNoise vs Gaussian}

\begin{figure*}[ht]
    \centering
        \includegraphics[width=0.45\textwidth]{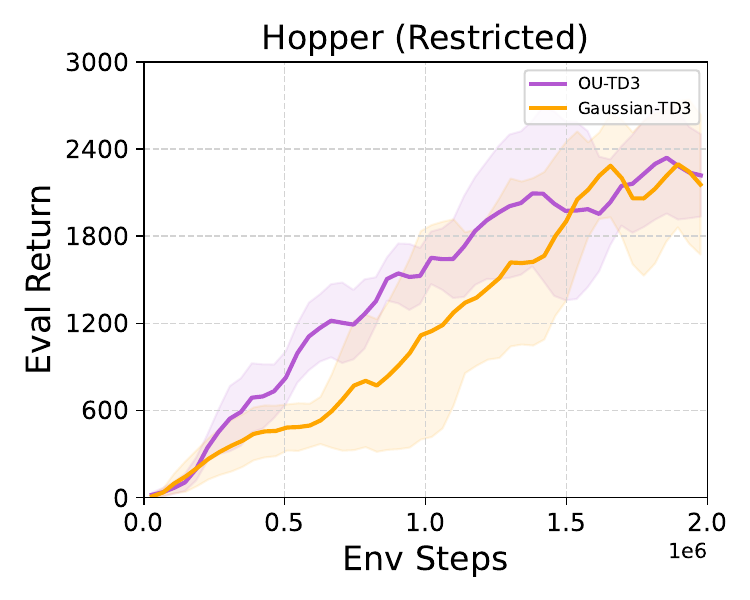}
        \caption{
    \textbf{OU versus Gaussian Noise}. We do not see a significant difference due to this choice, and select OU noise due to better overall performance in experiments    
        }
        \label{fig:app-noise-type}    
\end{figure*}
The choice of Ornstein-Uhlenbeck (OU) noise or Gaussian noise for exploration does not make a significant difference and we select OU noise for its better overall performance in initial experiments. This comparison is shown in \myfig{fig:app-noise-type}. This finding is consistent with TD3 \cite{fujimoto2018addressing}, which also finds no significant difference between OU and Gaussian noise.

\subsection{SAC does not address non-convex Q-landscapes}
We compare the performance of SAC, TD3, and TD3 + SAVO across three Mujoco-Restricted tasks. The results (\myfig{fig:app-sac-td3-mjc}) indicate that TD3 + SAVO consistently outperforms the other methods, demonstrating the effectiveness of SAVO in \textit{Hopper} and \textit{Walker2D}. In \textit{Inverted Pendulum}, TD3 + SAVO also shows faster convergence, further highlighting its advantages.

\begin{figure*}[ht]
    \centering
    \begin{subfigure}[t]{0.45\textwidth}
        \includegraphics[width=\textwidth]{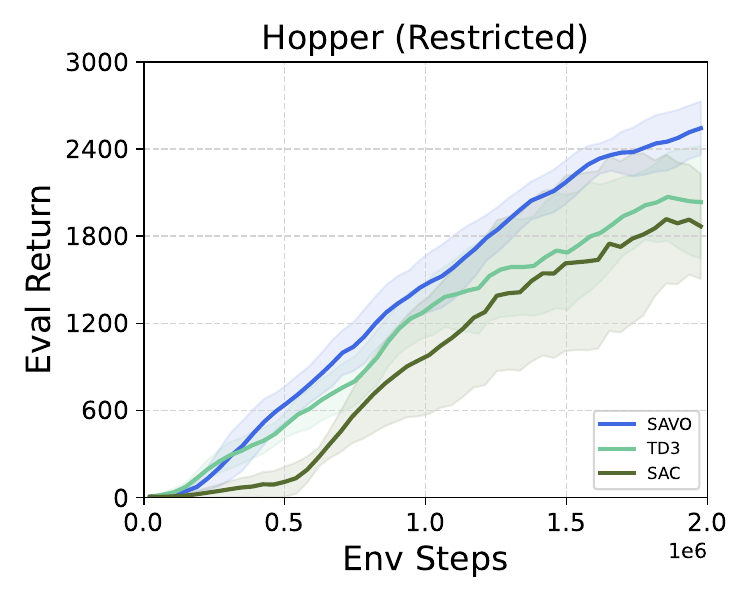}
    \end{subfigure}
    \begin{subfigure}[t]{0.45\textwidth}
        \includegraphics[width=\textwidth]{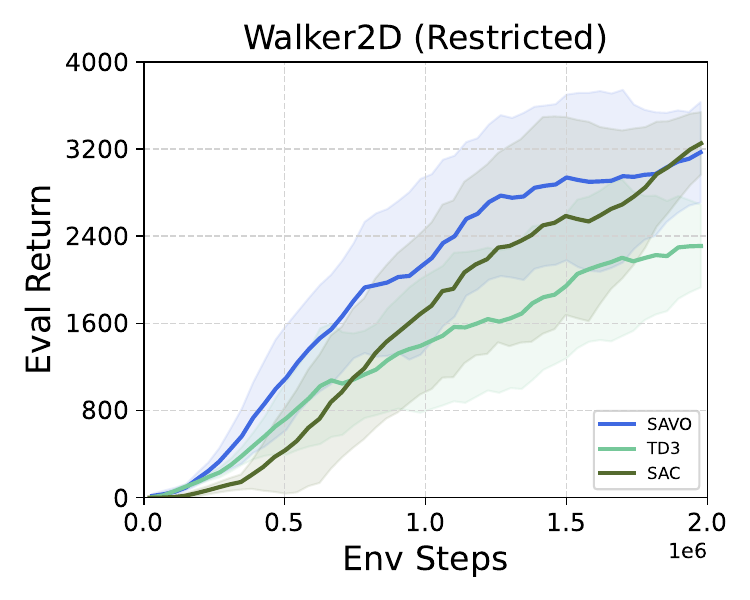}
    \end{subfigure}
    \begin{subfigure}[t]{0.45\textwidth}
        \includegraphics[width=\textwidth]{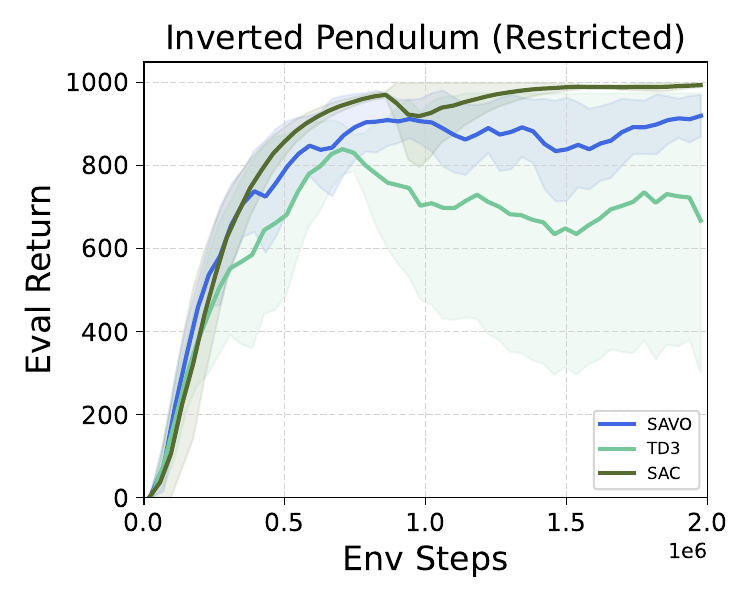}
    \end{subfigure}
    \begin{subfigure}[t]{0.45\textwidth}
        \includegraphics[width=\textwidth]{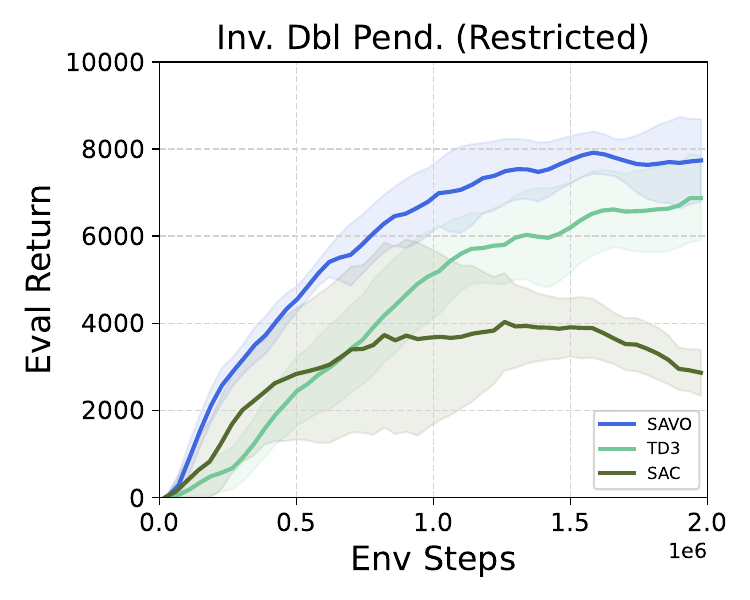}
    \end{subfigure}
\vspace{-5pt}
    \caption{
    \rebuttal{\textbf{SAC is orthogonal to the effect of SAVO}.} While SAC is a stochastic extension of TD3 with entropy regularization, SAVO is a plug-in \emph{actor architecture} that mitigates the challenge of the actor getting stuck in local optima. Thus, tasks where SAC outperforms TD3 differ from tasks where SAVO outperforms TD3. Also, TD3 outperforms SAC in Restricted Hopper and Inverted-Double-Pendulum. However, SAVO+TD3 guarantees improvement over TD3.  As we show in \mysecref{app:sec:sac}, SAVO+SAC also mitigates the local optima challenges in SAC.
    }
    \label{fig:app-sac-td3-mjc}
\end{figure*}

\Skip{
\vspace{-5pt}
\subsection{Consistency across Different Complexity of Action space} \label{app:sec:consistency-test}
\vspace{-5pt}
We extend Fig.\ref{fig:baseline_results} to examine the consistency of the observed trend by comparing SAVO against the strong baselines by changing the different action space factors; Dimension, Size, and Impreciseness.

\begin{figure*}[ht]
    \centering
    \begin{subfigure}[t]{0.23\textwidth}
        \includegraphics[width=\linewidth]{NeurIPS-2023/images/exp/mine-sh01-dim30.pdf}
        \caption{MineWorld}
    \end{subfigure}
    \begin{subfigure}[t]{0.48\textwidth}
        \includegraphics[width=0.49\linewidth]{NeurIPS-2023/images/exp/recsim-10k-dim30.pdf}
        \includegraphics[width=0.49\linewidth]{NeurIPS-2023/images/exp/recsim-100k-dim30.pdf}
        \caption{RecSim}
    \end{subfigure}
    \begin{subfigure}[t]{0.23\textwidth}
        \includegraphics[width=\linewidth]{NeurIPS-2023/images/exp/reacher-original-train.pdf}
        \caption{Reacher}
    \end{subfigure}
    \caption{
    \textbf{More complex action spaces}: SAVO continues to outperform the baselines for a larger action space dimension (4 $\to$ 30) in MineWorld and more actions  (5k $\to$ 100k) in RecSim.
    }
    \label{fig:consistency}
\end{figure*}

\textbf{Action space Dimension}:
Fig.~\ref{fig:consistency} (a) shows the result of SAVO compared to the baselines in the larger action dimension ($4 \to 30$) in MineWorld.
In order to highlight the effect of the large action dimension, we reduced the action-set size to 10 whereas in the main experiment~\ref{fig:baseline_results} we used the dimension of 4 and the tool size of 100.
We observe that all agents consistently improved performance, e.g., SAVO reached $40\%$ in Fig.\ref{fig:baseline_results} whereas it reached the optimal performance or DDPG2 k-NN improved from $20\%$ to $40\%$.
But, the margin between SAVO and DDPG2 k-NN increased significantly from Fig.\ref{fig:baseline_results} indicates that SAVO's consistent outperformance over the baselines.

\textbf{Action space Size}:
Fig.~\ref{fig:consistency} (b) shows SAVO consistently outperforms the baselines across two much larger action space sizes.
We notice that at the level of size=10k, DDPG2 k-NN achieves the same optimality in the end with much slow convergence rate.
Yet, with 100K actions, the margin between SAVO and DDPG2 k-NN becomes large.
This result shows that SAVO is able to learn efficiently in the larger scale setup.

\textbf{Action space Impreciseness}:
Fig.~\ref{fig:consistency} (c) shows performance of agents on the \textit{ground-truth} action space that represents the utility of actions accurately whereas Fig.\ref{fig:baseline_results} examines the performance on the imprecise action space.
We observe that (a) the impreciseness of action space makes learning the task harder than the original action space as all the methods learn well on the original action space, (b) all agents have much faster convergence as they only require about 2M in this setup but needed about 8M steps to converge.

\subsection{Cont'd: Need of Retrieval Agents in Very Large Action Spaces} \label{app:sec:cont-dqn-vs-SAVO}
We extend the analysis in Sec.\ref{sec:memory-test}, omitted from the main text due to the limited space, by varying the action space dimensionality $16 \to 64$ gradually.
In Fig.\ref{fig:memory-test-dim}, we observe the consistent trend as Sec.\ref{sec:memory-test} that DQN's memory requirements are growing exponentially while SAVO is able to maintain the low memory requirements due to the smaller candidate-set given to the selected network.
We also observe that though oracle DQN shows a slightly better performance than SAVO, it suffers from the OOM from the dimensionality equal to 32 on.

\begin{figure*}[ht]
    \centering
    \begin{subfigure}[t]{0.45\textwidth}
        \includegraphics[width=\linewidth]{NeurIPS-2023/images/exp/recsim-memorytest-dim-memory.pdf}
    \end{subfigure}
    \begin{subfigure}[t]{0.45\textwidth}
        \includegraphics[width=\linewidth]{NeurIPS-2023/images/exp/recsim-memorytest-dim-performance.pdf}
    \end{subfigure}
    \caption{
    Memory usage and Performance comparisons by varying the action representation dimensionality.
    Similarly to Fig.\ref{fig:memory-test-size}, we observe that SAVO is significantly more memory efficient in the larger action space dimension and slightly suboptimal in performance compared to DQN.
    }
    \label{fig:memory-test-dim}
\end{figure*}

\begin{figure*}[ht]
    \centering
    \begin{subfigure}[t]{0.3\textwidth}
        \includegraphics[width=\linewidth]{NeurIPS-2023/images/exp/mine-app-cascading-disc.pdf}
    \end{subfigure}
    \begin{subfigure}[t]{0.3\textwidth}
        \includegraphics[width=\linewidth]{NeurIPS-2023/images/exp/recsim-app-init-weight.pdf}
    \end{subfigure}
    \begin{subfigure}[t]{0.3\textwidth}
        \includegraphics[width=\linewidth]{NeurIPS-2023/images/exp/mine-app-list-len.pdf}
    \end{subfigure}
    \caption{
    (Left) Comparison of variations of computing TD-Error on MineWorld.
    (Middle) Comparison of weight initialization methods on RecSim.
    (Right) Comparison of list-length of list-action on MineWorld.
    }
    \label{fig:listwise-tuning}
\end{figure*}

\subsection{Tuning of Listwise Component} \label{app:sec:listwise-tuning}
\textbf{Listwise TD-error}:
In Fig.\ref{fig:listwise-tuning} (Left), we compare the tricks on the TD-error formulation for SAVO on MineWorld as follows;
(\textit{Cascading Gamma}): In the listwise TD-error (Sec.\ref{sec:approach:listwise-agent}), each cascading critic looks at the next list-index. Thus, in order to incorporate information about the role of each critic in the list, we exponentiate the discount factor ($\gamma$) based on the list-index; $Q^{l}_{target} = \gamma^{L - l} R + \gamma^{L + 1 - l} Q(s_{t+1}, a_{t+1})$.
(\textit{Cascading Gamma + Refine-Q}): The idea is that we only use the next state's Q function and rewards in target computation of TD-error, as they are more stable than the current state's Q values. So the TD-target looks like;
$Q^{l}_{target} = \gamma^{L - l} R + \gamma^{L + 1 - l} Q(s_{t+1}, a_{t+1})$.
(\textit{Vanilla}): SAVO without these tricks.
We observe that the \textit{cascading gamma + refine-Q} shows the significant improvement on the vanilla SAVO.

\textbf{Weight initializations}:
In Fig.\ref{fig:listwise-tuning} (Middle), we experiment with different ways to initialize the weights of successive actor and critic networks to see if having more diverse network initialization can lead to improved learning of listwise RL behavior;
(\textit{Xavier}): This purely relies on the Xavier initialization~\citep{glorot2010understanding} in the PyTorch library~\citep{paszke2019pytorch}.
(\textit{Random}): This initialize the weights by sampling from the Gaussian distribution ($\mathcal{N}(\mu=0.0, \sigma=1.0)$).
(\textit{Xavier + Gaussian-Noise}): Initialize the weights with the Xavier initialization then add the discounted Gaussian noise ($0.5 * \epsilon; \epsilon \sim \mathcal{N}(\mu=0.0, \sigma=1.0)$).
We observe that the standard Xavier initialization was outperformed by Gaussian and Xavier+Gaussian weight initialization.
We believe the reasoning for this behavior is an encouragement of different actors to learn to act differently and focus more on the currently built list action.

\textbf{Different candidate-list length of Listwise Retrieval}:
In Fig.\ref{fig:listwise-tuning} (Right), we analyse the effect of different list sizes (Size=[1,2,3,4,6,8,10]) of the candidate list retrieved by SAVO on MineWorld.
While we observe the generic phenomenon that the larger is the better, there seems to be an implicit threshold on the improvement gained by enlarging the list-size.
Intuitively, this makes sense since the larger list means the larger number of networks (Actor and Critic) in the cascading architecture, thus, training of large becomes generally difficult.
We observe that after Size=3 on the improvement by enlarging the list-size becomes rather marginal compared to the rate of improvement from 2 to 3 or such.
Finally, we would like to mention that as the size gets larger the algorithmic complexities increase; \textit{Memory-complexity} increases due to the number of cascading actors and critics as well as the k-NN operation storing more data to find the discrete action for each list-index. \textit{Time-complexity} also increases due to the repeated application of 1-NN for list-indices.
}

\newpage
\section{Network Architectures}
\label{app:sec:network_architectures}
\subsection{Successive Actors}
The whole actor has a successive format and each successive actor will receive two pieces of information: the state observation and the action list generated by previous successive actors.
Given the concatenation of the input components above, a 4-layer MLP with ReLU will process this information and generate one action for one single successive actor. And this action will be concatenated with the previous action list. After being transformed by an optional action-list-encoder, together with the state information, they become the input of next successive actor's input. In the end, the action list will be processed with 1-NN to find the nearest discrete action. After this, this action list will be delivered to the selection Q-network.

\subsection{Successive Critics}
\label{app:sec:successive-critic}

The critic has a one-to-one mapping relationship with the actor. The whole critic consists of a list of successive critics and each successive critic will receive three pieces of information: the state observation, the action list generated by previous successive actors, and the action provided by the corresponding successive actor.
Given the concatenation of the input components above, a 2-layer MLP with ReLU will process this information and generate the action's value for one single successive actor. This value will be used to update itself and the actor with TD-error.

\subsection{List Summarizers} \label{app:network-list-encoders}
In order to extract meaningful information from the list of candidate actions, following \citet{jain2021know} we employed the sequential models and the list-summarizer as follows;

\textbf{Bi-LSTM}:
The raw action representations of candidate actions are passed on to the 2-layer MLP followed by ReLU. Then, the output of the MLP is processed by a 2-layer bidirectional LSTM~\citep{huang2015bidirectional}. Another 2-layer MLP follows this to create the action set summary to be used in the following successive actor.

\textbf{DeepSet}:
The raw action representations of candidate actions are passed on to the 2-layer MLP followed by ReLU. Then, the output of the MLP is aggregated by the mean pooling over all the candidate actions to compress the information. Finally, the 2-layer MLP with ReLU provides the resultant action summary to the following successive actor.

\textbf{Transformer}:
Similar to the Bi-LSTM variant of the summarizer, we employed the 2-layer MLP with ReLU before inputting the candidate actions into a self-attention and feed-forward network to summarize the information. Afterward the summarization will be part of the input of the following successive actor.

\subsection{Feature-wise Linear Modulation (FiLM)}
Feature-wise Linear Modulation~\citep{perez2018film}, is a technique commonly applied in neural networks for tasks like image recognition. It enhances adaptability by dynamically adjusting intermediate feature representations. Using learned parameters from one layer, FiLM linearly modulates features in another layer, allowing the network to selectively emphasize or de-emphasize aspects of the input data. This flexibility is beneficial for capturing complex and context-specific relationships, improving the model's performance in various tasks.

\subsection{Selection Q-network}
The selection Q-network sequentially evaluates the Q-value of the retrieved candidate actions by the cascading actors.
Thus, it receives a concatenated information of state and an action embedding for each candidate action.
Then, it selects the action with the largest Q-value amongst candidate actions to act on the environment.
\Skip{
As for selection Q-network, it will received a concatenated information of state and one action to evaluate the q-value. By comparing the the q-value of each action, the highest q-value action will be chosen as the final output action.
}

\section{More experimental results}

\subsection{More Complex RecSim: Increasing Size of Action Space} \label{app:fig-varying-complexity}

We test the robustness of our method to more challenging Q-value landscapes in Figure~\ref{app-fig:vary-complexity} in Appendix~\ref{app:fig-varying-complexity}. In RecSim, we vary the action space size, from $100K$ to $500K$. The results show that SAVO outperforms the baselines, maintaining its robust performance even as the action complexity increases. In contrast, the baselines experienced performance deterioration as action sizes grew larger.

\vspace{-5pt}
\begin{figure*}[ht]
    \centering
    \begin{subfigure}[t]{0.4\textwidth}
        \includegraphics[width=\linewidth]{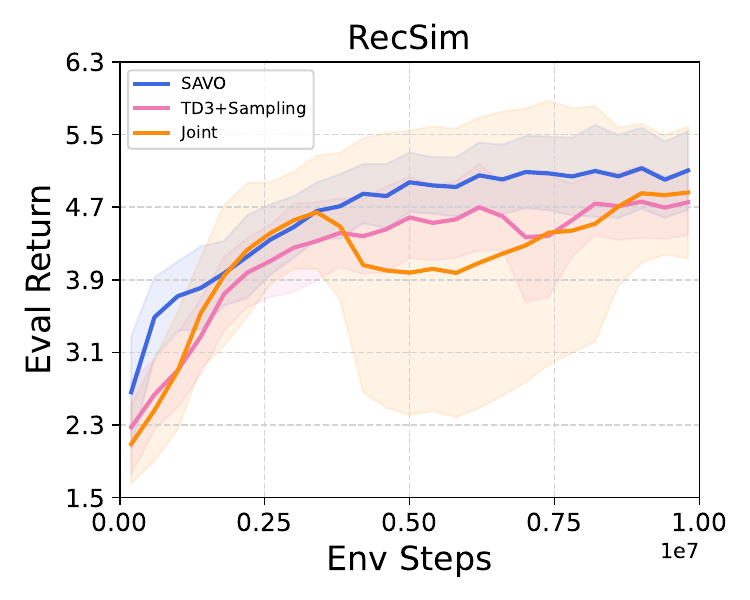}
    \end{subfigure}
    \begin{subfigure}[t]{0.4\textwidth}
        \includegraphics[width=\linewidth]{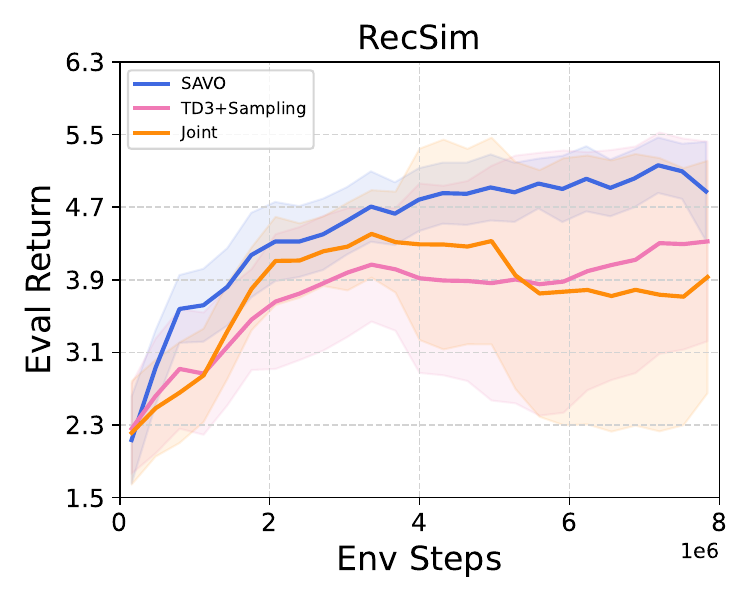}
    \end{subfigure}
    \caption{
    Increasing RecSim action set size: (Left) $100K$ items, (Right) $500K$ items (6 seeds).
    }
    \label{app-fig:vary-complexity}
\end{figure*}

\subsection{Design Choices: Action summarizers}
\label{sec:app:deepset}

In the exploration of action summarizer design choices, three key architectures were considered: Deepset, LSTM, and Transformer models, each represented by SAVO, SAVO-lstm, and SAVO-transformer in Fig.\ref{fig:app-list-summariser}, respectively.
In the discrete tasks, the comparison revealed a preference for the deepset architecture over LSTM and Transformer.
In the continuous domain, however, the results were rather varied, indicating that the effectiveness of the action summarizer depends on the specific use case.
The nuanced differences among these architectures contribute to the complexity of the task, and further research is needed to determine the optimal design for action summarization in both discrete and continuous contexts.

\begin{figure*}[ht]
    \centering
    \begin{subfigure}[t]{0.32\textwidth}
        \includegraphics[width=\textwidth]{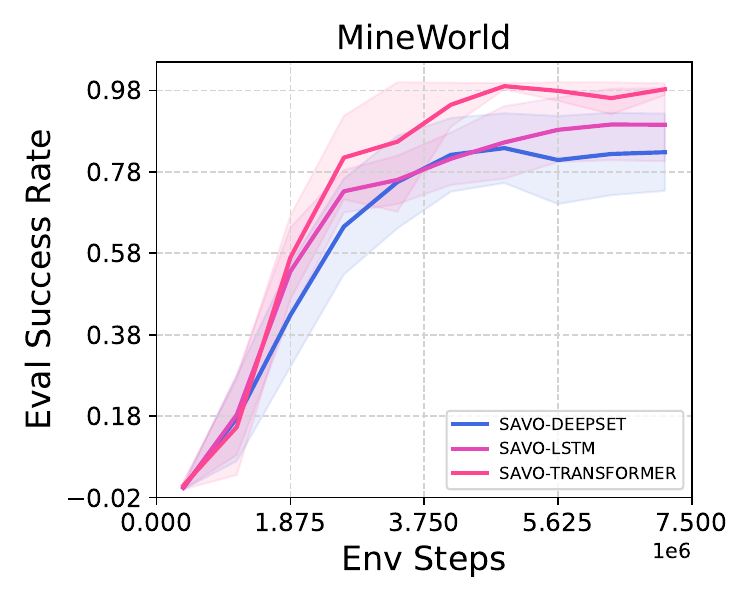}
    \end{subfigure}
    \begin{subfigure}[t]{0.32\textwidth}
        \includegraphics[width=\textwidth]{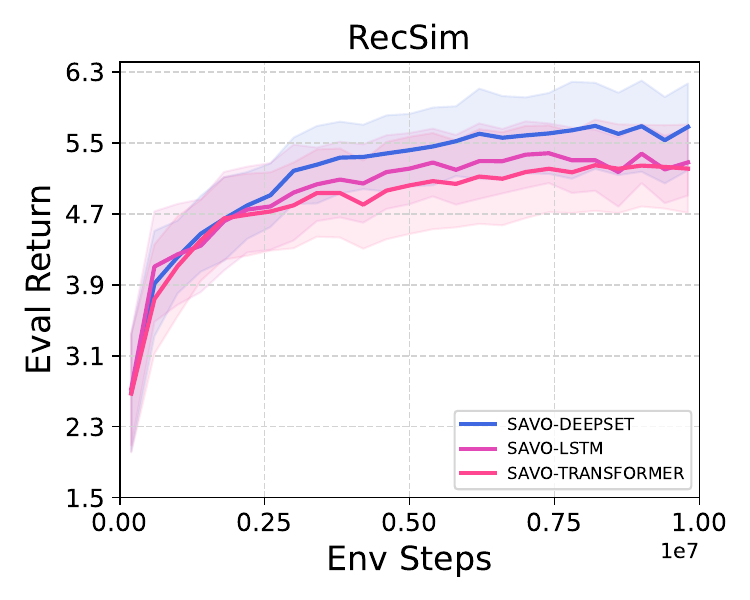}
    \end{subfigure}
    \begin{subfigure}[t]{0.32\textwidth}
        \includegraphics[width=\textwidth]{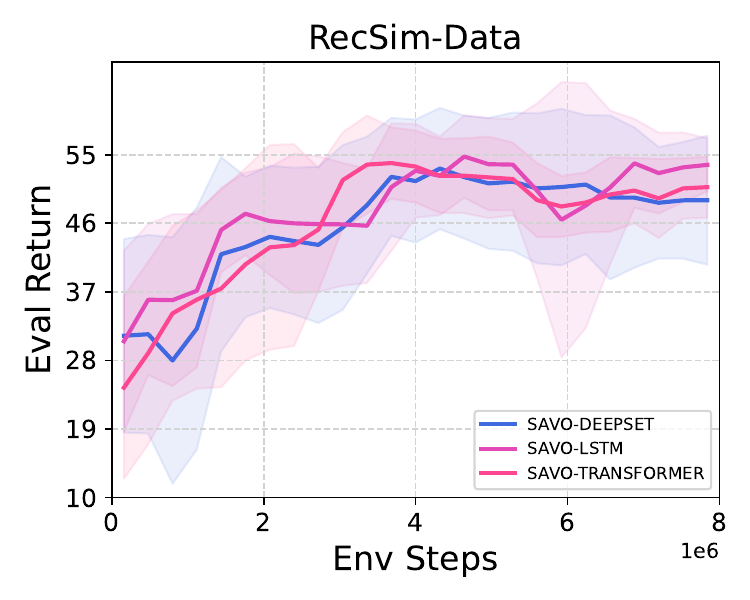}
    \end{subfigure}
\vspace{-5pt}
    \caption{
    Comparison of action summarizers: the results are averaged over 5 random seeds, and the seed variance is shown with shading.
    }
    \label{fig:app-list-summariser}
\end{figure*}

\subsection{Conditioning on previous actions: FiLM v/s MLP}
\label{sec:app:film}
In the examination of conditioning on previous actions, two distinct approaches, Feature-wise Linear Modulation (FiLM) and Multi-Layer Perceptron (MLP), represented by FiLM and non-FiLM variants in Fig.\ref{fig:app-film}, were scrutinized for their efficacy.
In the discrete tasks, the results unveiled a notable preference for FiLM over non-FiLM implementations, highlighting its effectiveness in leveraging information from prior actions for improved conditioning.
However, in the continuous domains, the comparison between FiLM and MLP yielded varied outcomes, suggesting that the choice between these approaches is intricately tied to the specific task context. The nuanced performance differences observed underscore the need for continued research to ascertain the optimal approach for conditioning on previous actions and to enhance model adaptability across diverse applications.

\begin{figure*}[ht]
    \centering
    \begin{subfigure}[t]{0.24\textwidth}
        \includegraphics[width=\textwidth]{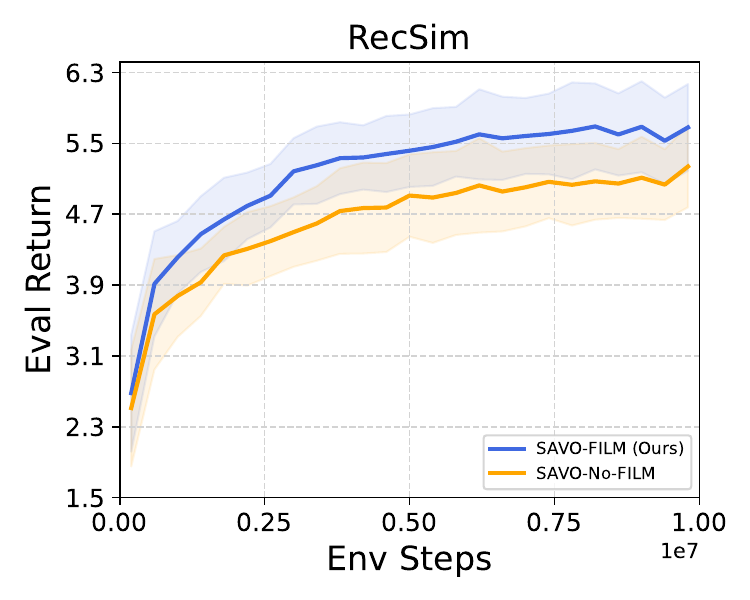}
    \end{subfigure}
    \begin{subfigure}[t]{0.24\textwidth}
        \includegraphics[width=\textwidth]{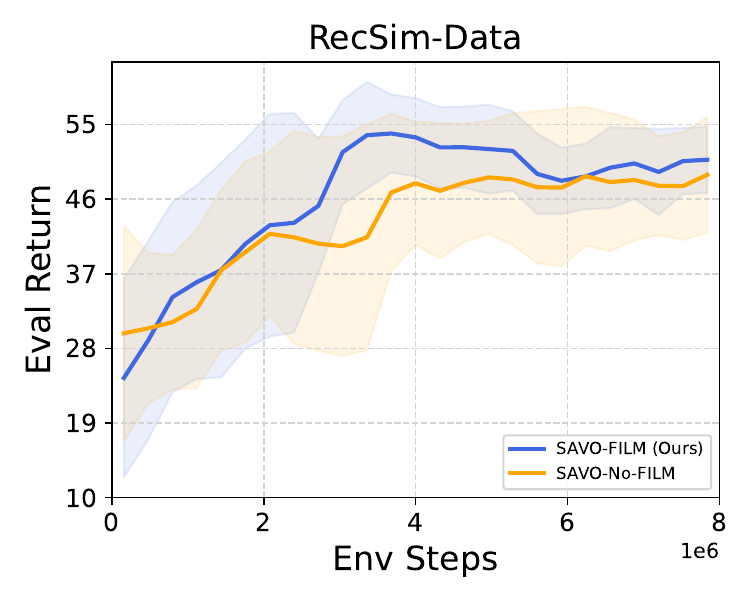}
    \end{subfigure}
    \begin{subfigure}[t]{0.24\textwidth}
        \includegraphics[width=\textwidth]{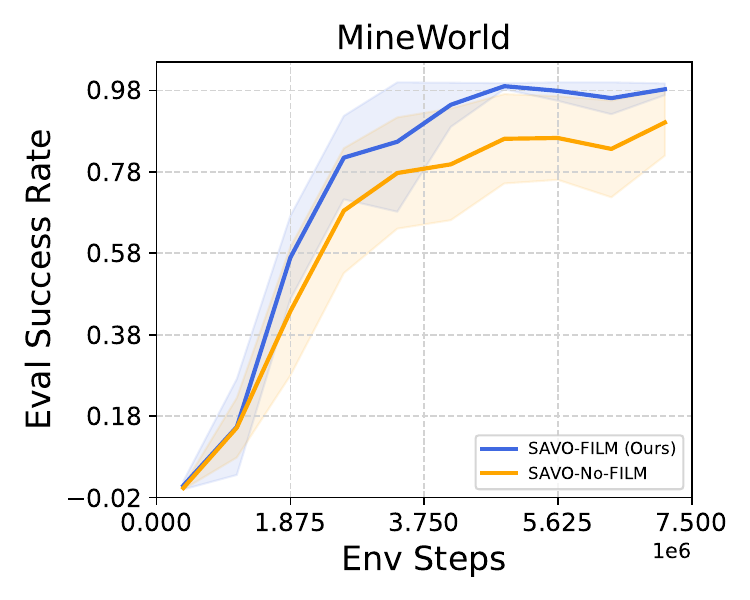}
    \end{subfigure}
    \newline
    \begin{subfigure}[t]{0.24\textwidth}
        \includegraphics[width=\textwidth]{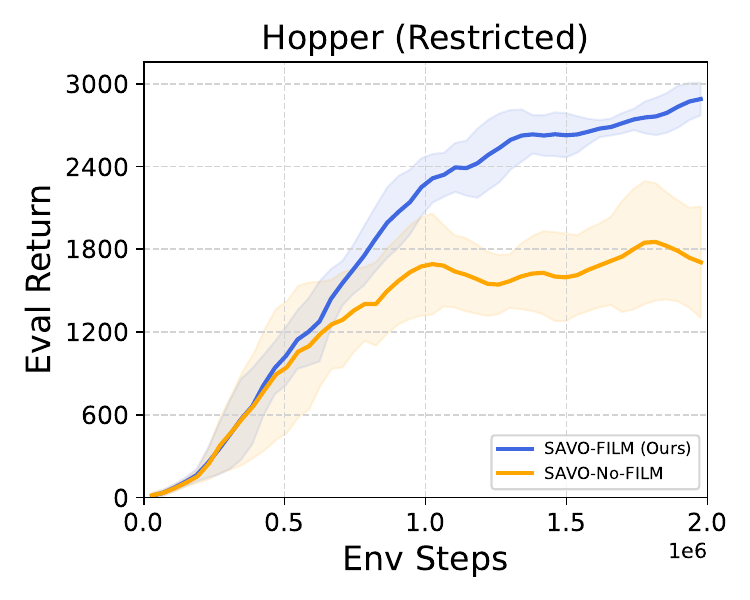}
    \end{subfigure}
    \begin{subfigure}[t]{0.24\textwidth}
        \includegraphics[width=\textwidth]{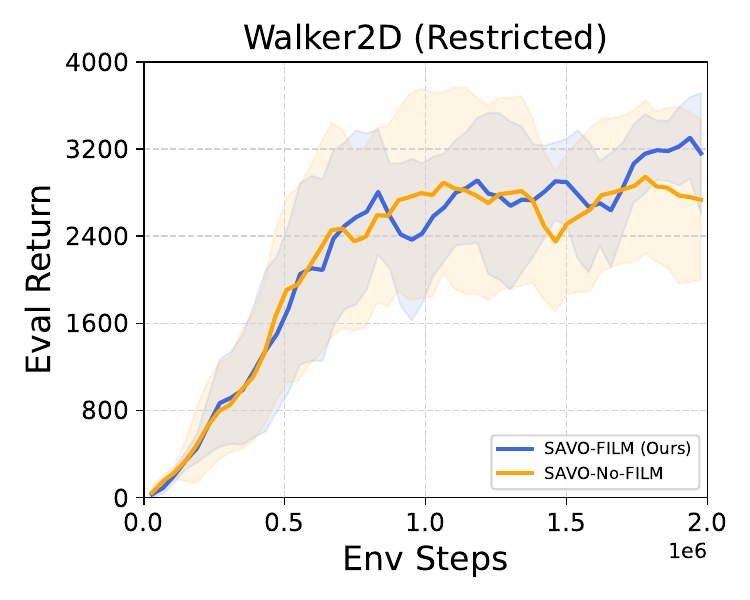}
    \end{subfigure}
    \begin{subfigure}[t]{0.24\textwidth}
        \includegraphics[width=\textwidth]{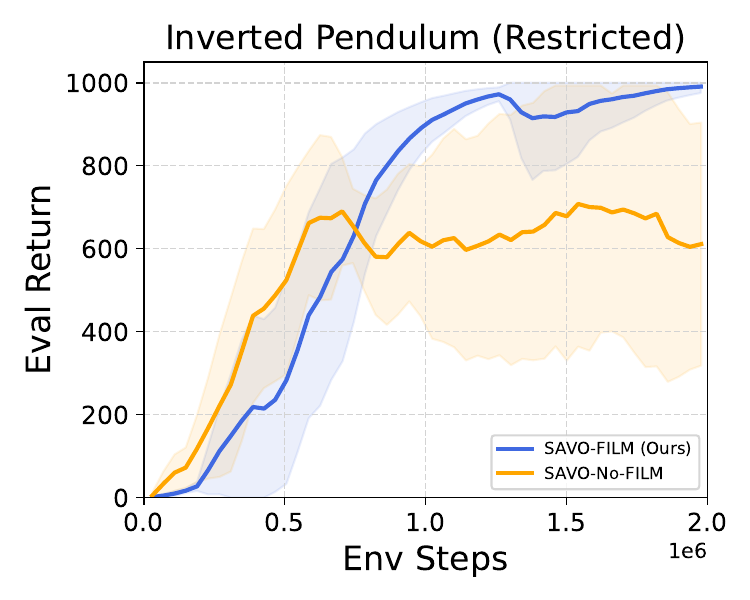}
    \end{subfigure}
    \begin{subfigure}[t]{0.24\textwidth}
        \includegraphics[width=\textwidth]{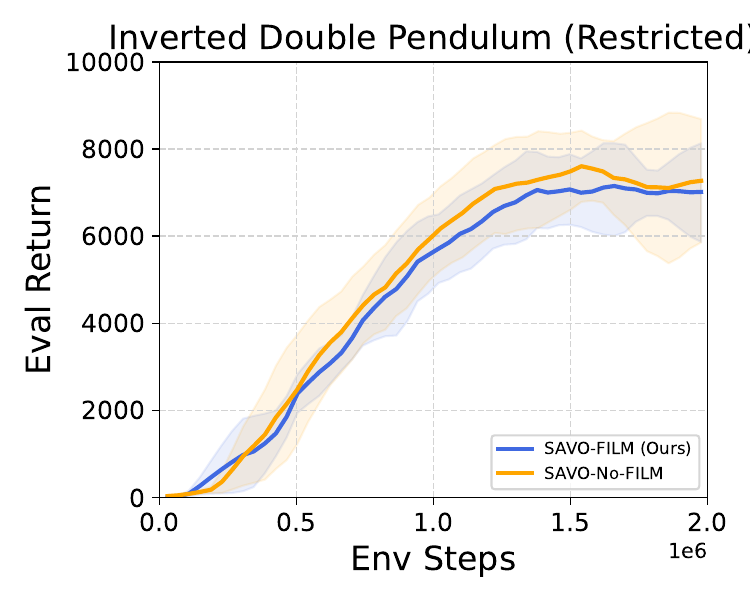}
    \end{subfigure}
\vspace{-5pt}
    \caption{
    Comparison of how to condition on previous actions: the results are averaged over 5 random seeds, and the seed variance is shown with shading.
    }
    \label{fig:app-film}
\end{figure*}

\begin{figure*}[ht]
    \centering
    \begin{subfigure}[t]{0.24\textwidth}
        \includegraphics[width=\textwidth]{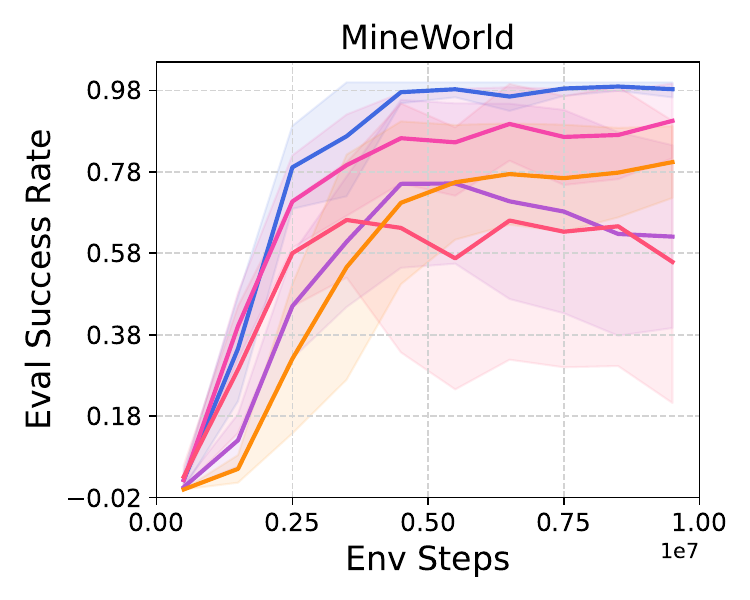}
    \end{subfigure}
    \begin{subfigure}[t]{0.24\textwidth}
        \includegraphics[width=\textwidth]{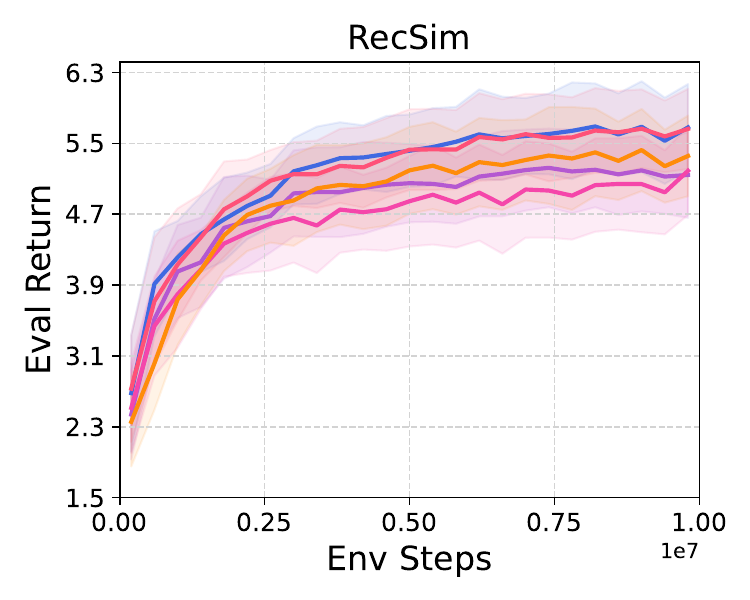}
    \end{subfigure}
    \begin{subfigure}[t]{0.24\textwidth}
        \includegraphics[width=\textwidth]{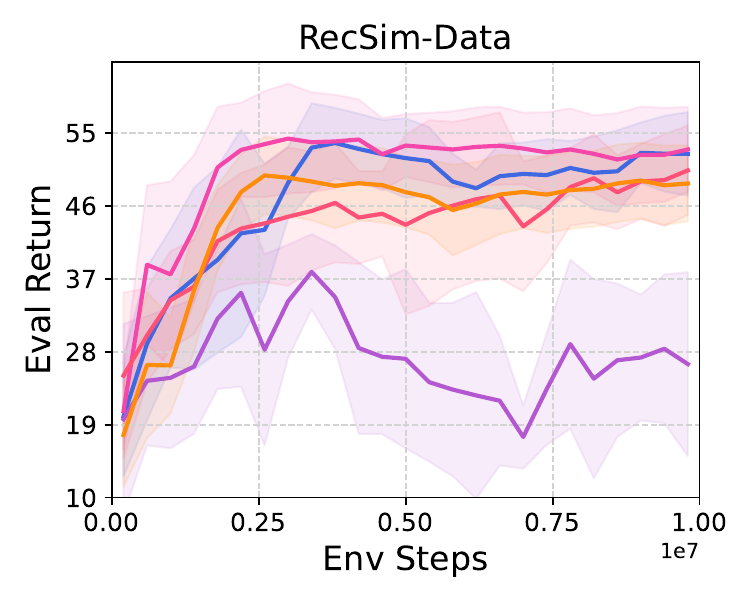}
    \end{subfigure}
    \begin{subfigure}[t]{0.24\textwidth}
        \includegraphics[width=\textwidth]{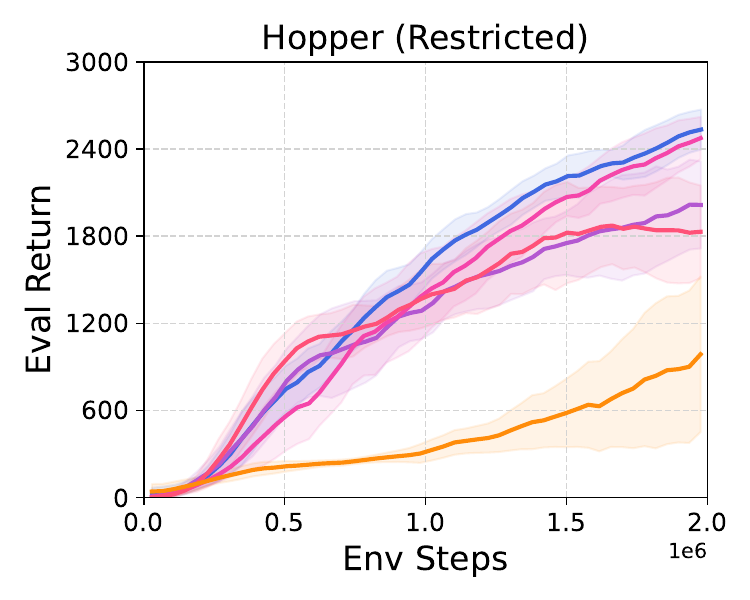}
    \end{subfigure}
    \begin{subfigure}[t]{0.24\textwidth}
        \includegraphics[width=\textwidth]{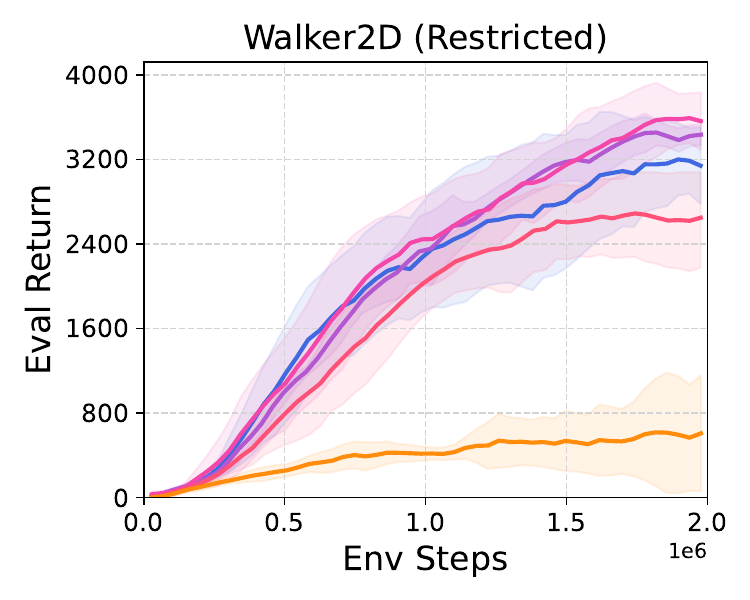}
    \end{subfigure}
    \begin{subfigure}[t]{0.24\textwidth}
        \includegraphics[width=\textwidth]{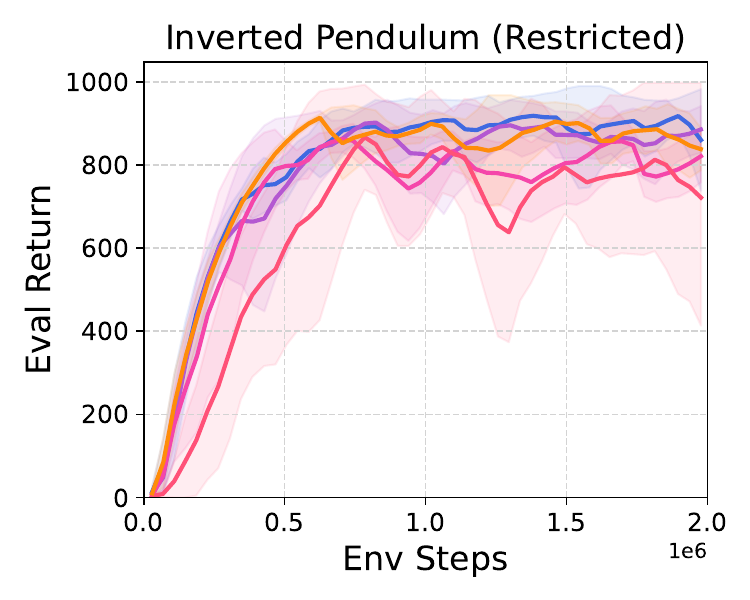}
    \end{subfigure}
    \begin{subfigure}[t]{0.24\textwidth}
        \includegraphics[width=\textwidth]{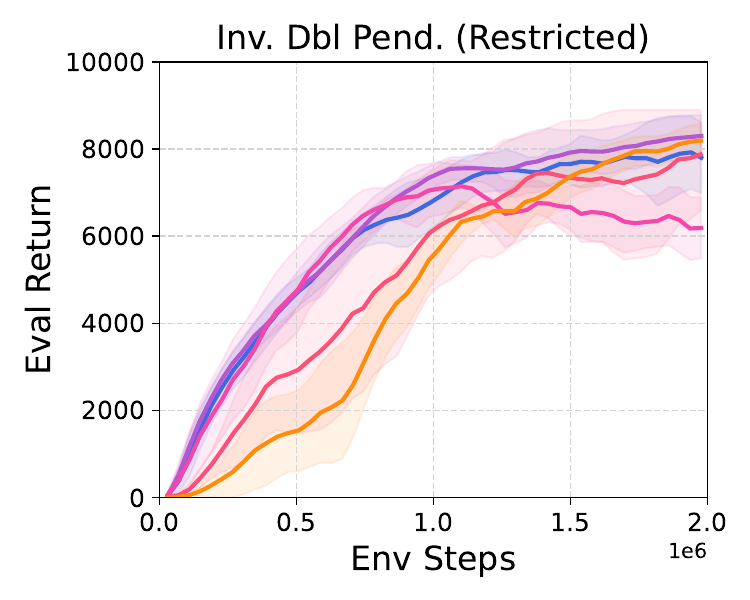}
    \end{subfigure}
    \begin{subfigure}[t]{\linewidth}
    \centering
        \includegraphics[width=0.7\linewidth]{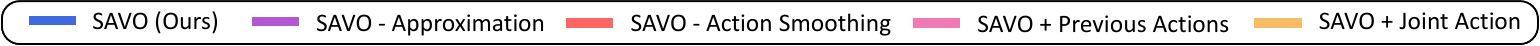}
    \end{subfigure}
\vspace{-5pt}
    \caption{
    \textbf{Ablation study of SAVO Variations} over 5 random seeds shows that every technical component introduced in SAVO contributes to its performance.
    }
    \label{fig:app-ablation}
\end{figure*}

\rebuttal{

\subsection{Per-Environment Ablation Results}
}
\label{sec:app:ablation}

\myfig{fig:app-ablation} shows the per-environment performance of SAVO ablations, compiled into aggregate performance profiles in \myfig{fig:rliable-savo-variants}. The \textbf{SAVO - Approximation} variant underperforms significantly in discrete action space tasks, where traversing between local optima is complex due to nearby actions having diverse Q-values (see the right panel of \myfig{fig:problem}). Similarly, adding TD3's target action smoothing to SAVO results in inaccurate learned Q-values when several differently valued actions exist near the target action, as in the complex landscapes of all tasks considered.

Removing information about preceding actions does not significantly degrade SAVO's performance since preceding actions' Q-values are indirectly incorporated into the surrogates' training objective (see \myeq{eq:approximation_loss}), except for MineWorld where this information helps improve efficiency.

The \textbf{SAVO + Joint} ablation learns a single actor that outputs a joint action composed of $k$ constituents, aiming to cover the action space so that multiple coordinated actions can better maximize the Q-function compared to a single action. However, this increases the complexity of the architecture and only works in low-dimensional tasks like Inverted-Pendulum and Inverted-Double-Pendulum. SAVO simplifies action candidate generation by using several successive actors with specialized objectives, enabling easier training without exploding the action space.

\rebuttal{
\subsection{Surrogate Approximation Error Analysis}
}

In \myfig{fig:approximation_error}, we analyze the surrogate approximation error across different environments to evaluate how well the surrogate Q-functions approximate the true thresholded Q-function during training. The surrogate error, i.e., the MSE loss from Equation~\ref{eq:approximation_loss}, is expressed as a percentage of the Bellman error to measure how closely the surrogate tracks updates to the Q-function. This analysis is important because surrogates aim to simplify optimization while still allowing gradients to propagate effectively.

\textbf{Low Surrogate Error Across Training.}  
In most environments, the surrogate error converges to a relatively low value between 1--10\% of the Bellman error, showing that the surrogates provide a reliable approximation. This indicates that the surrogate functions are well-suited for actor updates, not introducing large errors in the Q-landscape and staying current with new optimal regions. The surrogate error stays consistently low across various tasks, including restricted locomotion (e.g., Hopper, Walker2D) and dexterous manipulation (e.g., Adroit Pen, Adroit Hammer). This demonstrates that the surrogate functions work well across diverse environments with varying levels of complexity.

\textbf{Non-zero loss shows Smoothness in Flat Regions.}  
The surrogate error remains positive throughout training, including in flat regions of the Q-landscape. This ensures that gradients can still propagate, preventing the actor from getting stuck in areas without gradient information.

\textbf{Behavior in the Inverted Double Pendulum (Restricted).}  
For the \textit{Inverted Double Pendulum (Restricted)} environment, the surrogate error increases towards the end of training. This happens because the agent has already converged, and the increase in error reflects overtraining, which is consistent with the observation of an unstable drop in task performance for certain seeds.

Overall, this analysis shows that surrogate functions effectively simplify the Q-value landscape, closely track Q-function updates, and maintain robustness across different tasks, justifying their effectiveness in enabling gradient flow in SAVO. This results in SAVO outperforming the SAVO - Approximation baseline, as shown in \myfig{fig:rliable-savo-variants} and \myfig{fig:app-ablation}.

\begin{figure}[ht]
    \centering
    \includegraphics[width=0.325\textwidth]{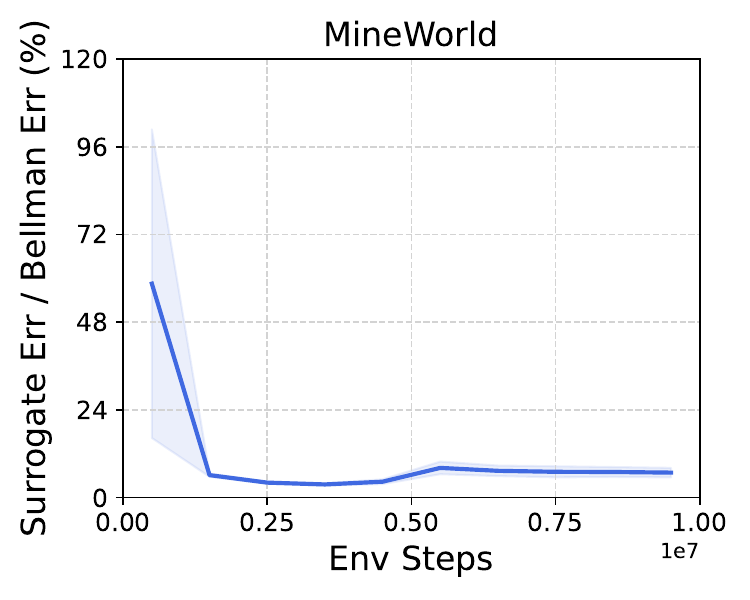}
    \includegraphics[width=0.325\textwidth]{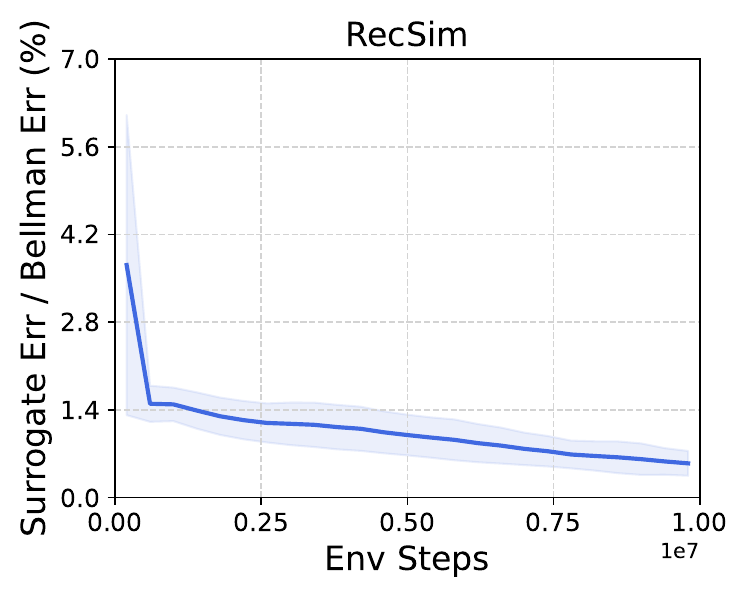} 
    \includegraphics[width=0.325\textwidth]{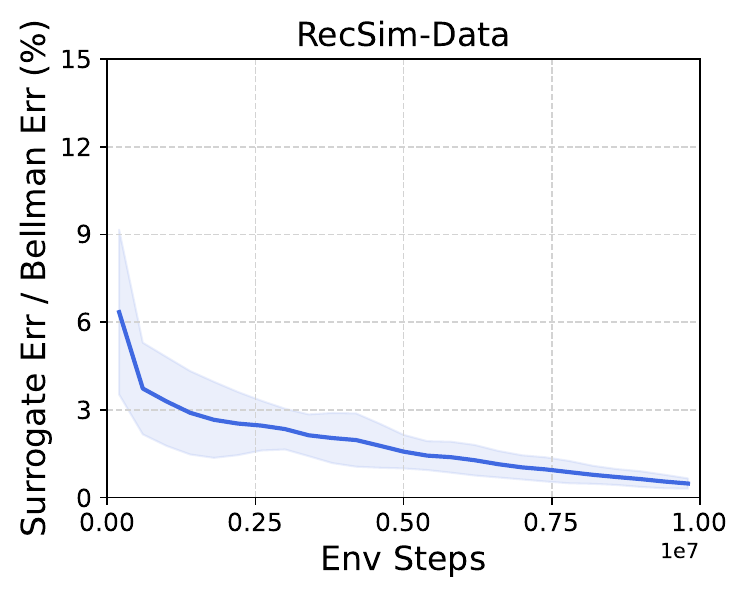}
    \includegraphics[width=0.325\textwidth]{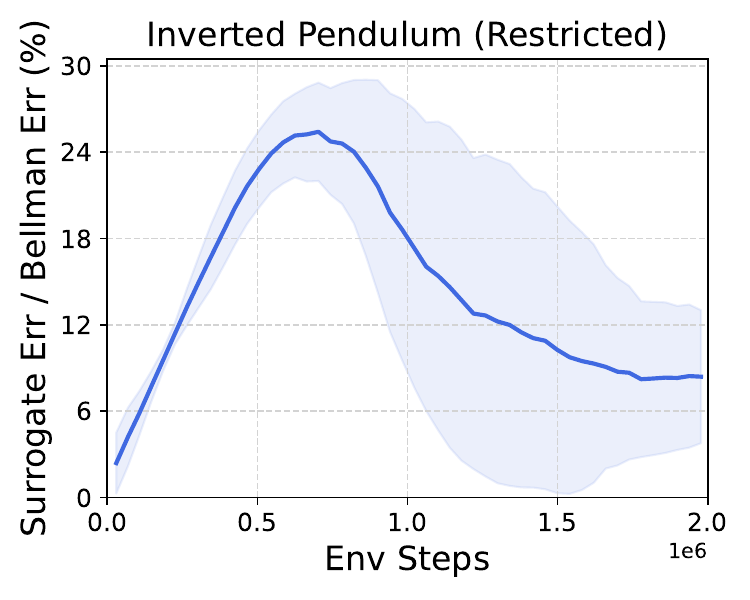}
    \includegraphics[width=0.325\textwidth]{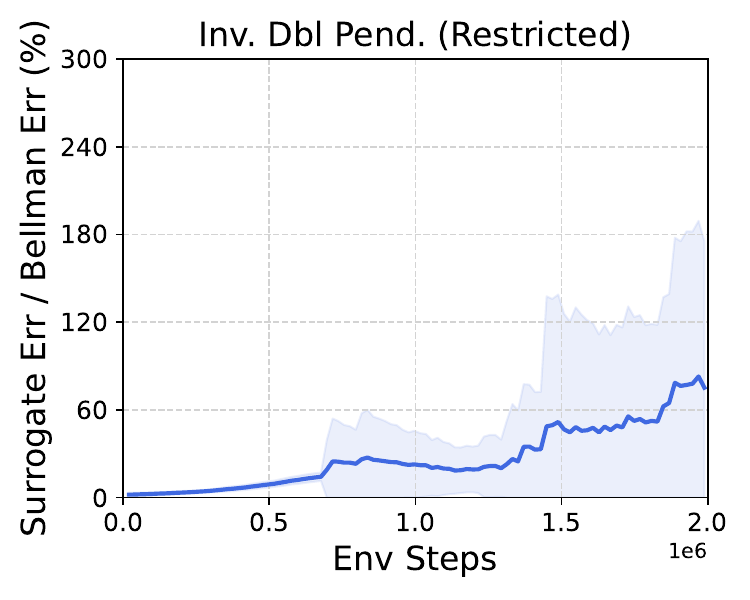}\\
    \includegraphics[width=0.325\textwidth]{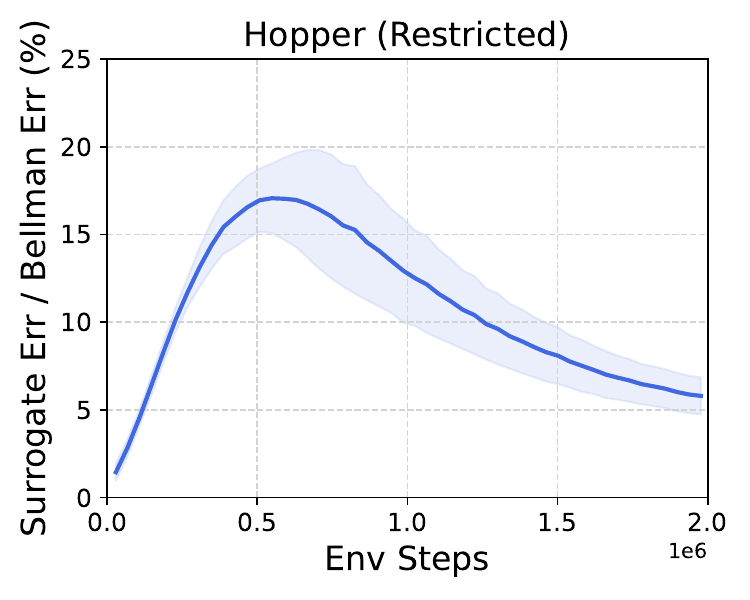}
    \includegraphics[width=0.325\textwidth]{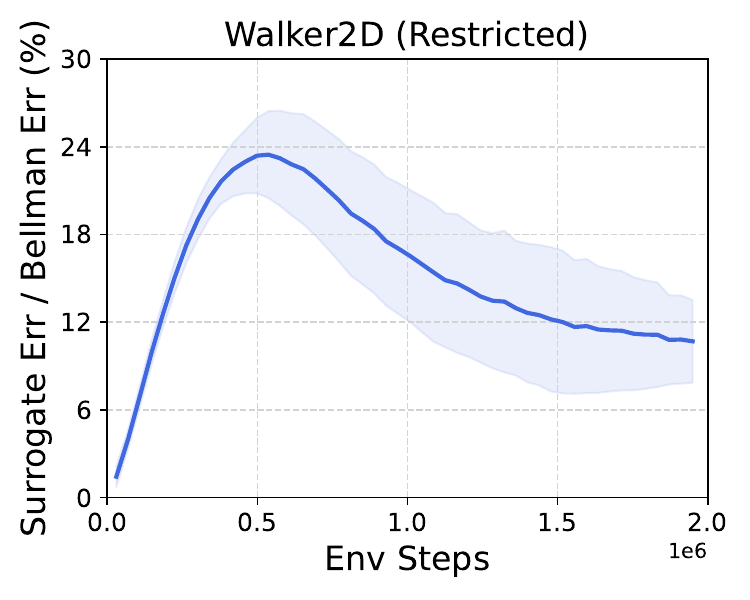}\\
    \includegraphics[width=0.325\textwidth]{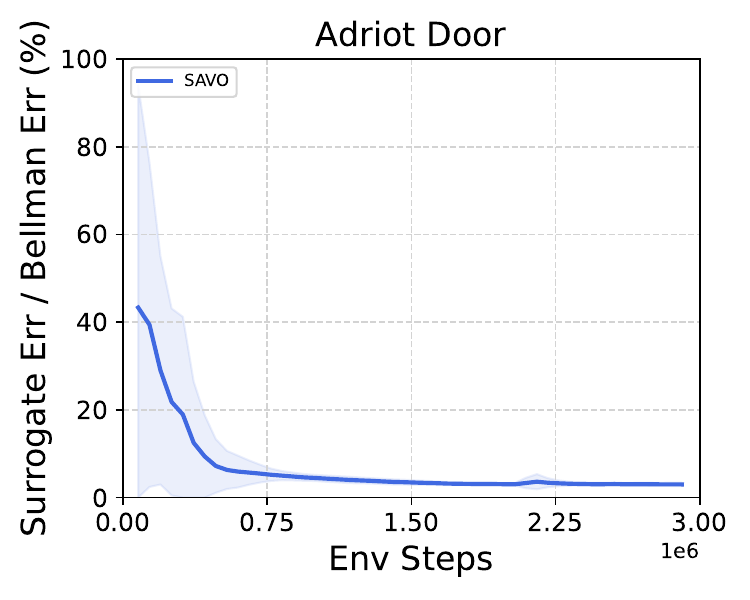}
    \includegraphics[width=0.325\textwidth]{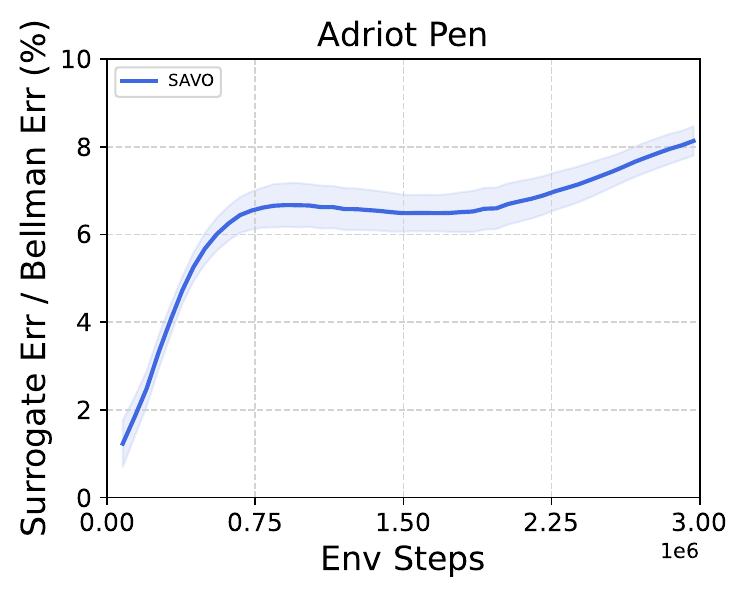}
    \includegraphics[width=0.325\textwidth]{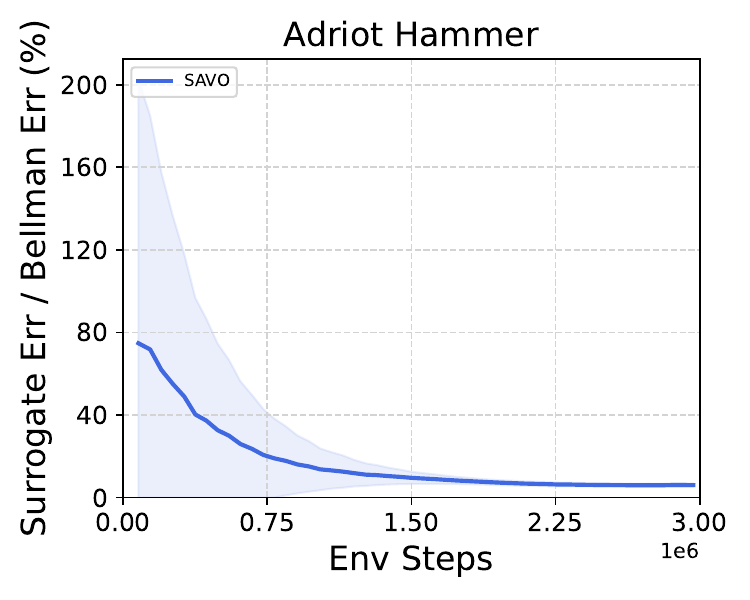}
    \caption{
    \rebuttal{
    \textbf{Surrogate Approximation Error Analysis}.} The plot shows the surrogate approximation error as a percentage of the Bellman error during training across various environments:
    $\frac{\text{Surrogate Approximation Error}}{\text{Bellman Error}} \%$.
    In most tasks, the surrogate loss converges to a relatively low value (within 1--10\% of the Bellman error), indicating that (i) the surrogates effectively track updates to the Q-function, and (ii) the surrogate loss remains strictly positive, highlighting the smoothness of the surrogate landscape, especially in flat regions, where the exact approximation is undesirable to maintain effective gradient propagation. Notably, for the \emph{Inverted Double Pendulum (Restricted)} environment, a rise in approximation error is observed towards the end of training. Upon further investigation, this was attributed to overtraining after the agent had already converged, corresponding to an unstable decline in task performance.
    }
    
    \label{fig:approximation_error}
\end{figure}

\clearpage
\rebuttal{
\subsection{Q-Smoothing Analysis: Discrete vs. Continuous Action Spaces}
}
\label{app:sec:q_smoothing}

\begin{figure*}[ht]
    \centering
    \begin{subfigure}[t]{0.32\textwidth}
        \includegraphics[width=\textwidth]{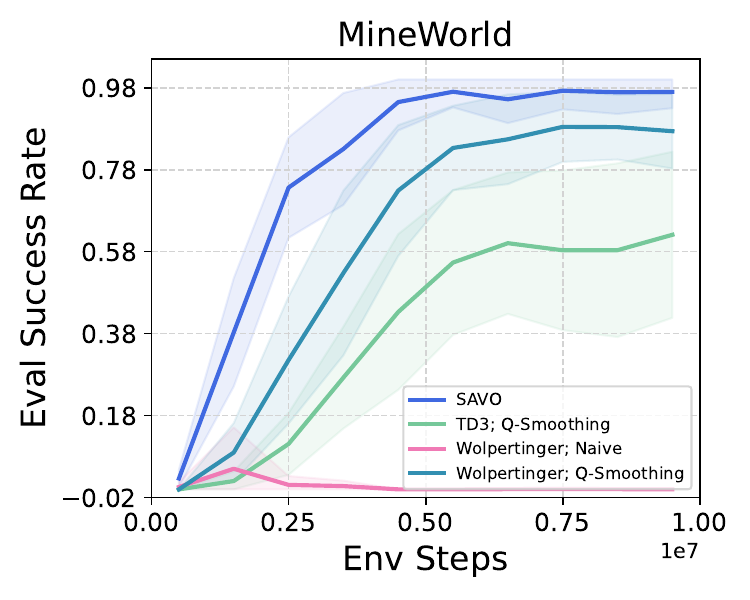}
        \caption{MineWorld}
    \end{subfigure}
    \begin{subfigure}[t]{0.32\textwidth}
        \includegraphics[width=\textwidth]{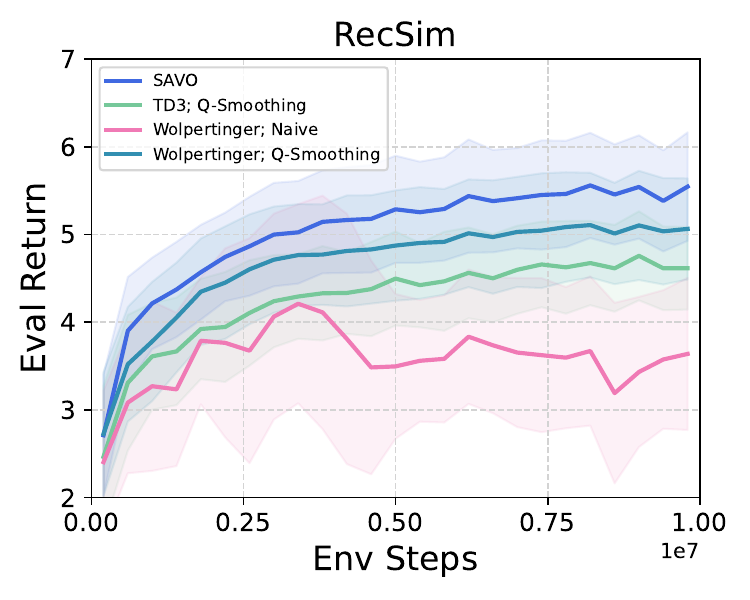}
        \caption{RecSim}
    \end{subfigure}
    \begin{subfigure}[t]{0.32\textwidth}
        \includegraphics[width=\textwidth]{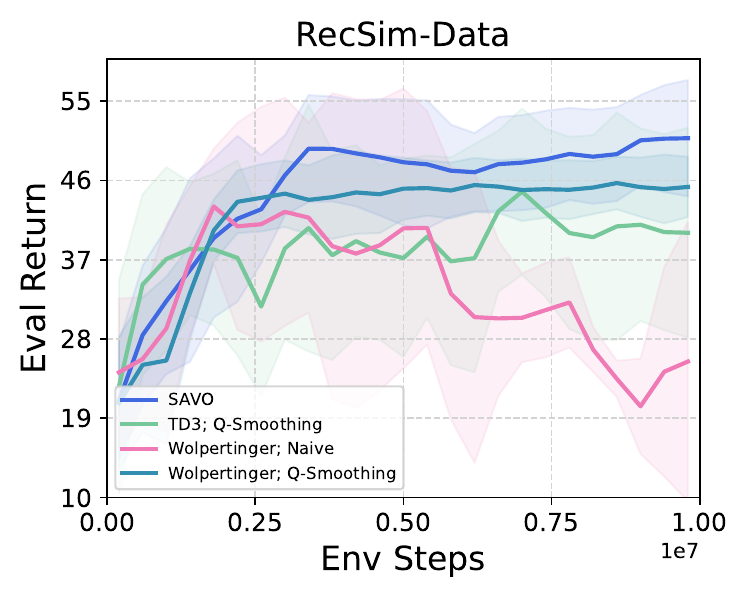}
        \caption{RecSim-Data}
    \end{subfigure}
    \newline
    \begin{subfigure}[t]{0.32\textwidth}
        \includegraphics[width=\textwidth]{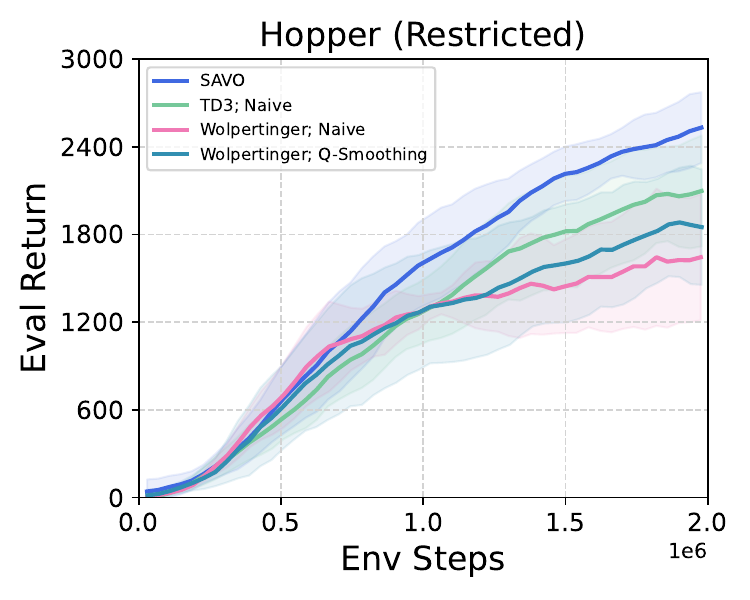}
        \caption{Hopper (Restricted)}
    \end{subfigure}
    \begin{subfigure}[t]{0.32\textwidth}
        \includegraphics[width=\textwidth]{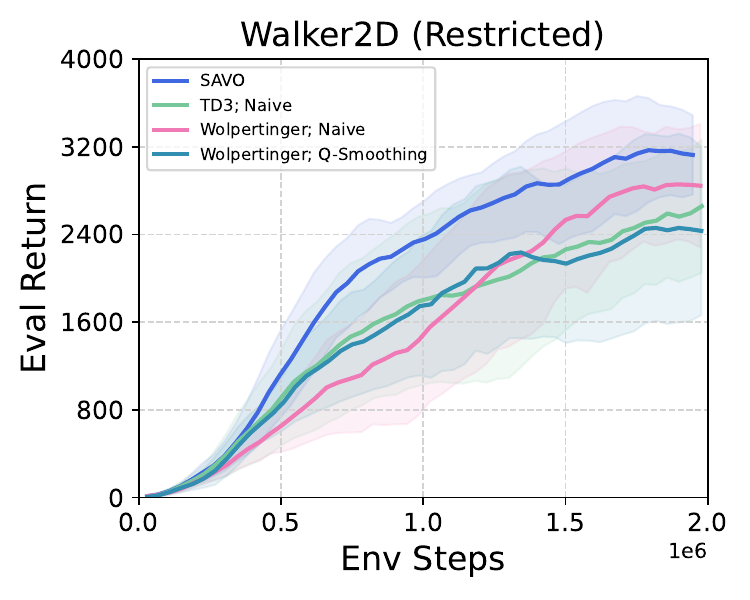}
        \caption{Walker2D (Restricted)}
    \end{subfigure}
    \caption{
    \rebuttal{\textbf{Impact of Q-smoothing}}. The plots compare the performance of baselines with and without Q-smoothing. Results are averaged over 5 random seeds, with shading indicating variance. Q-smoothing benefits discrete tasks but has negligible impact in continuous action spaces.
    }
    \label{app:fig:q_smoothing_results}
\end{figure*}

\begin{table*}[ht]
\centering
\begin{tabular}{lcccccc}
\toprule
Baseline & MineWorld & RecSim & RecSim-Data & Hopper & Walker2D \\
\midrule
TD3 & $0.6$ & $4.5$ & $42$ & $2100$ & $2700$ \\
Wolpertinger [Naive] & \color{red}{$0.0$} & \color{red}{$3.9$} & \color{red}{$40$} & {$\underline{1650}$} & {$\underline{2900}$} \\
Wolpertinger [Q-Smoothing] & {$\underline{0.9}$} & {$\underline{5.0}$} & {$\underline{46}$} & {$1850$} & {$2400$} \\
SAVO (Ours) & $\bm{0.98}$ & $\bm{5.5}$ & $\bm{51}$ & $\bm{2500}$ & $\bm{3200}$ \\
\bottomrule
\end{tabular}
\caption{\rebuttal{\textbf{Q-smoothing in discrete tasks}.} We compare the performance of baselines with and without Q-smoothing across tasks. \underline{Underline} denotes which variant, naive or Q-smoothing, is used in the paper results. Wolpertinger [Naive] significantly underperforms in discrete action space tasks (denoted in {\color{red}{red}}), and thus, we reported results on Wolpertinger [Q-Smoothing] in the paper. In continuous action space tasks, there was no benefit to Q-smoothing, and thus we chose to report results on Wolpertinger [Naive] as it is closer to the underlying TD3 algorithm. Note that the same Q-smoothing principle is applied for TD3 and SAVO, too, i.e., their Q-function is smoothed for better gradients in discrete action spaces, but unsmoothed Q-function is used in continuous action spaces.}
\label{app:tab:q_smoothing_results}
\end{table*}

The approximate surrogates introduced in \mysecref{sec:approximation} also have a smoothing effect on the Q-landscape that might ease gradient flow. A similar smoothing can be applied to the primary Q-function. We found such \emph{Q-smoothing}, which involves learning an auxiliary Q-function to approximate and smooth the primary Q-function, to be essential for discrete action spaces. Q-smoothing facilitates the necessary gradient flow in discrete action space tasks because the primary Q-function is only trained on action representations corresponding to a finite number of discrete actions, while the intermediate action representations might have arbitrary values. By learning an approximate Q-function, the regions between the true action representations are smoothed, facilitating gradient flow.

Thus, in all baselines and SAVO in discrete action space tasks, we included Q-smoothing. However, we did not notice any benefit of Q-smoothing in continuous action space tasks, and thus, all baselines and SAVO do not have Q-smoothing. SAVO still has surrogate smoothing in all environments, because non-smoothed surrogates do not let gradient flow through flat regions.

To demonstrate the impact of Q-smoothing in both discrete and continuous action spaces, we conducted a detailed analysis across several tasks in \myfig{app:fig:q_smoothing_results} and Table~\ref{app:tab:q_smoothing_results}. This section investigates its efficacy and highlights the nuanced differences in its utility across environments.

\textbf{Discrete Action Spaces: Importance of Q-Smoothing.}

For discrete tasks, smoothing the Q-function significantly enhances performance by mitigating the complexity of local optima in diverse Q-value landscapes. This experiment primarily compares 1-Actor k-samples Wolpertinger-Naive and Wolpertinger-Q-smoothing approaches. As shown in Fig.~\ref{app:fig:q_smoothing_results}, Q-smoothing is essential for Wolpertinger to perform well, while the non-smoothed counterparts significantly suffer in MineWorld and RecSim tasks. Note that the TD3 and SAVO results also \emph{include} Q-smoothing.

\textbf{Continuous Action Spaces: Limited Impact of Q-Smoothing.}

In continuous action spaces, Q-smoothing does not yield a significant performance gain. In Wolpertinger, both the naive and Q-smoothing variants show comparable performance, indicating sufficient gradient information is present throughout the action space (unlike discrete action space tasks that have missing true Q-values).

For these tasks, as shown in Fig.~\ref{app:fig:q_smoothing_results}, the introduction of Q-smoothing neither improves nor degrades performance. This justifies its exclusion from our continuous action space experiments and explains why we reported results for Wolpertinger [Naive] in these environments, as it is closer to the underlying TD3 algorithm. Note that the TD3 and SAVO results also \emph{exclude} Q-smoothing.

\textbf{Conclusion.}  
Q-smoothing is crucial for discrete action space tasks, as demonstrated by its strong performance in our results. However, it provides no added value for continuous tasks. Consequently, our baselines reflect these observations, ensuring fair comparisons across all evaluated methods.

\rebuttal{
\subsection{Specialized Initialization Strategies for Diversity in SAVO}
\label{sec:app:specialized_initialization}
}

To explore the potential impact of diverse policy and surrogate value function initializations on algorithm performance, we tested two specialized initialization strategies beyond the default Xavier initialization~\citep{glorot2010understanding}:
\begin{itemize}
    \item \textbf{Xavier} (default). Weights are initialized with the default initialization: $w \sim $\text{Xavier-init}
    \item \textbf{Random}. Weights are initialized from a standard normal distribution, i.e., $w \sim \mathcal{N}(0, 1)$.
    \item \textbf{Add}. Weights are initialized using Xavier initialization, followed by the addition of scaled standard normal noise, i.e., $x \sim \text{Xavier-init}$, $y \sim 0.5 \cdot \mathcal{N}(0, 1)$, and $w = x + y$.
\end{itemize}

We compare these specialized initialization strategies in various tasks, with reward curves reported in Fig.~\ref{app:fig:init_comparison} and summarized below:
\begin{itemize}
    \item \textbf{MineWorld:} Add $\approx$ Random $\approx$ Xavier
    \item \textbf{RecSim:} Add $\approx$ Random $\approx$ Xavier
    \item \textbf{Hopper (Restricted):} Add $\approx$ Random $\approx$ Xavier
    \item \textbf{Adroit Door:} Add $\approx$ Random $<$ \textbf{Xavier}
\end{itemize}

\textbf{Findings.}  
The results indicate that specialized initialization strategies aimed at increasing diversity do not particularly improve performance. Across most tasks, Add and Random strategies perform similarly to standard Xavier initialization. However, in the Adroit Door task, the specialized initializations underperform compared to Xavier, suggesting that task-specific factors might influence the effectiveness of standard initialization strategies.

\textbf{Conclusion.}  
While our experiments show no significant benefit from specialized initialization strategies, the idea of explicitly incorporating diversity into the optimization process remains promising. We believe that designing algorithms with explicit diversity objectives \emph{throughout training} could serve as a valuable heuristic in future work.

\begin{figure*}[ht]
    \centering
    \begin{subfigure}[t]{0.45\textwidth}
        \includegraphics[width=\textwidth]{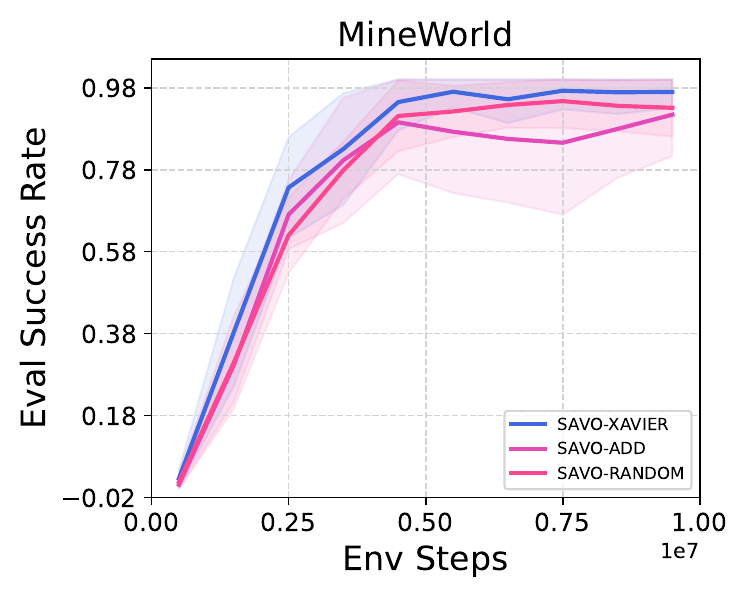}
        \caption{MineWorld}
    \end{subfigure}
    \begin{subfigure}[t]{0.45\textwidth}
        \includegraphics[width=\textwidth]{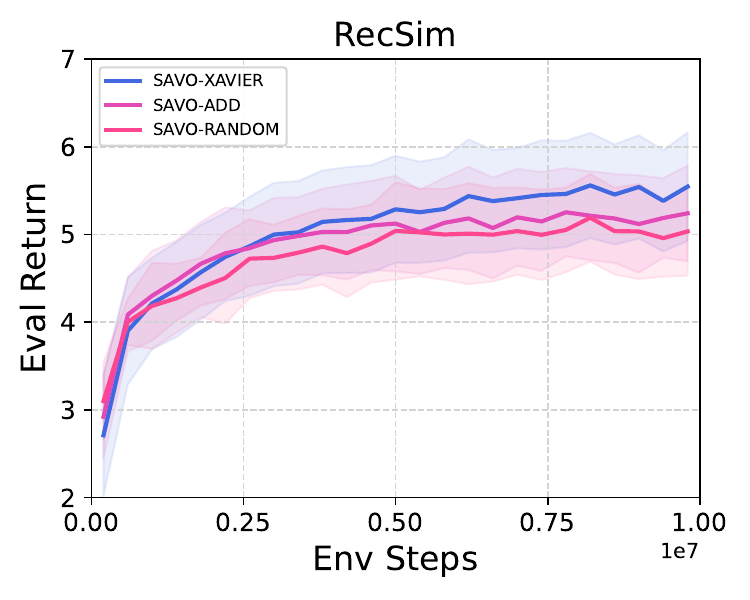}
        \caption{RecSim}
    \end{subfigure}
    \begin{subfigure}[t]{0.45\textwidth}
        \includegraphics[width=\textwidth]{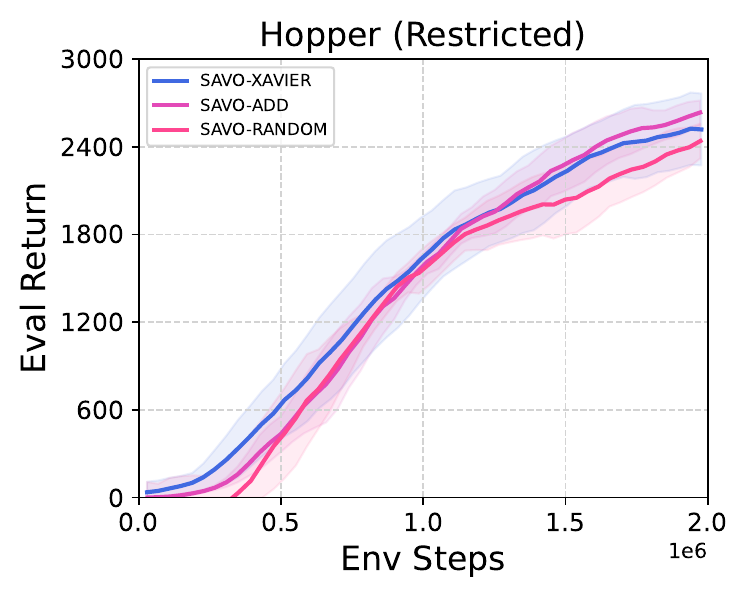}
        \caption{Hopper (Restricted)}
    \end{subfigure}
    \begin{subfigure}[t]{0.45\textwidth}
        \includegraphics[width=\textwidth]{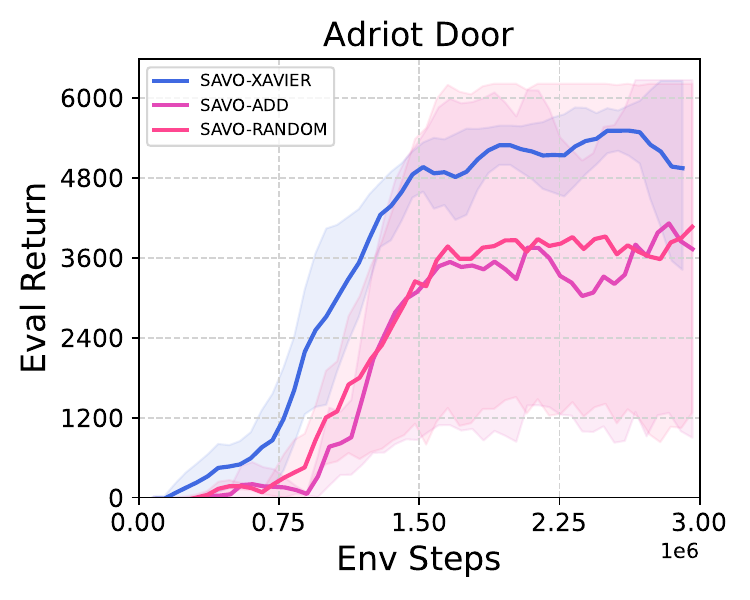}
        \caption{Adroit Door}
    \end{subfigure}
\vspace{-5pt}
    \caption{
    \rebuttal{\textbf{Specialized Initialization Strategies}}.
    Reward curves compare Random and Add strategies to standard Xavier initialization across 4 tasks, showing no significant advantage of specialized initialization for increasing diversity.
    }
    \label{app:fig:init_comparison}
\end{figure*}

\section{Experiment Details}

\subsection{Aggregated Results: Performance Profiles} \label{app:rl-profile-detail}
\cite{agarwal2021deep} proposed a robust means to rigorously validate the efficacy of our approach.
Through the incorporation of the suggested performance profile, we have conducted a more thorough comparison of our approach against baselines, resulting in a comprehensive understanding of the inherent statistical uncertainty in our results.
In Figure \ref{fig:rliable-main}, the x-axis illustrates normalized scores across all tasks, employing \textit{min-max scaling} to normalize scores based on the initial performance of untrained agents aggregated across random seeds (i.e., \textit{Min}) and the final performance presented in Figure \ref{fig:baseline_results} (i.e., \textit{Max}).

Figure \ref{fig:rliable-main} reveals the consistent high performance of our method across various random seeds, with its curve consistently ranking at the top of the x-axis changes, while baseline curves exhibit earlier declines compared to our approach. This visual evidence substantiates the robustness and reliability of our method across different experimental conditions.

\subsection{Implementation Details} \label{app:sec:implementation-details}

We used PyTorch~\citep{paszke2019pytorch} for our implementation, and the experiments were primarily conducted on workstations with either NVIDIA GeForce RTX 2080 Ti, P40, or V32 GPUs on. Each experiment seed takes about 4-6 hours for Mine World, 12-72 hours for Mujoco, and 6-72 hours for RecSim, to converge. We use the Weights \& Biases tool~\citep{biewald2020experiment} for plotting and logging experiments. All the environments were developed using the OpenAI Gym Wrapper~\citep{brockman2016openai}. We use the Adam optimizer~\citep{kingma2014adam} throughout.

\subsection{Hyperparameters}
\label{app:sec:hyperparameters}

The environment-specific and RL algorithm hyperparameters are described in Table~\ref{app:tab:hyperparams_env}.

\subsection{Common Hyperparameter Tuning}
\label{app:sec:hyperparameter_tuning}

To ensure fairness across all baselines and our methods, We searched over hyper-parameters that are common across baselines;

\begin{itemize}[leftmargin=*, parsep=5pt, itemsep=0pt, topsep=0pt]

\item \textbf{Learning rate of Actor and Critic}:
\textit{(Actor)} We searched over $ \{0.01, 0.001, 0.0001, 0.0003\}$ and found that 0003 to be the most stable for the actor's learning across all tasks.
\textit{(Critic)} Similarly to actor, we searched over $ \{0.01, 0.001, 0.0001, 0.0003\}$ and found that 0.0003 to be the most stable for the critic's learning across all tasks.

\item \textbf{Network Size of Actor and Critic}:
\textit{(Critic)} In order for the fair comparison, we employed the same network size for the Q-network.
We individually performed the architecture search on each task and found a specific network size performing the best in the task.
\textit{(Actor)} Similarly to critic, we employed the same network size for the actor components in the baseline and the cascading actors in SAVO.
And, likewise, we performed the individual architecture search on each task and found a specific network size performing the best in the task.
\end{itemize}

\begin{table*}[ht]
\centering
\begin{tabular}{llll}
Hyperparameter & Mine World & MuJoCo/Adroit & RecSim \\
\midrule
\multicolumn{4}{c}{\textbf{Environment}} \\
\rule{0pt}{1ex} \\
Total Timesteps & 10M & 3M & 10M \\
Number of epochs & 5K & 8K & 10K \\
\# Envs in Parallel & 20 & 10 & 16 \\
Episode Horizon & 100 & 1000 & 20 \\
Number of Actions & 104 & N/A & 10000 \\
True Action Dim & 4 & 5 & 30 \\
Extra Action Dim & 5 & N/A & 15 \\

\midrule
\multicolumn{4}{c}{\textbf{RL Training}} \\
\rule{0pt}{1ex} \\
Batch size & 256 & 256 & 256 \\
Buffer size & 500K & 500K & 1M \\
Actor: LR & 0.0003 & 0.0003 & 0.0003 \\
Actor: $\epsilon_{\text{start}}$ & 1 & 1 & 1 \\
Actor: $\epsilon_{\text{end}}$ & 0.01 & 0.01 & 0.01 \\
Actor: $\epsilon$ decay steps & 5M & 500K & 10M \\
Actor: $\epsilon$ in Eval & 0 & 0 & 0 \\
Actor: MLP Layers & 128\_64\_64\_32 & 256\_256 & 64\_32\_32\_16 \\
Critic: LR & 0.0003 & 0.0003 & 0.0003 \\
Critic: $\gamma$ & 0.99 & 0.99 & 0.99 \\
Critic: $\epsilon_{\text{start}}$ & 1 & 1 & 1 \\
Critic: $\epsilon_{\text{end}}$ & 0.01 & 0.01 & 0.01 \\
Critic: $\epsilon$ decay steps & 500K & 500K & 2M \\
Critic: $\epsilon$ in Eval & 0 & 0 & 0 \\
Critic: MLP Layers & 128\_128 & 256\_256 & 64\_32 \\
\# updates per epoch & 20 & 50 & 20 \\
List Length & 3 & 3 & 3 \\
Type of List Encoder & DeepSet & DeepSet & DeepSet \\
List Encoder LR & 0.0003 & 0.0003 & 0.0003 \\

\end{tabular}
\caption{Environment/Policy-specific Hyperparameters}
\label{app:tab:hyperparams_env}
\end{table*}

\clearpage

\rebuttal{
\section{Scaling Number of Actors Needed in SAVO}
\label{app:sec:num_actors}
}

\begin{figure}[ht]
    \centering
    \includegraphics[width=0.45\textwidth]{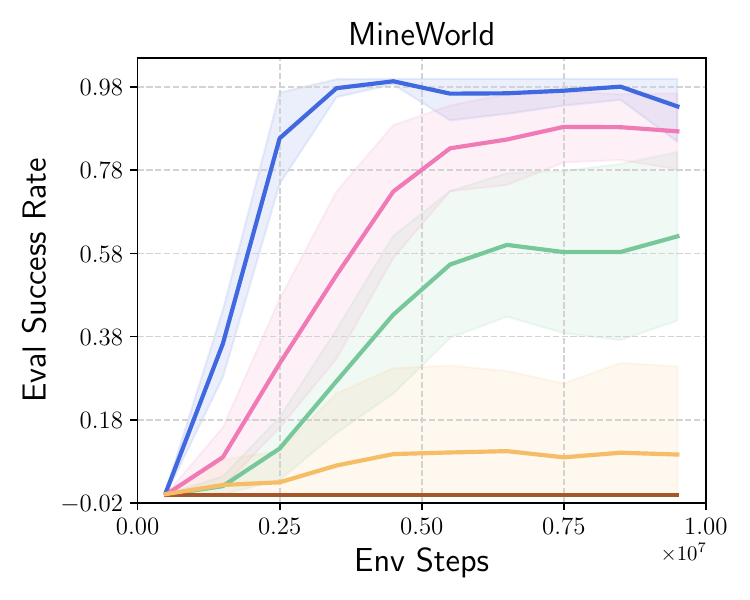}
    \includegraphics[width=0.45\textwidth]{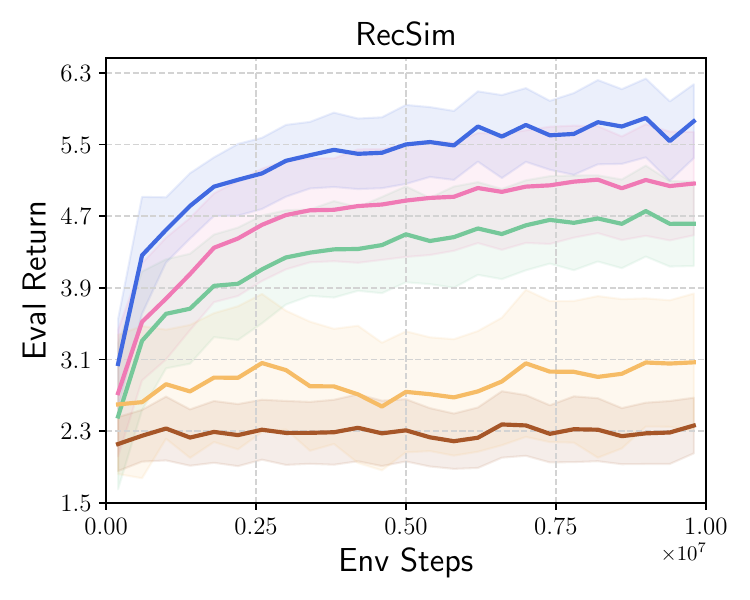}  \\
    \includegraphics[width=0.45\textwidth]{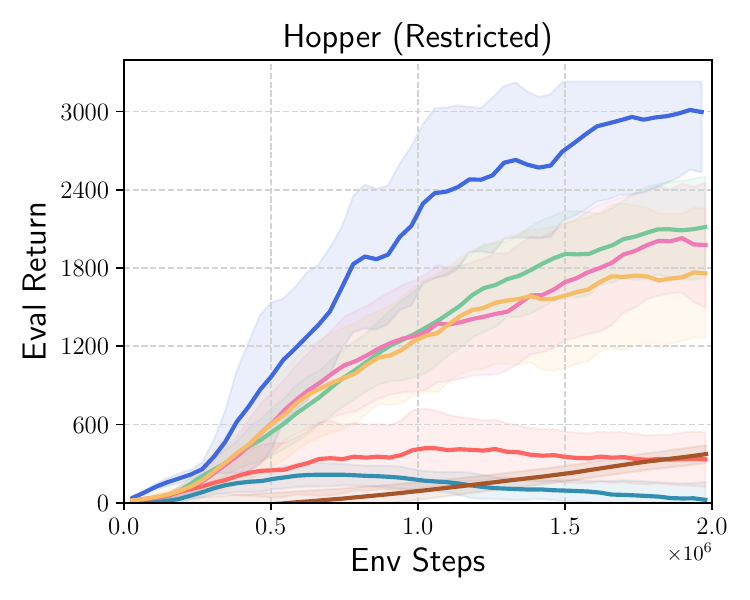}
    \includegraphics[width=0.45\textwidth]{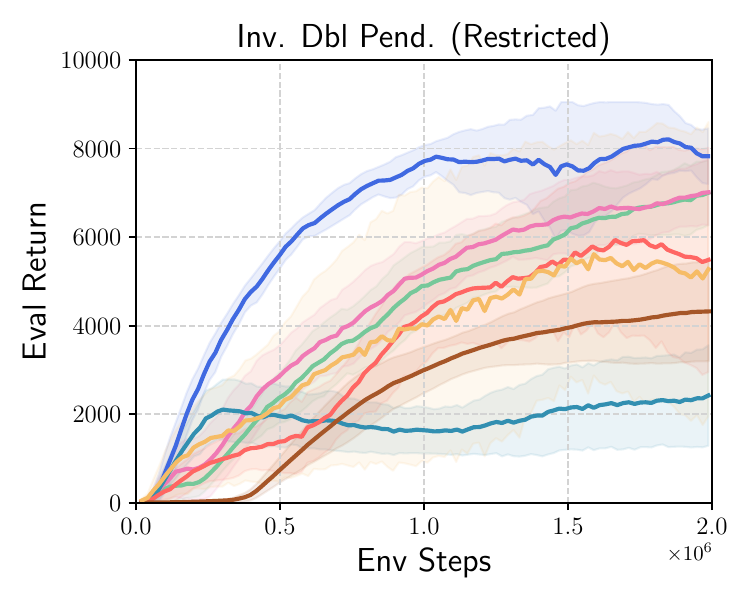}
    \begin{subfigure}[t]{\linewidth}
    \centering
        \includegraphics[width=\linewidth]{images/exp/legends.pdf}
    \end{subfigure}
    \caption{
    \rebuttal{\textbf{SAVO optimized for number of actors against baselines}.}
    Comparison with baselines with SAVO optimized for the hyperparameter of the number of actors (10-15 actors) shows a more significant improvement than using only 3 actors in \myfig{fig:baseline_results}.
    }
    \label{app:fig:baseline_num_actors}
\end{figure}

\rebuttal{
\subsection{Benchmarking SAVO with larger number of actors}
}
While the results in main paper in \myfig{fig:baseline_results} use only 3 actors, we show in \myfig{app:fig:baseline_num_actors} that SAVO's improvement over TD3 and other baselines is even more significant when the number of actors is optimized and chosen as 10 (or 15 in RecSim).

\rebuttal{
\section{Soft Actor-Critic (SAC): Mitigating Suboptimality with SAVO}
\label{app:sec:sac}
}

We show that SAC~\citep{haarnoja2018soft} is susceptible to gradient-descent-based local optima in the soft Q-landscape and demonstrate how SAVO improves performance when integrated with SAC.

\myparagraph{SAC is susceptible to local optima in soft Q-landscape}
DPG-based methods like TD3 optimize deterministic policies using:
\[
\pi^* = \arg \max_{\pi} \mathbb{E}_{s \sim \rho^\pi} \left[ Q^\pi(s, \pi(s)) \right],
\]
where gradient ascent on $Q^\pi(s, \pi(s))$ often results in convergence to local optima due to the non-convexity of the Q-landscape.

SAC extends this framework by optimizing stochastic policies through entropy regularization, as:
\[
\pi^* = \arg \max_{\pi} \mathbb{E}_{s \sim \rho^\pi, a \sim \pi} \left[ Q^\pi(s, a) + \alpha \mathcal{H}(\pi(\cdot | s)) \right],
\]
where $\mathcal{H}(\pi(\cdot | s)) = -\mathbb{E}_{a \sim \pi} [\log \pi(a | s)]$ is the entropy of the policy, weighted by $\alpha > 0$.

However, despite the entropy-regularized objective, SAC's actor is trained with gradient ascent on the soft Q-function $Q^\pi(s, a)$, which can be non-convex. Local optima in the (soft) Q-landscape arise from fundamental properties of the MDP and the non-convex relationship of actions and expected environment return. As a result, SAC policies are as prone to being trapped in local optima, in the KL-divergence sense, defined by the soft Q-landscape.

\begin{figure*}[ht]
    \centering
    \includegraphics[width=0.325\linewidth]{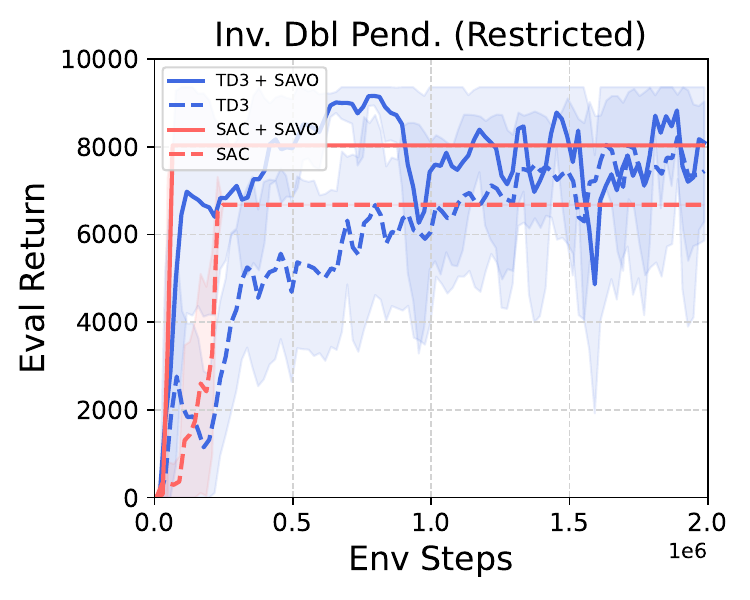}
    \includegraphics[width=0.325\linewidth]{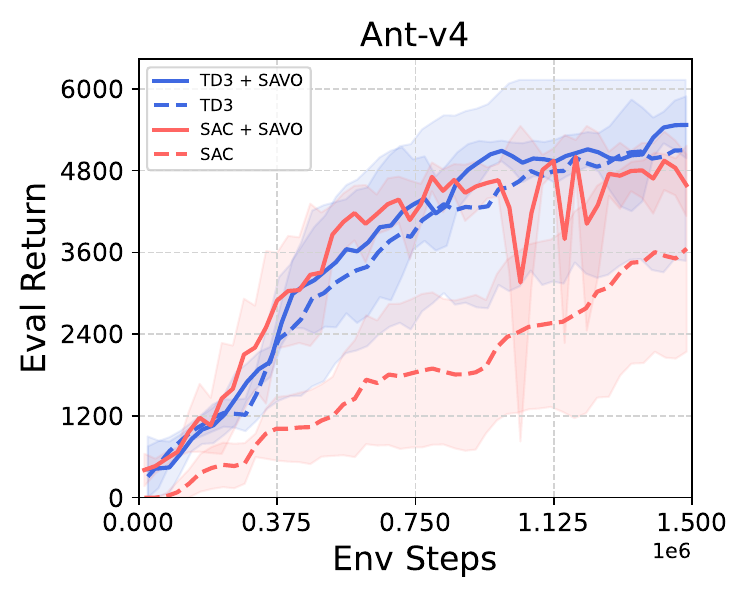}
    \includegraphics[width=0.325\linewidth]{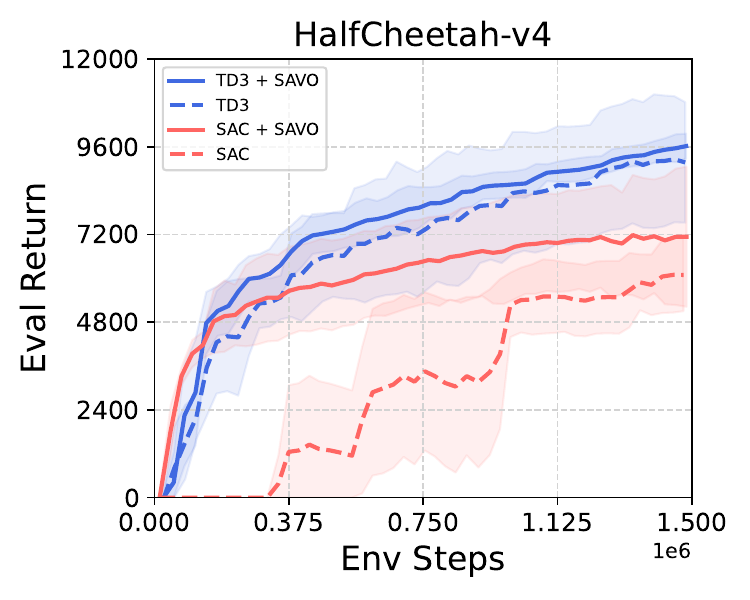}
    \caption{
    \rebuttal{
    \textbf{SAVO is complementary to TD3 and SAC}.} SAVO + SAC outperforms SAC in the three tasks evaluated: (i) Restricted Inverted Double Pendulum, (ii) Unrestricted Ant-v4, (iii) Unrestricted HalfCheetah-v4. SAVO improves or matches the performance of TD3 in the severely non-convex Q-landscape of the Restricted Inverted Double Pendulum and the high-dimensional action spaces of Ant-v4 and HalfCheetah-v4.
    }
    \label{app-fig:savo_sac_vs_sac}
\end{figure*}

\myparagraph{SAVO to mitigate SAC suboptimality}
To address this challenge of SAC's stochastic actor getting stuck in the soft Q-landscape's local optima, we propose using SAVO as the actor architecture for SAC. In our approach, we introduce a maximizer stochastic actor $\pi_M$ that selects from successive stochastic actors $\nu_i(s; a_{<i})$ by maximizing:
\[
\pi_M(s) := \arg \max_{\nu_0, ..., \nu_k} \mathbb{E}_{s \sim \rho^\pi, a \sim \pi} \left[ Q^\pi(s, a) + \alpha \mathcal{H}(\pi(\cdot | s)) \right].
\]
This SAC+SAVO approach leverages SAVO's capacity to dynamically select policies that better navigate the soft Q-landscape while preserving SAC's entropy-regularized exploration.

For this preliminary combination of SAC with SAVO, we do not employ the successive surrogates but only utilize successive actors with conditioning on previous actions.

\noindent\textbf{Empirical Results.} \myfig{app-fig:savo_sac_vs_sac} illustrates the relative performance of SAC, TD3, TD3+SAVO, and SAC+SAVO across the three tasks. Key findings include:
\begin{itemize}
    \item \textit{Hopper} and \textit{Walker2D:} SAC+SAVO significantly improves performance compared to SAC, demonstrating SAVO's ability to overcome local optima in the soft Q-landscape.
    \item \textit{Inverted Pendulum:} SAC+SAVO exhibits faster convergence compared to SAC, further highlighting the synergy between SAVO and entropy-regularized stochastic policies.
    \item Across all tasks, TD3+SAVO consistently outperforms TD3, confirming SAVO's generalizability to deterministic policy optimization.
\end{itemize}

These results underscore the effectiveness of combining SAVO with both SAC and TD3, providing a robust solution to mitigate local optima and enhance exploration in complex control tasks.

\section{Q-Value Landscape Visualizations}
\label{q_value_landscape}

\subsection{1-Dimensional Action Space Environments}
\label{app:visualisation-q-landscape-1}

We conducted a Q-space analysis across Mujoco environments to show that successive critics reduce local optima, aiding actors in optimizing actions. The outcomes are depicted in Figures \ref{fig:q_spaces_full_envs_easy} and \ref{fig:q_spaces_full_envs_hard}.

Figure \ref{fig:q_spaces_full_envs_easy} illustrates a representative Q landscape from the easy environments, which are uniformly smooth. This uniformity in the primary Q space simplifies the identification of optimal actions.

Figure \ref{fig:q_spaces_full_envs_hard} shows that the primary Q landscape (leftmost and rightmost) in challenging environments is clearly uneven with several local optima. However, the Q landscapes learned by successive critics $Q_i$ demonstrate a gradual transition toward smoothness by pruning out the locally optimal peaks below the previously selected actions' Q-values. This aids the actors in identifying improved actions that are better global optima over the primary critic.
Finally, when visualized together on the primary critic (rightmost figure) the subsequent actions yield more enhanced Q-values than $a_0$, which would have been the action selected by a single actor.

\begin{figure}[ht]
    \centering
    \begin{subfigure}[t]{0.24\textwidth}
        \includegraphics[width=\textwidth]{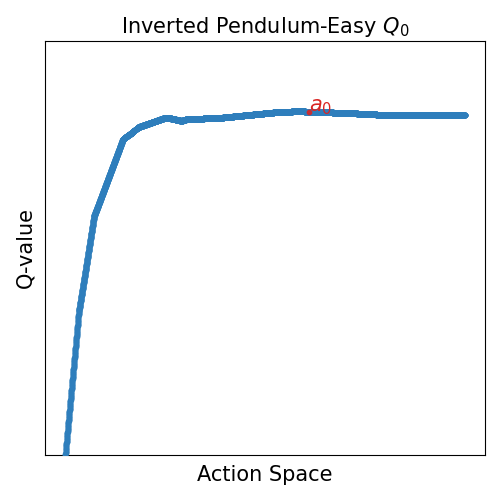}
        \caption{$Q_0(s, a_0)$}
    \end{subfigure}
    \begin{subfigure}[t]{0.24\textwidth}
        \includegraphics[width=\textwidth]{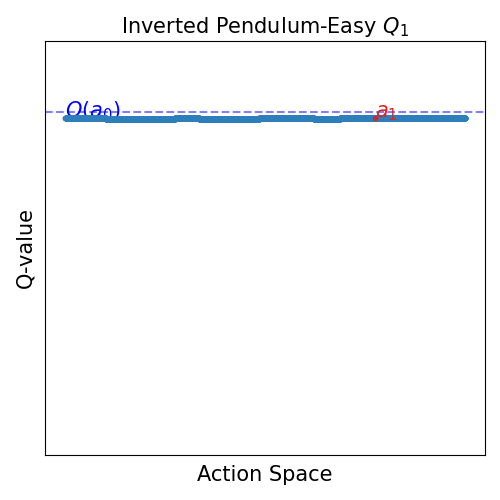}
        \caption{$Q_1(s, a_1 | a_0)$}
    \end{subfigure}
    \begin{subfigure}[t]{0.24\textwidth}
        \includegraphics[width=\textwidth]{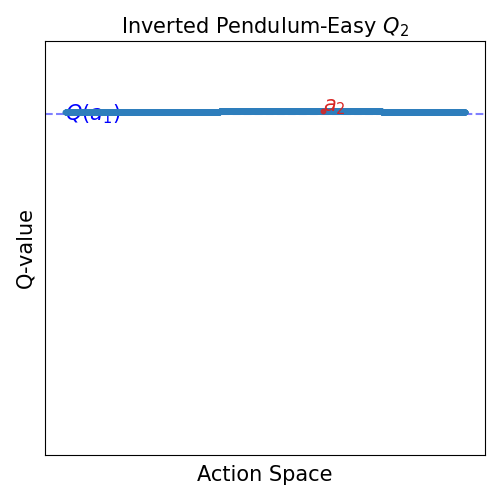}
        \caption{$Q_2(s, a_2 | \{a_0, a_1\})$}
    \end{subfigure}
    \begin{subfigure}[t]{0.24\textwidth}
        \includegraphics[width=\textwidth]{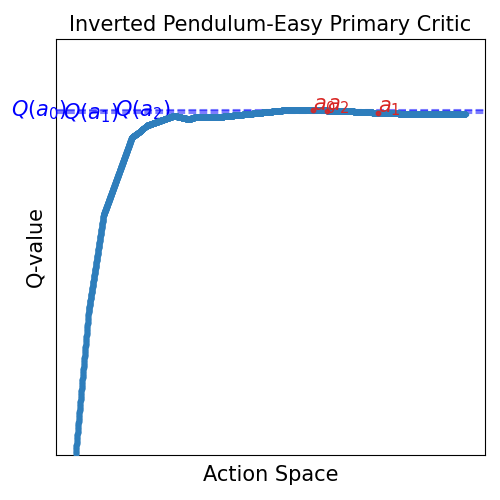}
        \caption{$Q(s, a_i) \forall i=0,1,2$}
    \end{subfigure}
    \caption{
    Successive Q landscape and primary Q landscape of Inverted Pendulum-v4.
    }
    \label{fig:q_spaces_full_envs_easy}
    \vspace{-5pt}
\end{figure}

\begin{figure}[!ht]
    \centering
    \begin{subfigure}[t]{0.24\textwidth}
        \includegraphics[width=\textwidth]{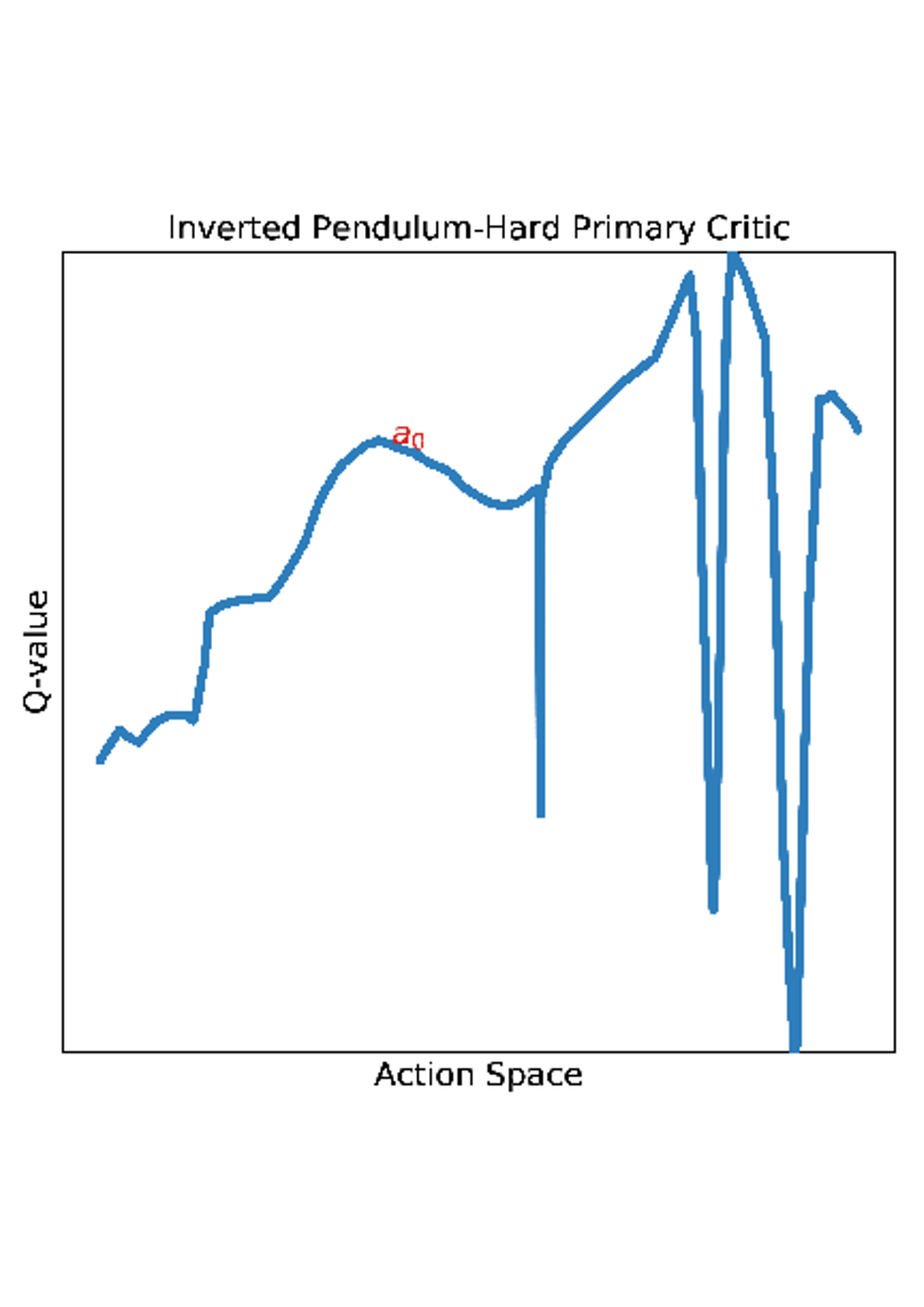}
    \end{subfigure}
    \begin{subfigure}[t]{0.24\textwidth}
        \includegraphics[width=\textwidth]{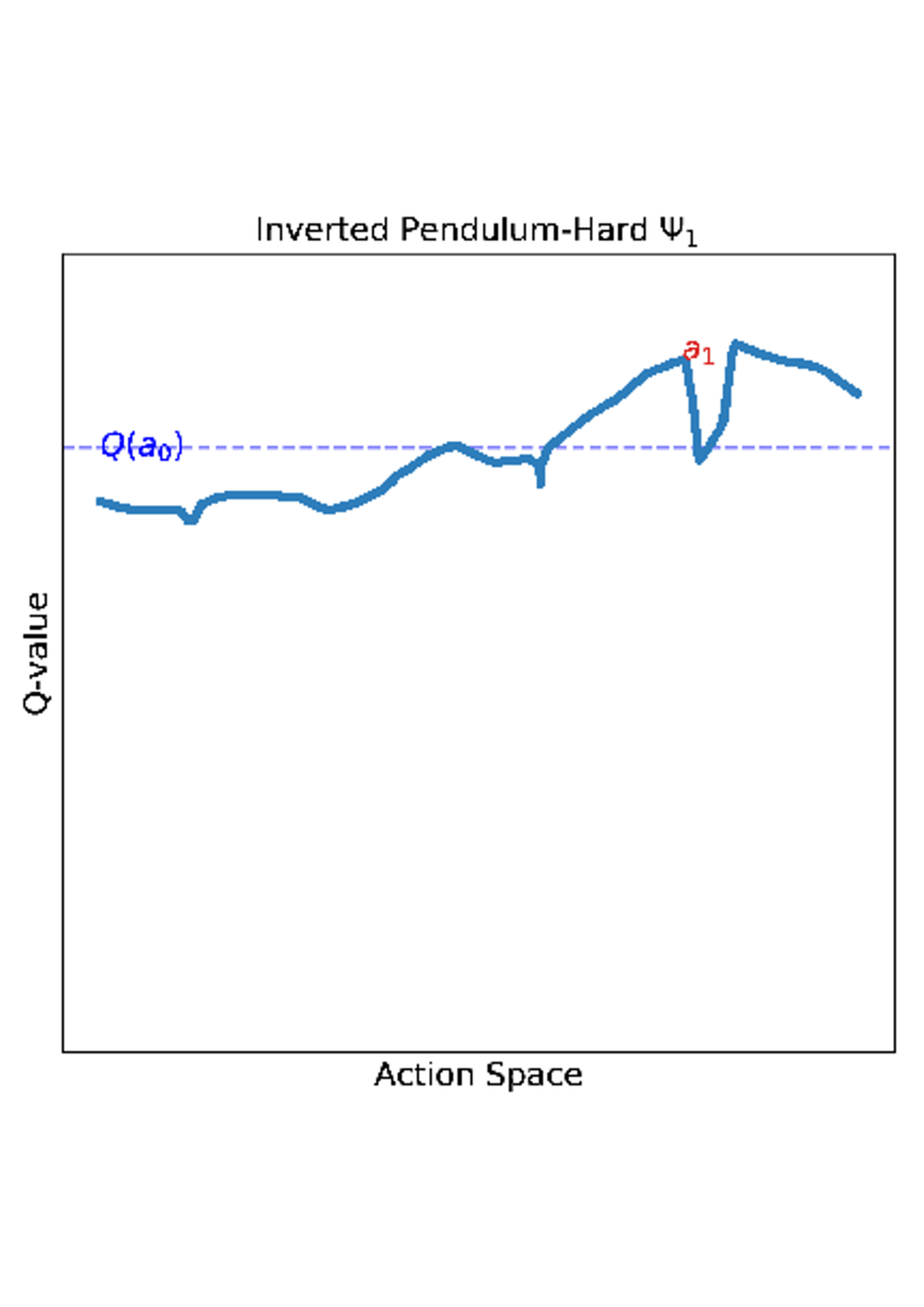}
    \end{subfigure}
    \begin{subfigure}[t]{0.24\textwidth}
        \includegraphics[width=\textwidth]{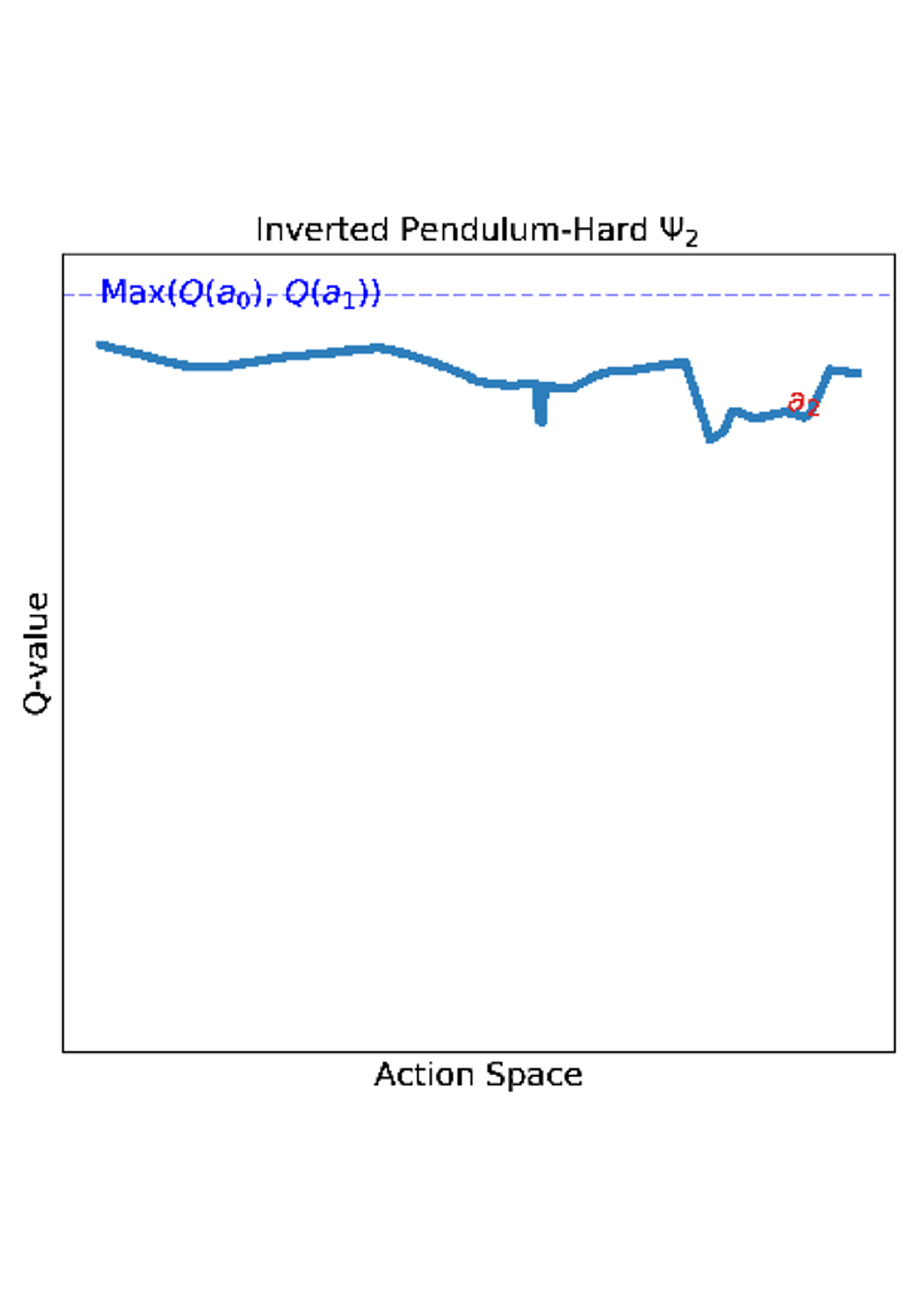}
    \end{subfigure}
    \begin{subfigure}[t]{0.24\textwidth}
        \includegraphics[width=\textwidth]{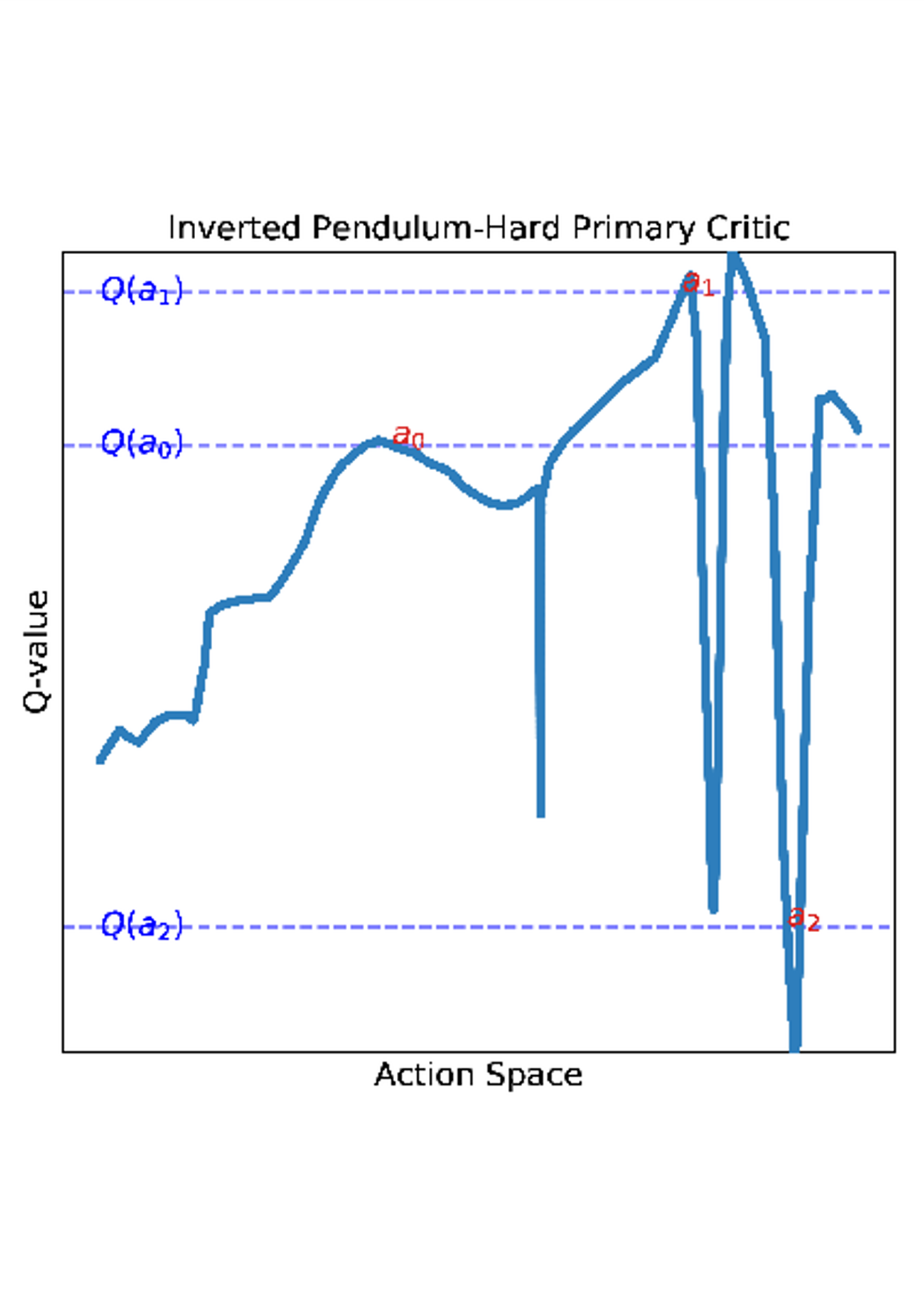}
    \end{subfigure}
    \vspace{-20pt}
    \begin{subfigure}[t]{0.24\textwidth}
        \includegraphics[width=\textwidth]{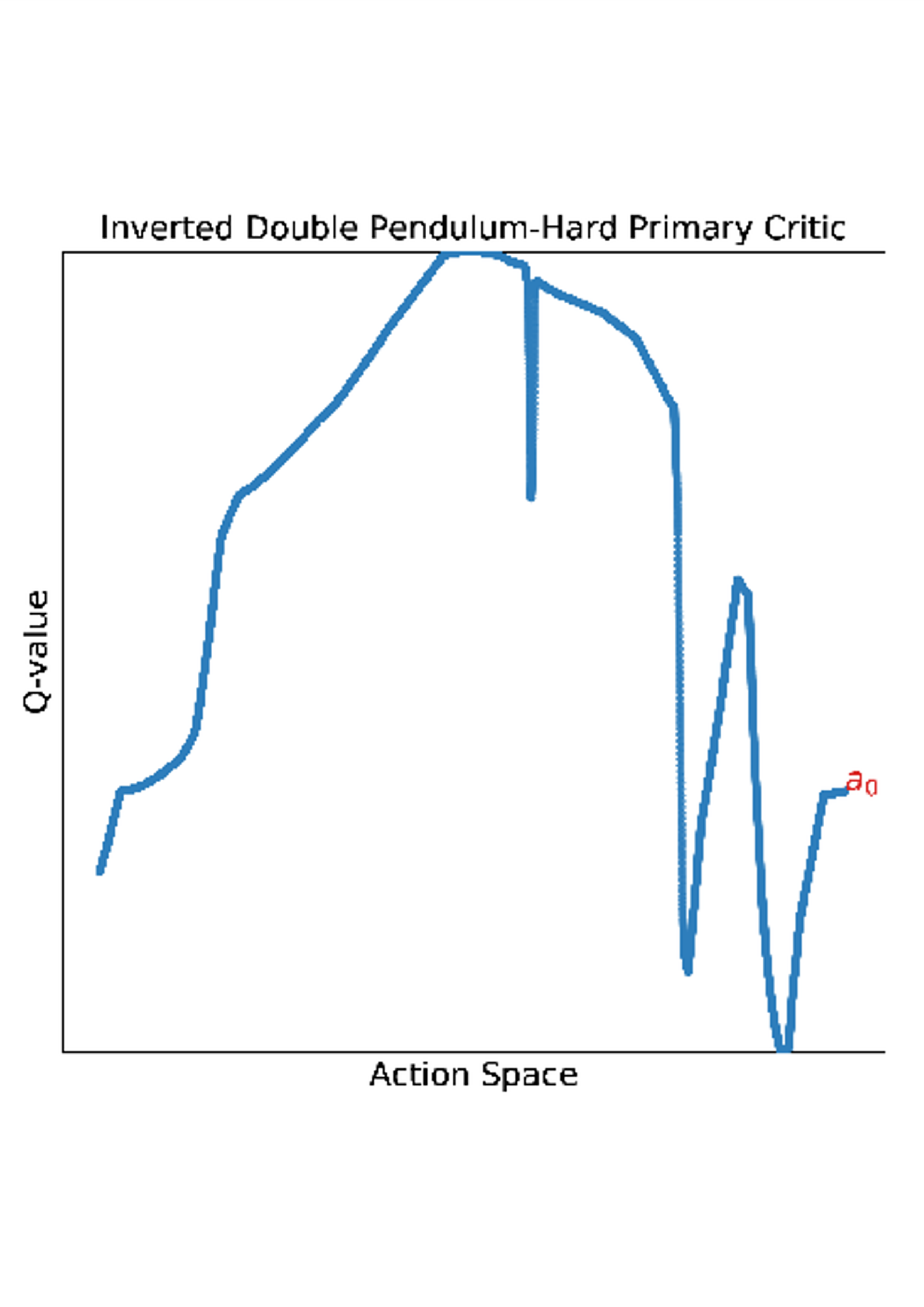}
    \end{subfigure}
    \begin{subfigure}[t]{0.24\textwidth}
        \includegraphics[width=\textwidth]{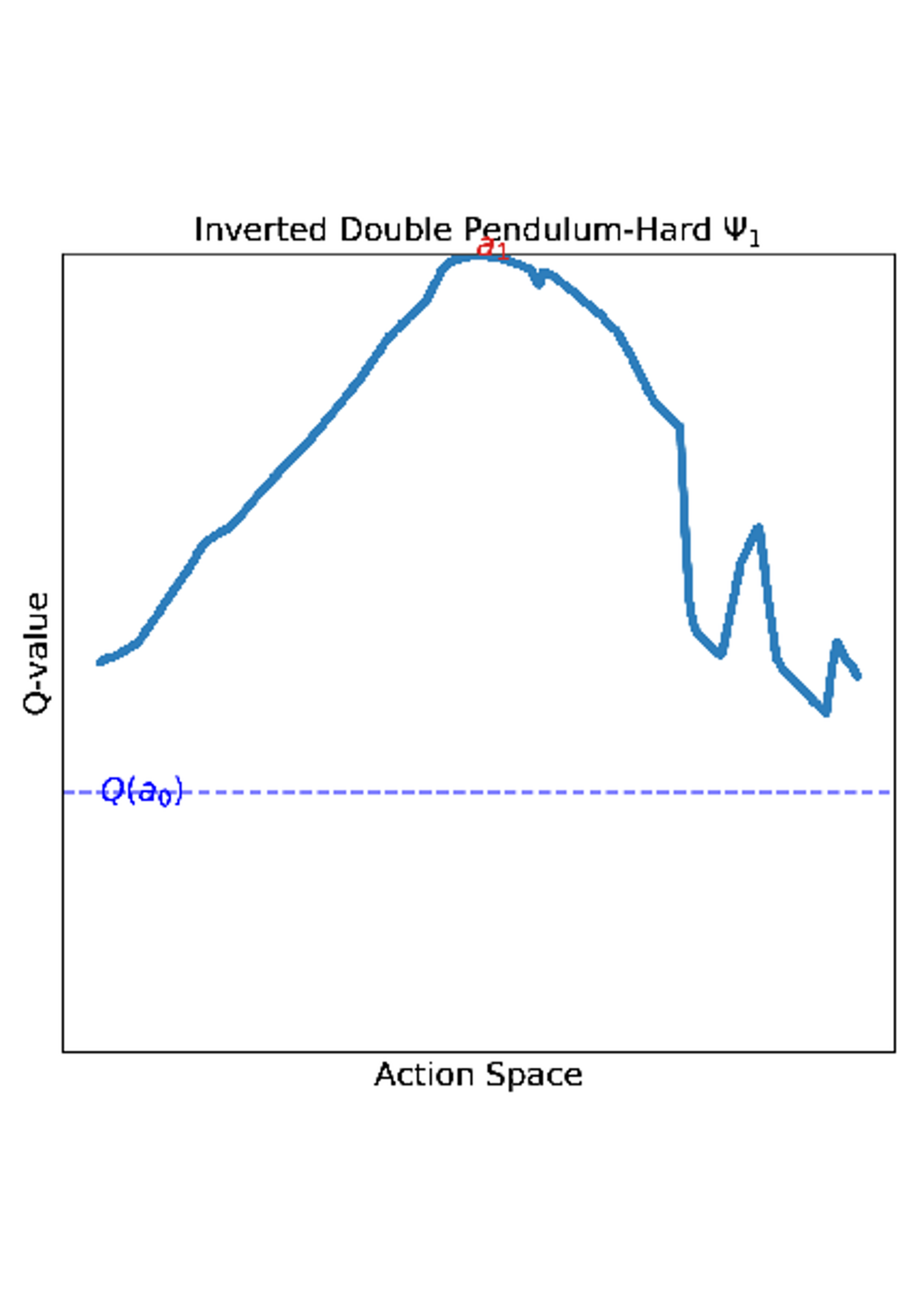}
    \end{subfigure}
    \begin{subfigure}[t]{0.24\textwidth}
        \includegraphics[width=\textwidth]{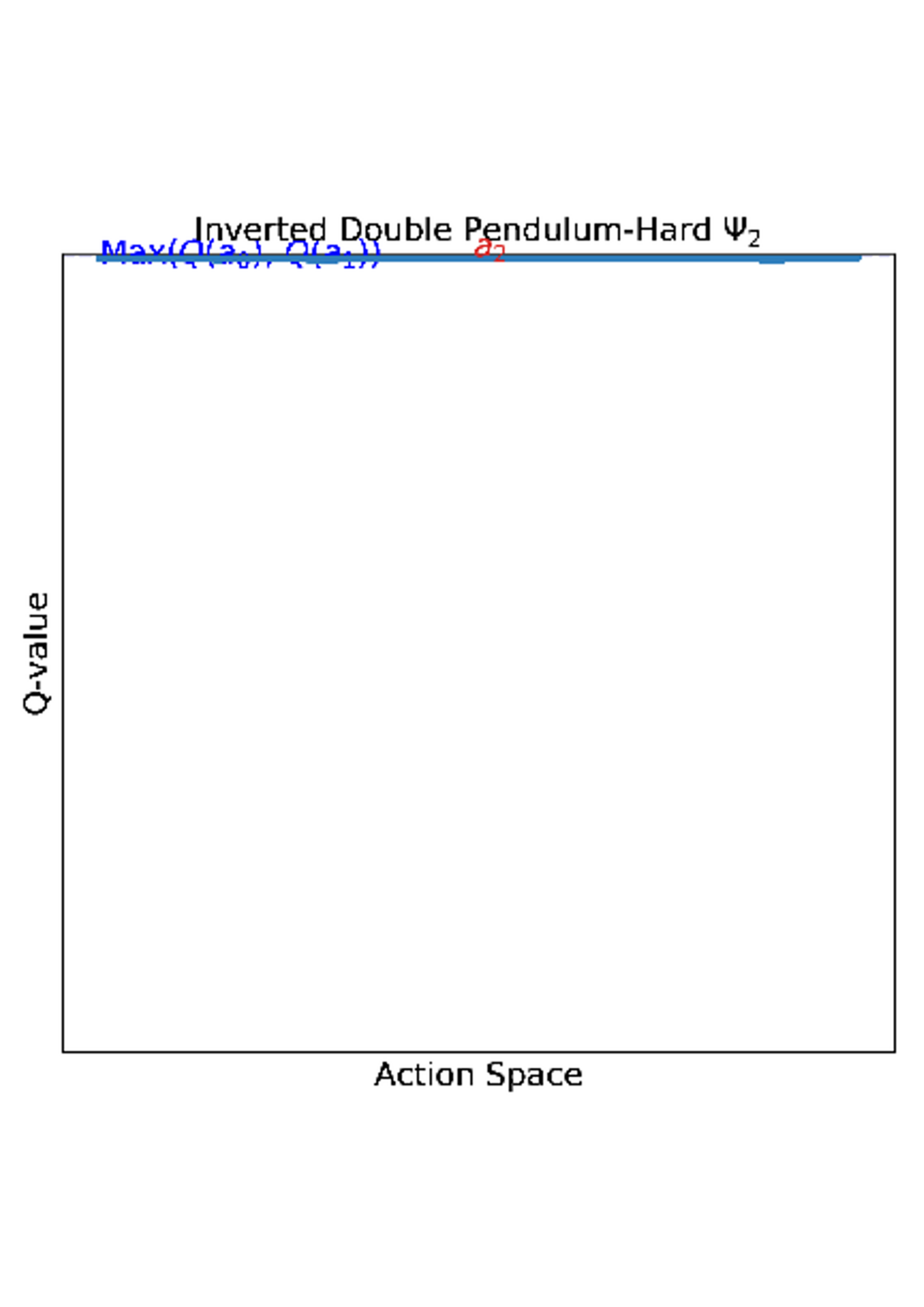}
    \end{subfigure}
    \begin{subfigure}[t]{0.24\textwidth}
        \includegraphics[width=\textwidth]{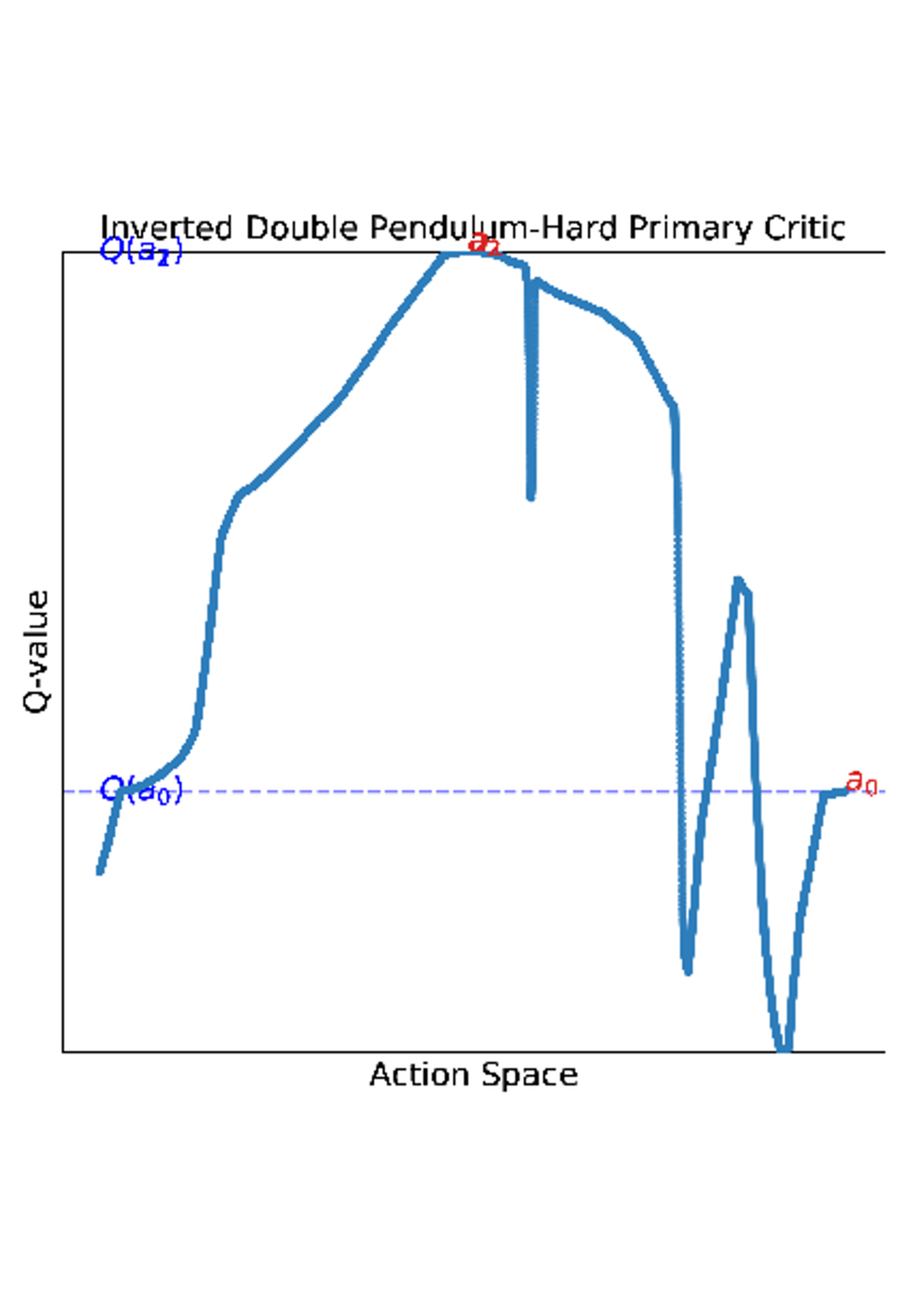}
    \end{subfigure}
\vspace{-5pt}
    \caption{
    Successive Q landscape and primary Q landscape across different Restricted Environments.
    }
    \label{fig:q_spaces_full_envs_hard}
    \vspace{-5pt}
\end{figure}

\subsection{High-Dimensional Action Space Environments: Hopper-v4}
\label{app:visualisation-q-landscape-2}

In Fig.\ref{fig:q-landscape-hopper} and Fig. \ref{fig:q-landscape-hopper-restricted}, we visualize Q-landscapes for a TD3 agent across different environments, starting with Hopper-v4. Here, actions from the 3D action space are projected onto a 2D plane using UMAP, with 10,000 actions sampled at equal intervals to ensure adequate coverage. These Q-values are plotted using trisurf, introducing some artificial ruggedness but providing more reliable visualizations than grid-surface plotting. Despite the inherent limitations of dimensionality reduction—where the loss of one dimension distorts distances and relative positions—the Q-landscape for Hopper-v4 reveals a large globally optimal region (in yellow), offering a clear gradient path that minimizes the risk of the gradient-based actor getting stuck in local optima.

In Hopper-Restricted, the Q-landscapes become more complex due to the restriction of actions within a hypersphere, with suboptimal peaks where gradient-based actors can potentially get trapped. Although dimensional reduction limits conclusive analysis, these landscapes appear to have more local optima compared to Hopper-v4. For higher-dimensional environments like Walker2D-v4 (6D) and Ant-v4 (8D), projecting to 2D leads to significant information loss, making it difficult to assess convexity. Despite this, Walker2D-v4 shows a large optimal region where consecutive actions produce similar outcomes, indicating that contact-based tasks like Walker2D and Hopper do not inherently induce numerous local optima. However, for more complex environments like Ant-v4 and Walker2D-Restricted, the visualizations provide limited insights due to the challenges of dimensionality reduction.

\vspace{-10pt}
\begin{figure}[t]
    \centering
    \includegraphics[width=0.78\textwidth]{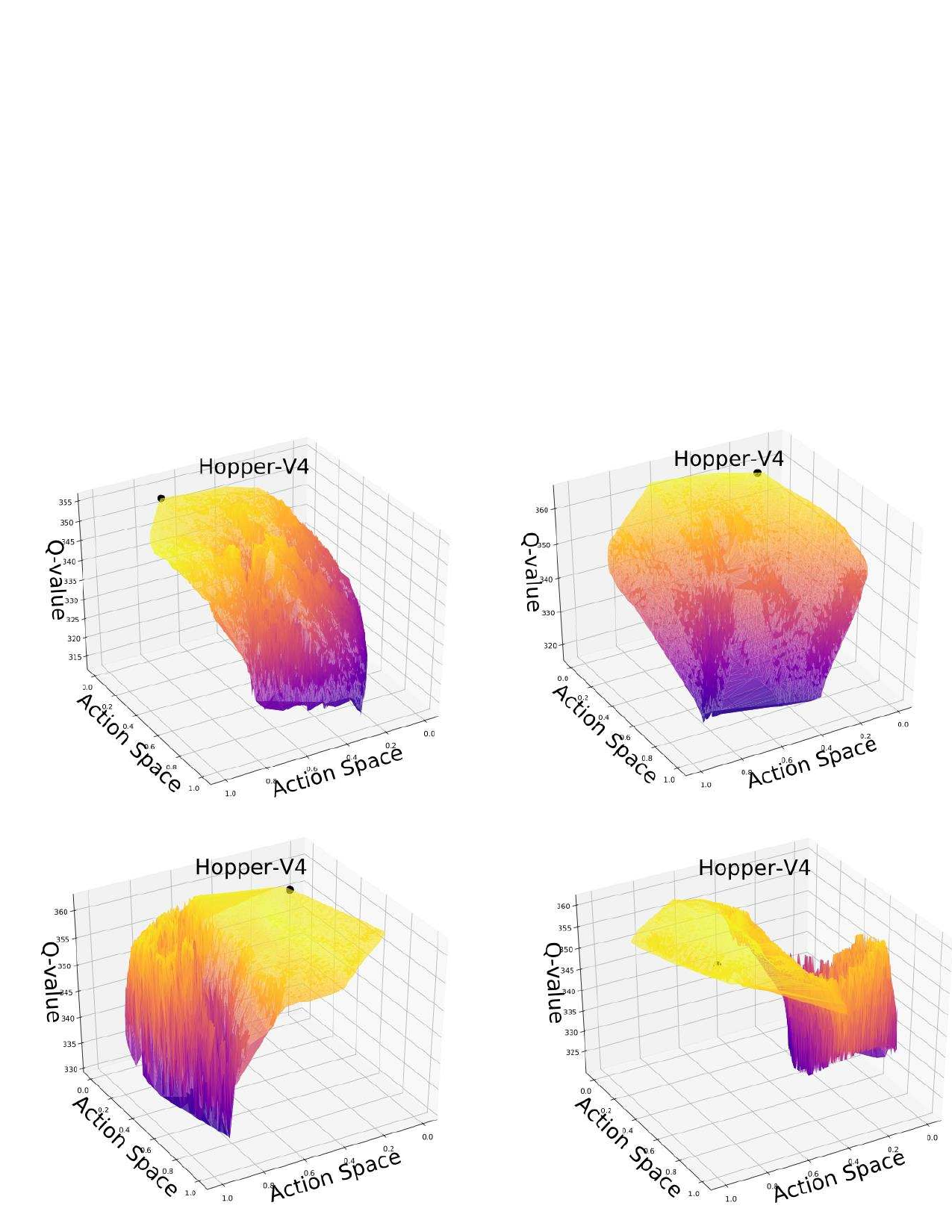}
\vspace{-10pt}
    \caption{Hopper-v4: Q landscape visualizations at different states show a path to optimum.}
    \label{fig:q-landscape-hopper}
\end{figure}

\vspace{10pt}
\begin{figure}[t]
    \centering
    \includegraphics[width=0.78\textwidth]{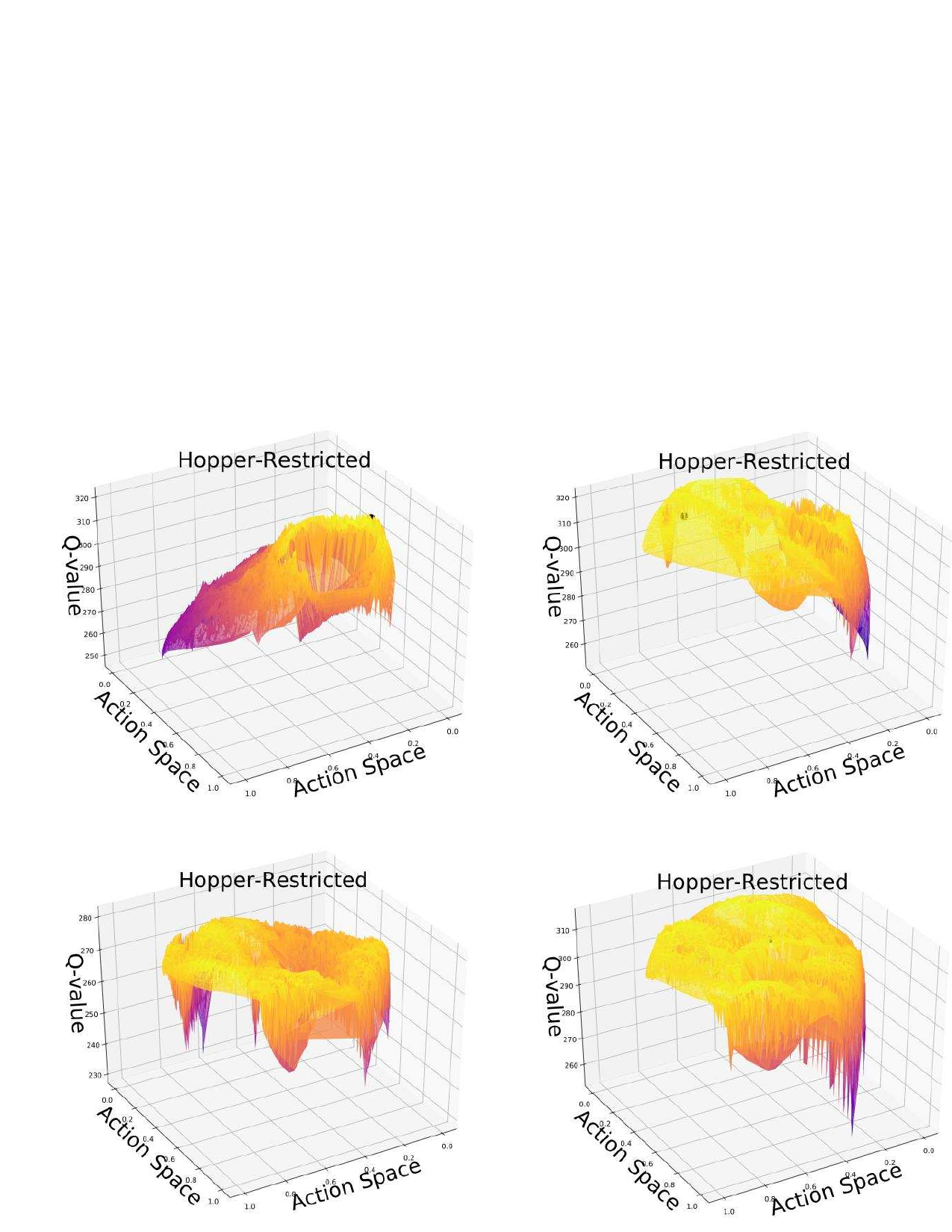}
\vspace{-10pt}
    \caption{Hopper-restricted: Q landscape visualizations at different states show several local optima.}
    \label{fig:q-landscape-hopper-restricted}
\end{figure}

\newpage

\Skip{
\section{Rebuttal statements}

\subsection{Step-by-Step: Why Approximation Step is Effective?}

\textit{Since if the thresholded Q function was better approximated, it would be very flat and would not help escaping local optima.}

We agree that your concerns about why this part of our method works are warranted. We give an in-depth explanation below, and will add it to the paper. We emphasize that:

\textbf{The surrogates are an imperfect approximation of the Q-function, not by accident, but because of the specific design of the loss function in Eq. 9.}

At every step, $\hat{\Psi}$ is synced to the Q-function at only two actions: $\mu_M$ and $\nu(s; a < i)$.

\[
\mathbb{E}_{s \sim \rho^\pi} \underset{a \in \{\mu_M(s),\nu(s;a<i)\}}{\sum} \left| \hat{\Psi}_i(s,a;\alpha_i) - \max\{Q(s,a), \max_{j < i} Q(s,a_j)\} \right|
\]

Let us consider an example with one-surrogate, $\hat{\Psi}$ where $a_0 = \mu(s)$, and follow the two updates:

\begin{enumerate}
    \item $a = \mu_M$ is the action that was taken in the environment and where the Q-function has just been updated with TD error from Eq.1, i.e., $Q(s, \mu_M) = r(s, \mu_M) + \gamma Q'(s, \mu(s'))$. Thus, this new update to Q-function must be passed along into the surrogate.
    
    \item $a_1 = \nu(s, a_0)$ is the next action where $\hat{\Psi}$ will need to provide gradients to train $\nu$. There are 2 possibilities of $a_1$'s position in the $\hat{\Psi}$ landscape, based on $\max \{Q(s,a_1), Q(s,a_0)\}$.
\end{enumerate}

\begin{itemize}
    \item \textbf{Case A:} $Q(s,a_1) > Q(s,a_0)$ $\implies$ the training target for $\hat{\Psi}(s,a_1;a_0)$ is $Q(s,a_1)$.
    \item \textbf{Case B:} $Q(s,a_1) \leq Q(s,a_0)$ $\implies$ the training target for $\hat{\Psi}(s,a_1;a_0)$ is $Q(s,a_0)$.
\end{itemize}

Case A is higher than the anchor $a_0$ and easily provides gradients to $\nu$. However, Case B is the key point of question: \textit{"why wouldn’t the surface around $a_1$ be flat?"}

\begin{itemize}
    \item If this is the first time this action region is explored by $\nu$, the nearby area would not be zero, because it was never trained to be. Thus, after this update in Case B, a curvature will be introduced in the nearby region according to the other Q-values that $\hat{\Psi}$ was trained on. So, if there is a higher Q-valued action $a_{\text{near}}$ (it can only be higher, not lower than $a_0$) in the vicinity of $a_1$, then an interpolation between $a_{\text{near}}$ and $a_1$ would be created in the intermediate region by the neural network. Practically, this interpolation would happen over all the adjacent regions in the parameter space.
    
    \item If it is not the first time, then a curvature region like above already exists around $a_1$, say between a previous anchor $a_{\text{anchor}}$ and $a_{\text{near}}$. This would indeed make the region between $a_1$ and $a_{\text{anchor}}$ flatter. However, the curvature between $a_1$ and $a_{\text{near}}$ is even steeper, because the $\hat{\Psi}$ value at $a_1$ is pulled down towards anchor Q-value! Thus, gradient ascent would help $\nu$ approach in the direction of $a_{\text{near}}$, not $a_{\text{anchor}}$!
\end{itemize}

Basically, the output of $\nu$, i.e., $a_1$ would never find itself in regions which are already too flat, because gradient ascent would not bring the output of the neural network there. It can only point towards a "higher" region. And there are only higher regions around the flat region, not lower, because, by definition and by training, $\hat{\Psi}(s, a_1, a_{\text{anchor}})$ can never have any regions with values lower than $Q(s, a_{\text{anchor}})$.

In summary, this surrogate approximation idea works because $\nu$ is trained with gradient ascent over $\hat{\Psi}$ surface, and thus, follows the direction of increasing values. This means, it will not approach the regions of $\hat{\Psi}$ that have a curvature tending towards flatness.

\begin{itemize}
    \item Even if the $\nu$ finds itself in a large flattened region, only the current $\hat{\Psi}$, $\nu_i$ pair is affected, while the other would still be acting correctly. Thus, when together a better action is found via $\mu_M$, it would update $\hat{\Psi}$ at that value, which would again induce a curvature for $\nu_i$ to traverse and escape. Also, these updates are happening at various (state, action) pairs. So, the actions and values of other states are also affecting the agent to escape this locally flat region.
\end{itemize}

Note that this "automatic escape" does not necessarily occur in TD3, when the true Q-function has a local optima, because that acts as a stable equilibrium for the gradient ascent-based actor to reach back to while trying to escape.

\section{Incorporating rebuttal into method section}

\subsection{Approximate Surrogate Functions}
\label{sec:approximation}

The surrogate functions $\Psi_i$ defined in Eq.~\ref{eq:surrogate_definition} are piece-wise functions where multiple actions share a constant Q-value, $\tau$. This structure leads to regions with zero gradients, hindering the training of actors $\nu_i$. To mitigate this, we introduce smooth approximations $\hat{\Psi}_i$ of $\Psi_i$ using neural networks, facilitating effective gradient flow.

\textbf{Motivation for Approximation:}
While $\Psi_i$ effectively truncates the Q-function to eliminate suboptimal actions, its non-differentiable, piece-wise nature poses challenges for gradient-based optimization. Specifically, regions where $Q(s, a) < \tau$ result in zero gradients, preventing $\nu_i$ from improving in these areas.

\textbf{Design of the Approximation:}
To preserve the beneficial truncation while enabling gradient flow, we design $\hat{\Psi}_i$ to approximate $\Psi_i$ selectively:
\begin{align}
    \mathbb{E}_{s \sim \rho^{\mu_M}} \left[ \sum_{a \in \{\tilde{\mu}_M(s), \nu_i(s; a_{<i})\}} \left\| \hat{\Psi}_i(s, a; a_{<i}) - \Psi_i(s, a; a_{<i}) \right\|_2 \right],
    \label{eq:approximation_loss}
\end{align}
where:
\begin{itemize}
    \item $\tilde{\mu}_M(s)$ is the action selected by the maximizer actor with added exploration noise.
    \item $\nu_i(s; a_{<i})$ is the action proposed by the $i$-th successive actor.
\end{itemize}

This loss function ensures that $\hat{\Psi}_i$ closely matches $\Psi_i$ at critical action points—specifically, the current maximizer action and the proposed action from $\nu_i$. By focusing on these actions, $\hat{\Psi}_i$ remains aligned with the Q-function where gradients are most needed, while allowing flexibility elsewhere to introduce necessary curvature.

\textbf{Why the Approximation Works:}
Our approximation strategy leverages the universal approximation capability of neural networks \citep{hornik1989multilayer, cybenko1989approximation} to introduce smooth gradients where $\Psi_i$ is flat. Here's why this approach is effective:

\begin{enumerate}
    \item \textbf{Selective Training:} By training $\hat{\Psi}_i$ only at $\tilde{\mu}_M(s)$ and $\nu_i(s; a_{<i})$, we ensure that the surrogate remains faithful where it directly influences the actors. This selective focus prevents the surrogate from becoming overly flat across the entire action space.
    
    \item \textbf{Introducing Curvature:} In regions where $\Psi_i$ is flat (i.e., $Q(s, a) < \tau$), the neural network approximation $\hat{\Psi}_i$ can introduce curvature based on nearby high-Q actions. This curvature guides the actors towards regions with higher Q-values, effectively escaping local optima.
    
    \item \textbf{Dynamic Adaptation}: As the Q-function evolves during training, $\hat{\Psi}_i$ adapts by continually aligning with the updated Q-values at critical actions. This dynamic synchronization ensures that the surrogate remains effective in guiding the actors.
    
    \item \textbf{Avoiding Over-Smoothing}: Since $\hat{\Psi}_i$ is trained only at specific actions, it does not become excessively smooth across the entire action space, preserving essential distinctions needed for effective optimization.
\end{enumerate}

\textbf{Illustrative Example:}
Consider a single surrogate $\hat{\Psi}$ with an anchor action $a_0 = \mu(s)$ and a proposed action $a_1 = \nu(s, a_0)$. During training:

\begin{itemize}
    \item If $Q(s, a_1) > Q(s, a_0)$ (Case A), $\hat{\Psi}(s, a_1; a_0)$ aligns with $Q(s, a_1)$, providing strong gradients to improve $\nu$ towards better actions.
    
    \item If $Q(s, a_1) \leq Q(s, a_0)$ (Case B), $\hat{\Psi}(s, a_1; a_0)$ is set to $Q(s, a_0)$. However, due to the neural network's capacity to interpolate, regions around $a_1$ develop curvature influenced by nearby high-Q actions, enabling $\nu$ to receive meaningful gradients that guide it away from suboptimal regions.
\end{itemize}

This mechanism ensures that even when $a_1$ initially falls below the threshold, the approximation $\hat{\Psi}_i$ introduces gradients that facilitate movement towards higher Q-value regions, effectively navigating the non-convex landscape.

\textbf{Training Procedure:}
We train each $\hat{\Psi}_i$ using the loss in Eq.~\ref{eq:approximation_loss} alongside the actor and critic updates. This joint training ensures that the surrogate accurately reflects the current Q-function's critical regions, maintaining effective gradient flow for the actors.

In summary, the smooth approximation $\hat{\Psi}_i$ retains the truncation benefits of $\Psi_i$ while enabling the necessary gradient information to guide the actors out of local optima, thereby enhancing the optimization process in non-convex Q-landscapes.

---

Interestingly, the actor has to go through the action bottleneck to maximize the Q-value. It is very easy for the actor to get stuck in the local optimum when there is even a slight gradient signal in the direction of the actor.

We don't want to use evolutionary approaches for global maximization because it is computationally intensive. We want a method that is quick and scalable, yet doesn't suffer as much from getting stuck in local optima. So, we want to improve local search (via gradient ascent) using Tabu search~\citep{}.

Random Restarts are a valid baseline.

In our didactic deepset example, we train each actor till convergence.

\begin{algorithm}[H]
\caption{SAVO-TD3}
\label{alg:savo}
\begin{algorithmic}
\STATE Initialize $Q, Q_2, \mu, \;\; \color{blue}{\nu_1, \dots, \nu_k}$
\STATE Initialize target networks $Q' \leftarrow Q$, $Q_2' \leftarrow Q_{twin}$
\STATE Initialize replace buffer $\mathcal{B}$.
\FOR{timestep $t = 1$ to $T$}
    \STATE{\textbf{Select Action:}}
    \STATE Evaluate $a_0 = \mu(s), \color{blue}{a_i = \nu_i(s; a_{<i})}$
    \STATE {\color{blue}{Evaluate $\mu_M(s) = \argmax_{a \in \{{a}_0, \dots, {a}_k\}} Q^{\mu_M} (s, a)$}}
    \STATE Exploration action $a = \tilde{\mu}_M(s) = \mu_M(s) + \epsilon$
    \STATE Observe reward $r$ and new state $s'$
    \STATE Store $(s, a, r, s')$ in $\mathcal{B}$
    \STATE{\textbf{Update:}}
    \STATE Sample N transitions  $(s, a, r, s')$ from $\mathcal{B}$
    \STATE Evaluate individual target actions $a'_0 = \mu'(s'), \color{blue}{a'_i = \nu'_i(s'; a'_{<i})}$
    \STATE Compute target action {\color{blue}{$a' = \mu'_M(s') = \argmax_{a' \in \{{a}'_0, \dots, {a}'_k\}} Q' (s', a')$}}
    \STATE Update $Q, Q_2 \leftarrow r + \gamma \min \{Q'(s', a'), Q_2'(s', a') \}$
    \STATE Update actor $\mu$ with Eq.~\ref{eq:dpg_actor}
    \STATE {\color{blue}{Update actor $\nu_i$}} with Eq.~\ref{eq:successive_actor_training} $\forall i = 1, \dots k$
\ENDFOR
\end{algorithmic}
\end{algorithm}

TODO: Ingest mu into $\nu_i$'s

Final question:

does input to the actor network really matter?

\textbf{Baselines}
\begin{itemize}
    \item Ensemble of actors each optimizing the central Q-function
    \item Tabu Search (our method)
    \item Random restart of actor following primacy bias paper.
\end{itemize}
}

\end{document}